\documentclass[hidelinks,onefignum,onetabnum]{siamart251216}

\usepackage{lipsum}
\usepackage{amsfonts,amssymb}
\usepackage{graphicx}
\usepackage{epstopdf}
\usepackage{mathtools}
\usepackage{graphicx}
\usepackage{subcaption}
\usepackage{comment}

\usepackage[utf8]{inputenc} 
\usepackage[T1]{fontenc}
\usepackage{lmodern}
\usepackage[expansion=false]{microtype}  
\usepackage{hyperref}       
\usepackage{url}            
\usepackage{booktabs}       
\usepackage{nicefrac}       
\usepackage[textsize=tiny,textwidth=2.0cm]{todonotes}
\usepackage{blindtext}
\usepackage{mathtools}
\usepackage{amsmath}
\usepackage{amsfonts}
\usepackage{mathrsfs,bbm}
\usepackage{algorithm}
\usepackage{algorithmic}
\usepackage{comment}
\usepackage{xcolor}
\usepackage[shortlabels]{enumitem}

\ifpdf
  \DeclareGraphicsExtensions{.eps,.pdf,.png,.jpg}
\else
  \DeclareGraphicsExtensions{.eps}
\fi

\newsiamremark{remark}{Remark}
\newsiamremark{hypothesis}{Hypothesis}
\crefname{hypothesis}{Hypothesis}{Hypotheses}
\newsiamthm{claim}{Claim}
\newsiamthm{assumption}{Assumption}
\newsiamremark{fact}{Fact}
\crefname{fact}{Fact}{Facts}

\newcommand{\eps}{\epsilon}
\newcommand{\up}{^\prime}
\newcommand{\upp}{^{\prime\prime}}
\newcommand{\lst}{_\star}
\newcommand{\ust}{^\star}
\newcommand{\tkp}{\hat{\kappa}}
\newcommand{\tLm}{\widehat{\Lambda}}
\newcommand{\tGm}{\widehat{\Gamma}}
\newcommand{\tnu}{\hat{\nu}}
\newcommand{\tmix}{t_{\text{mix}}}

\newcommand{\bA}{\mathbf{A}}
\newcommand{\bb}{\mathbf{b}}

\newcommand{\bE}{\mathbb{E}}
\newcommand{\bN}{\mathbb{N}}
\newcommand{\bP}{\mathbb{P}}

\newcommand{\bR}{\mathbb{R}}

\newcommand{\cB}{\mathcal{B}}

\newcommand{\cF}{\mathcal{F}}

\newcommand{\cL}{\mathcal{L}}

\newcommand{\cO}{\mathcal{O}}

\newcommand{\cT}{\mathcal{T}}

\newcommand{\cZ}{\mathcal{Z}}

\newcommand{\cfr}{\mathfrak{c}}
\newcommand{\mfr}{\mathfrak{m}}

\newcommand{\flbr}[1]{\left\{#1\right\}}
\newcommand{\sqbr}[1]{\left[#1\right]}
\newcommand{\br}[1]{\left(#1\right)}
\newcommand{\angl}[1]{\left\langle #1 \right\rangle}
\newcommand{\abs}[1]{\left|#1\right|}
\newcommand{\norm}[1]{\left\Vert #1 \right\Vert}

\newcommand{\floor}[1]{\left\lfloor#1\right\rfloor}

\newcommand{\ind}[1]{\mathbbm{1}\br{#1}}
\newcommand{\uc}[1]{^{\br{#1}}}
\newcommand{\tr}{\textit{tr}}

\def\rmd{\mathrm{d}}

\headers{SGD under the P\L~Condition with Markovian Noise}{A. Kar, S. Chandak, R. Singh, E. Moulines, S. Bhatnagar, and N. Bambos}

\title{High-Probability Bounds for SGD under the Polyak-\L ojasiewicz Condition with Markovian Noise\thanks{Corresponding Authors: \email{avikkar@iisc.ac.in, chandaks@stanford.edu}}}

\author{Avik Kar\thanks{Department of Computer Science and Automation, Indian Institute of Science, Bengaluru, India.}
\and Siddharth Chandak\thanks{Department of Electrical Engineering, Stanford University, Stanford, CA, USA.}
\and Rahul Singh
\and Eric Moulines\thanks{CMAP, CNRS, \'Ecole polytechnique, Institut Polytechnique \'de Paris, Palaiseau, France.}
\and Shalabh Bhatnagar\footnotemark[2]
\and Nicholas Bambos \footnotemark[3]
}

\usepackage{amsopn}

\ifpdf
\hypersetup{
  pdftitle={High-Probability Bounds for SGD under the P\L~Condition with Markovian Noise},
  pdfauthor={Kar, S. Chandak, R. Singh, E. Moulines, S. Bhatnagar, and N. Bambos}
}
\fi

\begin{document}

\maketitle

\begin{abstract}
    We present the first uniform-in-time high-probability bound for SGD under the PL condition, where the gradient noise contains both Markovian and martingale difference components. This significantly broadens the scope of finite-time guarantees, as the P\L~condition arises in many machine learning and deep learning models while Markovian noise naturally arises in decentralized optimization and online system identification problems. We further allow the magnitude of noise to grow with the function value, enabling the analysis of many practical sampling strategies. In addition to the high-probability guarantee, we establish a matching $1/k$ decay rate for the expected suboptimality. Our proof technique relies on the Poisson equation to handle the Markovian noise and a probabilistic induction argument to address the lack of almost-sure bounds on the objective. Finally, we demonstrate the applicability of our framework by analyzing three practical optimization problems: token-based decentralized linear regression, supervised learning with subsampling for privacy amplification, and online system identification.
\end{abstract}

\begin{keywords}
    Stochastic gradient descent, Markovian noise, concentration, finite-time bound, sample complexity, P\L~condition
\end{keywords}

\begin{MSCcodes}
    90C26, 62L20
\end{MSCcodes}

\section{Introduction}\label{sec:intro}
We consider the stochastic gradient descent (SGD) algorithm for solving the unconstrained optimization problem $\arg\min_{x \in \bR^d} f(x)$. Given a stepsize (learning rate) sequence $\{\alpha_k\}_{k\geq 0}$, the SGD iteration is as follows.
\begin{align*}
 x_{k+1} = x_k - \alpha_k G_k, \quad k = 0, 1, \ldots,
\end{align*}
where $G_k$ is a noisy estimate of $\nabla f(x_k)$. In this work, we analyze the finite-time behaviour of the iterates, in particular the suboptimality gap of the iterates, $f(x_k) - f\ust$, where $f\ust \coloneqq \inf_{x \in \bR^d} {f(x)}$. SGD is widely used to solve optimization problems in machine learning due to its simplicity and scalability in high-dimensional settings. Over the past two decades, it has played a central role in numerous successful machine learning applications. Much of the practical success of modern machine learning can be attributed to SGD and its variants. Although most theoretical analyses of SGD are conducted under the assumption of convexity of the objective, most optimization tasks in machine learning, including neural network training, are non-convex.

Many relevant non-convex optimization problems in machine learning satisfy the P\L~condition \cite{polyak1963gradient, karimi2016linear}. The objective function $f$ is said to satisfy the  P\L~condition with constant $\mu > 0$ (or ``$f$ is $\mu$-P\L'') if
\begin{align}
    \norm{\nabla f(x)}^2 \geq 2\mu \br{f(x) - f\ust}, ~\forall x \in \bR^d. \label{eq:pl_cond}
\end{align}
The P\L~condition guarantees global linear convergence of gradient descent~\cite{karimi2016linear}. Several important problem classes satisfy the P\L~condition. For example, the class of ``strongly convex composed with linear'' functions, where the objective takes the form $f(x) = g(Ax)$ for a strongly convex function $g$ and matrix $A$ satisfies the P\L~condition~\cite{karimi2016linear}. An important problem instance in this category is the least-squares problem (i.e., $f(x) = \|Ax-b\|^2$). The P\L~condition also holds for logistic regression when the gradient descent iterates remain in a compact set, even though the loss function is strictly convex rather than strongly convex \cite{karimi2016linear}. It also arises in control and reinforcement learning problems; for instance, the objective function associated with the Linear Quadratic Regulator (LQR) problem satisfies the P\L~condition~\cite{fazel2018global}. The P\L~condition has also been established in several modern machine learning models. \cite{liu2022loss} showed that wide neural networks satisfy P\L~condition on a set of radius $R$, and $R$ is proportional to the width of the neural network. Similar results hold for wide convolutional neural networks (CNNs) and residual neural networks (ResNet)~\cite{liu2022loss}.

A significant limitation of much existing finite-time analysis of SGD is the assumption that the noise process forms a martingale difference sequence. In many applications, however, samples are generated along a Markov process, leading to temporally correlated noise. Such settings arise naturally in decentralized optimization, privacy-preserving learning, reinforcement learning, and stochastic control.

For example, in token-based algorithms for decentralized optimization~\cite{even23a}, a token carrying the current iterate performs a random walk over a network of agents, and the sequence of nodes visited forms a Markov chain. In differential privacy, certain subsampling mechanisms used for privacy amplification generate minibatches whose evolution can be modeled as a Markov chain~\cite{dong2026privacy, choquette2023amplified, choquette2025near}. Markovian data also appears in online system identification, where observations are generated by a linear dynamical system~\cite{kowshik2021streaming}. Additionally, Markov chain sampling methods are commonly used when direct sampling from a complex target distribution is computationally expensive, and the resulting gradient estimates inherit the temporal dependence of the chain~\cite{sun2018onmarkov}.

\subsection{Our Contributions}
To address these limitations, we obtain both high-probability and expected bounds on the suboptimality gap of SGD iterates when the objective satisfies the P\L~condition and the gradient noise contains both Markovian and martingale difference components. Our main contributions are as follows:
\begin{itemize}
    \item We establish the first uniform-in-time high-probability bound on the suboptimality gap of SGD iterates under the P\L~condition with Markovian gradient noise. In particular, we show that with probability at least $1-\delta$, the suboptimality gap decays as $\cO\big(\tmix^2 \log\left(k/\delta\right)/k\big)$ for all $k \ge 0$, where $t_{mix}$ denotes the mixing time of the underlying Markov chain.
    \item By working under the general ABC condition \cite{khaled2023better}, we allow the gradient noise to grow with both the value of the loss function and the true gradient, enabling the analysis of practical subsampling strategies.
    \item Under this general setup, no almost-sure bound exists for the loss function values, which prevents the direct application of standard concentration inequalities such as the Azuma-Hoeffding bound~\cite{wainwright2019high}. To address this challenge and obtain uniform-in-time guarantees, we introduce a novel technique based on defining a \textit{good event} $E_k$ and inductively establishing high-probability bounds on the loss function. This approach is simple and intuitive, and may be useful in other optimization and stochastic approximation settings.
    
    \item We also show that the expected suboptimality gap is $\cO\!\left(t_{mix}/k\right)$ for all $k \ge 0$. 
    
    \item In \Cref{sec:applications}, we study three optimization problems that fit within our framework: token-based decentralized linear regression, supervised learning with subsampling for privacy amplification, and online system identification.
\end{itemize}

\subsection{Related Works}
SGD was first introduced in \cite{kiefer1952stochastic} as a stochastic approximation~\cite{robbins1951stochastic} scheme. Since then, it has become a central tool in optimization and machine learning. Due to its wide applicability, numerous works have studied its convergence guarantees. The almost sure convergence of SGD iterates to the set of stationary points was first established in \cite{Ljung}. Over the past two decades, significant effort has been devoted to deriving finite-time guarantees for SGD; see, for example, \cite{eric2011non,qian19sgd,wang2021convergence,wang2023onthe}.

The P\L~condition first appeared in \cite{polyak1963gradient}, while the term \emph{Polyak--\L ojasiewicz (P\L) condition} was popularized by \cite{karimi2016linear}, which also presented a systematic study of optimization algorithms under this condition. Most works studying SGD under the P\L~condition derive bounds on the expected suboptimality of the iterates. While \cite{karimi2016linear} assumes bounded variance of the gradient noise, \cite{Li2021asecond,khaled2023better} allow the noise variance to grow with the function value and derive an $\cO(1/k)$ convergence rate. Beyond expected bounds, \cite{madden2024high} obtains a high-probability bound when the noise sequence is sub-Weibull with bounded variance, and \cite{karandikar2024convergence} derives an asymptotic almost sure convergence rate.

In all the works discussed above, the gradient estimate is assumed to be unbiased, i.e., $\bE\sqbr{G_k \mid x_\ell, \ell \le k} = \nabla f(x_k)$. To the best of our knowledge, existing works studying SGD under Markov noise derive bounds only on the expected suboptimality; see \cite{sun2018onmarkov,doan2020finite,doan2022finite,even23a}. Among these, SGD under the P\L~condition is considered in \cite{doan2022finite, even23a}. 

In contrast to the above works, we derive both high-probability and in-expectation bounds on the suboptimality of SGD iterates when the objective satisfies the $\mu$-P\L~condition and the noise contains both Markovian and martingale difference components.

\subsection{Outline and Notation}
The paper is organized as follows. Section~\ref{sec:not&assum} introduces the assumptions required for our results, which are presented in Section~\ref{sec:results}. Section~\ref{sec:proof_outline} provides a proof sketch of the main result. Section~\ref{sec:applications} presents three practical applications of our framework. Finally, Section~\ref{sec:conclusion} concludes the paper and discusses future directions. Detailed proofs are deferred to the appendices. 

Throughout this work, $\|\cdot\|$ denote Euclidean norm and $\langle x_1, x_2\rangle$ denotes the inner product given by $x_1^\top x_2$. When applied on matrices, $\|\cdot\|$ denotes the corresponding operator norm. Other norms used in the paper include the $\ell_\infty$ norm ($\norm{\cdot}_{\infty}$) and the Frobenius norm for matrices ($\|\cdot\|_F$). We use the standard $\cO$ notation: $f(k)=\cO(g(k))$ if there exists a constant $C>0$ such that $|f(k)|\le Cg(k)$.
\section{Problem Formulation}\label{sec:not&assum}
In this section, we set up the notation and state the assumptions for the problem setting. Consider the iteration
\begin{align}\label{iter:sgd}
    x_{k+1} = x_k - \alpha_k G_k,
\end{align}
for $k = 0,1,\ldots$. Here $x_k \in \mathbb{R}^d$ denotes the iterate, and $G_k$ a noisy estimate of the gradient $\nabla f(x_k)$ at instant $k$, where the objective function $f(\cdot)$ satisfies the P\L~condition \eqref{eq:pl_cond}. The noise in the gradient estimate $G_k$ has two components: a Markovian component and a martingale difference component. We now state additional assumptions on the objective function and formalize the class of admissible noise sequences. 

Our first assumption is that the objective function is $L$-smooth, meaning its gradient is $L$-Lipschitz. This is a standard assumption in the optimization literature~\cite{eric2011non,rakhlin2011making} and is essential for establishing convergence guarantees on SGD.

\begin{assumption}\label{assum:smoothness}
    The function $f$ is $L$-smooth, i.e., there exists a constant $L>0$ such that
    \begin{align}
        \norm{\nabla f(x) - \nabla f(y)} \leq L \norm{x - y},~\forall x,y \in \mathbb{R}^d. \label{eq:smoothness}
    \end{align}
\end{assumption}
A result of this assumption is that  $\norm{\nabla f(x)}^2 \leq 2L \br{f(x) - f\ust}$ for all $x \in \bR^d$ (see Lemma~\ref{lem:ub_grad}). This is often referred to as the descent lemma for SGD iterations with constant stepsizes~\cite{nesterov2013introductory}. Hence, if $f$ is both $\mu$-P\L~\eqref{eq:pl_cond} and $L$-smooth~\eqref{eq:smoothness}, then $\mu \leq L$. 

The noisy gradient $G_k$ is defined as
\begin{align}
    G_k \coloneqq g(x_k, Z_k) + M_{k+1},~ \forall k \in \{0,1,\ldots\},
\end{align}
where $g(x_k, Z_k)$ depends on the current iterate $x_k$ and the state of a Markov chain $\{Z_k\}$. $\{M_{k}\}$ forms a martingale difference sequence and each $M_{k+1}$ is allowed to depend on the current iterate $x_k$. Our next assumption formalizes this.
\begin{assumption}\label{assum:martingale}
    Define the filtration $\{\cF_k\}$ where $\cF_k \coloneqq \sigma\br{x_0, M_{\ell}, Z_\ell, \ell\leq k}$, $k \in  \{0,1,\ldots\}$. Then $\{M_{k}\}$ is a martingale difference sequence with respect to $\{\cF_k\}$, i.e., $\bE[|M_k|]<\infty$ and $\bE[M_{k+1} \mid \cF_k]=0$ for all $k \in  \{0,1,\ldots\}$.
\end{assumption}
We next assume the well-known \textit{ABC} condition on noisy gradient samples~\cite{khaled2023better}. 
\begin{assumption}\label{assum:noisy_grad_norm}
    There exist constants $A,B,C \geq 0$ such that for all $k$,
    \begin{align}
        \norm{G_k}^2 \leq A \norm{\nabla f(x_k)}^2 + B \br{f(x_k) - f\ust} + C, \quad a.s.\label{bdd:noisy_grad_norm}
    \end{align}
\end{assumption}
To derive bounds on the expected suboptimality, it is sufficient to assume a bound only on the conditional second moment of the noisy gradient; see, for example, \cite{qian19sgd, khaled2023better, Li2021asecond}. This motivates the relaxed assumption used in our result on expected suboptimality (Assumption \ref{assum:expt_noisy_grad_norm}). However, such a conditional expectation bound is generally insufficient for establishing high-probability guarantees. A standard assumption for high-probability analysis is the almost-sure bound stated above. For instance, \cite{liu22onalmost} assumes a bound of this form to derive asymptotic convergence rates. We emphasize that the function value or its gradient may be unbounded, and consequently, the noise may also be unbounded. The above assumption can be relaxed by instead assuming that $G_k$ is sub-Gaussian, with variance controlled by the same \textit{ABC} structure. Our analysis extends to this relaxed setting without any significant modification of the proof technique. Since our primary focus is on handling Markovian noise, we omit these technical extensions. We formalize the Markov noise in the next assumption.
\begin{assumption}\label{assum:Markov}
    The process $\{Z_k\}$ is a time-homogeneous Markov chain on a compact measurable space $(\cZ,\cB_\cZ)$ with initial distribution $\pi_0$ and transition kernel $p$, i.e., $\bP(Z_0 \in B) = \pi_0(B)$, and for every $k \in \{0,1,\ldots\}$,
    \begin{align*}
        \bP(Z_{k+1} \in B \mid Z_\ell, \ell \leq k) = \bP(Z_{k+1} \in B \mid Z_k) = p(B \mid Z_k),~\forall B \in \cB_\cZ.
    \end{align*}
    The Markov chain has a unique invariant distribution $\pi$. Moreover, it is geometrically ergodic with mixing time $\tmix \in \{1,2,\ldots\}$, i.e., for every $k \geq 0$
    \begin{align}\label{def:tmix}
        \sup_{z \in \cZ}{\|p\uc{k}(\cdot \mid z) - \pi\|_{TV}} \leq 2^{-\floor{k/\tmix}},
    \end{align}
    where $p\uc{k}$ denotes the $k$-step transition kernel induced by $p$. Finally, for every $x\in\bR^d$,
    \begin{align*}
        \nabla f(x) = \int{g(x,z)~ \pi(dz)}.
    \end{align*}
\end{assumption}
Throughout the paper, we use the following definition of total variation norm~\cite{folland2013real}. Let $\xi$ be a signed measure on $(\cZ,\cB_\cZ)$ with Jordan decomposition $\xi=\xi_+-\xi_-$. Then, $\norm{\xi}_{TV} := \abs{\xi}(\cZ)$. The above assumption is stated in a general form to accommodate both finite-state and continuous-state Markov chains. In the finite-state case, irreducibility and aperiodicity imply geometric mixing. In the continuous-state case, $\mathcal{Z}$ is a compact subset of $\mathbb{R}^m$, where geometric mixing can hold under standard regularity conditions. By directly assuming uniform geometric ergodicity in total variation, we avoid imposing separate structural assumptions for discrete and continuous models. The last part of the assumption states that the true gradient is the stationary expectation of the gradient samples generated by the Markov chain. 

The following assumption imposes Lipschitz and boundedness conditions on the gradient samples corrupted by Markovian noise.
\begin{assumption}\label{assum:lipg}
    The map $g(\cdot,z)$ is $L_g$-Lipschitz for every $z \in \cZ$, i.e.,
    \begin{align}
        \norm{g(x,z) - g(y,z)} \leq L_g \norm{x - y},~\forall x,y \in \mathbb{R}^d. \label{eq:lipg}
    \end{align}
    Additionally, 
    \begin{align}
        \norm{g(x,z)}^2 \leq A \norm{\nabla f(x)}^2 + B (f(x) - f\ust) + C, ~\forall x \in \bR^d, z \in \cZ. \label{eq:gbdd}
    \end{align}
\end{assumption}
We use the same $A, B,$ and $C$ here as in Assumption \ref{assum:noisy_grad_norm} for simplicity. 

\section{Results}
\label{sec:results}
In this section, we present the key high-probability and expectation bounds on the suboptimality of the SGD iterates. For convenience, we denote the suboptimality by $\Delta_k$, i.e., $\Delta_k=f(x_k)-f\ust$. We consider the stepsize
$$\alpha_k=\frac{a}{k+K_0},$$
where $a \geq 2/\mu$ and $K_0> 0$. This choice yields the optimal convergence rate of $\mathcal{O}(1/k)$ and is standard in the analysis of stochastic approximation and SGD. Our analysis can also be extended to stepsizes of the form $\Theta(1/k^b)$ for $b\in(0.5,1]$, which yields a convergence rate of $\cO(1/k^b)$. We now present our main result: a high-probability bound on the suboptimality of iterates generated by the SGD iteration \eqref{iter:sgd}.
\begin{theorem}\label{thm:concentration_markov}
    Let the objective function $f(\cdot)$ be $\mu$-P\L~\eqref{eq:pl_cond} and $L$-smooth (Assumption \ref{assum:smoothness}). Suppose the noisy gradient samples satisfy Assumptions \ref{assum:martingale}--\ref{assum:lipg}. Then for every $\delta\in(0,1)$, there exists  $\cfr_1=\Omega\left(\log\left(\frac{1}{\delta}\right)\right)$ such that for $K_0\geq \cfr_1$, with probability at least $1 - \delta$,
    \begin{align*}
        \Delta_k &\leq \cO\br{\frac{\tmix^2 \log{\br{\frac{k}{\delta}}}}{\mu^2 k}}\quad \forall k \in  \{1,2,\ldots\}.
    \end{align*}
\end{theorem}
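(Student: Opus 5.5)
We plan to start from the smoothness descent inequality. By Assumption~\ref{assum:smoothness}, $f(x_{k+1})\le f(x_k)-\alpha_k\langle\nabla f(x_k),G_k\rangle+\frac{L\alpha_k^2}{2}\|G_k\|^2$. Write $G_k=\nabla f(x_k)+(g(x_k,Z_k)-\nabla f(x_k))+M_{k+1}$. Assumption~\ref{assum:noisy_grad_norm}, Lemma~\ref{lem:ub_grad} and the P\L{} inequality \eqref{eq:pl_cond} then give
\begin{align*}
\Delta_{k+1}\le (1-\mu\alpha_k+c_1\alpha_k^2)\Delta_k-\alpha_k\langle\nabla f(x_k),g(x_k,Z_k)-\nabla f(x_k)\rangle-\alpha_k\langle\nabla f(x_k),M_{k+1}\rangle+c_2\alpha_k^2 .
\end{align*}
Since $a\ge 2/\mu$ and $K_0$ is large, the contraction factor is at most $1-\mu\alpha_k/2$, and products of these factors behave like $((j+K_0)/(k+K_0))^{\mu a/2}$. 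Unrolling the recursion writes $\Delta_k$ as a deterministic $\cO(a/(k+K_0))$ term plus two weighted noise sums: a martingale sum $\sum_j w_{j,k}\langle\nabla f(x_j),M_{j+1}\rangle$ and a Markov sum $\sum_j w_{j,k}h(x_j,Z_j)$, where $h(x,z)=\langle\nabla f(x),g(x,z)-\nabla f(x)\rangle$.

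For the Markov term we use the Poisson equation. Assumption~\ref{assum:Markov} gives uniform geometric ergodicity, so the function $\hat h(x,z)=\sum_{t\ge0}(p^{(t)}h(x,\cdot)(z)-\pi h(x,\cdot))$ is well defined and satisfies $\hat h-p\hat h=h$, since $\pi h(x,\cdot)=0$. By \eqref{def:tmix}, $\|\hat h(x,\cdot)\|_\infty\le c\,\tmix\sup_z|h(x,z)|\le c\,\tmix(1+\Delta(x))$, using \eqref{eq:gbdd}. Assumption~\ref{assum:lipg} with smoothness gives the Lipschitz-type bound $|\hat h(x,z)-\hat h(y,z)|\le c\,\tmix(1+\sqrt{\Delta})\|x-y\|$. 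We then decompose $h(x_j,Z_j)=[\hat h(x_j,Z_{j+1})-p\hat h(x_j,Z_j)]+[p\hat h(x_j,Z_j)-p\hat h(x_{j+1},Z_{j+1})]+[p\hat h(x_{j+1},Z_{j+1})-p\hat h(x_j,Z_{j+1})]$. The first bracket is a martingale difference. The second telescopes; after summation by parts against the slowly varying weights $w_{j,k}$, it leaves boundary terms and terms of size $\alpha_j w_{j,k}\tmix(1+\Delta_j)$. The third is bounded by $\tmix\|x_{j+1}-x_j\|\lesssim\tmix\alpha_j(1+\Delta_j)$. So the Markov sum becomes a martingale with increments of size $\tmix(1+\Delta_j)$ plus terms that are higher order in $\alpha$, and the latter can be absorbed into the drift.

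The main obstacle is that $\Delta_j$ has no almost-sure bound, so the martingale increments are unbounded and Azuma--Hoeffding cannot be applied directly. We handle this with the good-event induction. Define $E_k=\{\Delta_j\le \bar\Delta_j \ \forall j\le k\}$, where $\bar\Delta_j=\Gamma\tmix^2\log((j+K_0)/\delta)/(\mu^2(j+K_0))$. On $E_{j}$ we truncate the increments by multiplying them by $\ind{E_j}$; the resulting stopped martingale has bounded increments of size $\lesssim\tmix(1+\bar\Delta_j)$. Azuma--Hoeffding for this stopped martingale at each horizon $k$, with confidence level $\delta_k=\delta/(k+K_0)^2$ (union bound over $k$), shows that on $E_k$ the unrolled recursion gives $\Delta_{k+1}\le\bar\Delta_{k+1}$, except on an event of probability $\delta_k$. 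The weighted variance is $\sum_j w_{j,k}^2\alpha_j^2\tmix^2(1+\bar\Delta_j)^2=\cO(\tmix^2/(k+K_0))$, so the deviation is $\cO(\tmix\sqrt{\log(k/\delta)/k})$ multiplied by $\sqrt{\bar\Delta}$ (the gradient factor). By Young's inequality this is at most $\frac12\bar\Delta_{k+1}+\cO(\tmix^2\log/k)$, which closes the induction for a large constant $\Gamma$.

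The base case and the absorption of the $\alpha^2\bar\Delta$ terms need $K_0\gtrsim\log(1/\delta)$ and $\Delta_0\le\bar\Delta_0$; this is where $\cfr_1=\Omega(\log(1/\delta))$ enters. Summing the failure probabilities gives $\bP(\cap_k E_k)\ge1-\sum_k\delta_k\ge1-\delta$, and this yields the stated uniform-in-time bound.
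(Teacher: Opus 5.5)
Your overall route matches the paper's: the descent inequality and unrolling, a Poisson-equation split of the Markov noise, truncation on good events, Azuma--Hoeffding at each horizon, and a union bound, with $K_0\gtrsim\log(1/\delta)$ absorbing the terms quadratic in $\bar\Delta$. Your $\hat h(x,z)$ is just $\langle\nabla f(x),V(x,z)\rangle$ for the paper's vector solution $V$. However, three steps fail as written.

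First, your three brackets are off by one. They sum to $\hat h(x_j,Z_{j+1})-p\hat h(x_j,Z_{j+1})=h(x_j,Z_{j+1})$, not $h(x_j,Z_j)$. That split fits the convention where the chain is advanced before it is used, whereas here $G_k=g(x_k,Z_k)+M_{k+1}$. The correct split is $h(x_j,Z_j)=[\hat h(x_j,Z_{j+1})-p\hat h(x_j,Z_j)]-[\hat h(x_j,Z_{j+1})-\hat h(x_j,Z_j)]$, i.e.\ the paper's $\widetilde M_{j+1}-d_j$, with $\hat h$ rather than $p\hat h$ in the residual. Summing that residual by parts, as in the paper's terms $(P),(Q),(R)$, also keeps $x_k$ out of the boundary terms.

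Second, the bound $\sup_z|\hat h(x,z)|\le c\,\tmix(1+\Delta(x))$ is valid but too weak for the concentration step. With Markov-martingale increments of size $\alpha_j w_{j,k}\tmix(1+\bar\Delta_j)$, Azuma--Hoeffding only yields a deviation of order $\tmix\sqrt{\log(k/\delta)/k}$, and the induction cannot close at rate $1/k$. You need $|\hat h(x,z)|\le\norm{\nabla f(x)}\sup_{z'}\norm{V(x,z')}\lesssim\tmix\sqrt{\Delta(x)(1+\Delta(x))}$, which holds because $\nabla f(x)$ does not depend on $z$. This is the ``gradient factor'' you invoke later without having established it. It is also why the paper keeps $\norm{\nabla f(x_\ell)}^2\le 2L\Delta_\ell$ in Lemma~\ref{lem:square_sum_bdd}.

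Third, you discard contraction you need. With the P\L{} condition, the descent step gives $1-2\mu\alpha_k+c_1\alpha_k^2\le 1-\mu\alpha_k$ for large $K_0$, so the paper's weights decay with exponent $\mu a\ge 2$. You weaken this to $1-\mu\alpha_k/2$, with exponent $\mu a/2$, which equals $1$ at the admissible endpoint $a=2/\mu$. There the variance proxy $\sum_{j<k}\alpha_j^2\big(\tfrac{j+K_0}{k+K_0}\big)^{\mu a}\tmix^2\bar\Delta_j$ is of order $\frac{\tmix^4\Gamma a^2}{\mu^2(k+K_0)^2}\sum_{j<k}\frac{\log((j+K_0)/\delta)}{j+K_0}$. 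The resulting Azuma deviation then exceeds $\bar\Delta_k$ by a factor growing like $\sqrt{\log k}$, so no fixed $\Gamma$ closes the induction. (Your ``deterministic $\cO(a/(k+K_0))$'' term also picks up a $\log k$ there, though $\bar\Delta_k$ happens to absorb it.) If you keep $1-\mu\alpha_k$, the relevant sums $\sum_j(j+K_0)^{2\mu a-3}$ and $\sum_j(j+K_0)^{2\mu a-4}$ have positive denominators $2\mu a-2$ and $2\mu a-3$, and your Young's-inequality closing goes through.

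After these repairs the proof is the paper's up to cosmetic choices:
\begin{itemize}
\item cumulative good events and a stopped martingale, instead of the decomposition $\cup_k\big(\cap_{\ell<k}E_\ell\big)\cap\bar E_k$;
\item Young's inequality, instead of solving the quadratic for $\Gamma_2$;
\item $K_0\Delta_0$ folded into $\Gamma$, which is harmless since $K_0/\log(K_0/\delta)$ stays bounded when $K_0\asymp\log(1/\delta)$.
\end{itemize}
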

An outline of the proof for \Cref{thm:concentration_markov} is given in \Cref{sec:proof_outline}. The complete proof, including explicit constants and a fully specified (non-orderwise) version of the result, is presented in Appendix \ref{sec:thm_main}. The above bound matches the optimal $1/k$ decay rate and the $\delta$ dependence obtained for unbiased noise with bounded second moment \cite{madden2024high}. The quadratic dependence on $t_{mix}$ appears because of the decomposition of Markov noise into a martingale difference component, followed by an application of the Azuma-Hoeffding bound. In contrast, the expected suboptimality depends only linearly on $t_{mix}$, leaving open the question of whether the same linear dependence can be achieved in the high-probability bound.

\textbf{High-Probability guarantees in absence of Markov noise:} For completeness, we also present a special case where the Markovian noise is absent, i.e., the noisy gradient samples are given by $G_k=\nabla f(x_k)+M_{k+1}$. This corresponds to the standard martingale-difference noise model widely studied in stochastic optimization. To the best of our knowledge, this is the first work establishing high-probability guarantees for SGD under the P\L ~condition with the $ABC$ noise structure. The closest related work is \cite{madden2024high}; however, unlike \cite{madden2024high}, we allow the noise magnitude to scale with the gradient, and our proof technique is substantially different.
\begin{theorem}\label{thm:concentration_mart}
    Let the objective function $f(\cdot)$ be $\mu$-P\L~\eqref{eq:pl_cond} and $L$-smooth (Assumption \ref{assum:smoothness}). Suppose the noisy gradient samples are free of Markov noise, i.e., $G_k = \nabla f(x_k) + M_{k+1}$ for all $k=0,1,\ldots$, and satisfy Assumptions \ref{assum:martingale} and \ref{assum:noisy_grad_norm}. Then for every $\delta\in(0,1)$, there exists  $\cfr_2=\Omega\left(\log\left(\frac{1}{\delta}\right)\right)$ such that for $K_0\geq \cfr_2$, with probability at least $1 - \delta$,
    \begin{align*}
        \Delta_k &\leq \cO\br{\frac{\log{\br{\frac{k}{\delta}}}}{\mu^2 k}}\quad \forall k \in \{1,2,\ldots\}.
    \end{align*}
\end{theorem}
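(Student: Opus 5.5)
The plan is a one-step recursion for $\Delta_k$ obtained from $L$-smoothness. Since $G_k=\nabla f(x_k)+M_{k+1}$,
\begin{align*}
\Delta_{k+1}\le \Delta_k-\alpha_k\norm{\nabla f(x_k)}^2-\alpha_k\angl{\nabla f(x_k),M_{k+1}}+\tfrac{L\alpha_k^2}{2}\norm{G_k}^2 .
\end{align*}
Assumption~\ref{assum:noisy_grad_norm} together with Lemma~\ref{lem:ub_grad} gives $\norm{G_k}^2\le (2LA+B)\Delta_k+C$. The P\L~condition~\eqref{eq:pl_cond} turns the gradient term into $-2\mu\alpha_k\Delta_k$. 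This yields
\begin{align*}
\Delta_{k+1}\le (1-2\mu\alpha_k+c_1\alpha_k^2)\Delta_k+c_2\alpha_k^2+\alpha_k\xi_{k+1},
\end{align*}
where $\xi_{k+1}=-\angl{\nabla f(x_k),M_{k+1}}$ is a martingale difference. Taking $K_0$ large makes $c_1\alpha_k\le\mu$, so the contraction factor is at most $1-\mu\alpha_k\le 1-2/(k+K_0)$ because $a\ge 2/\mu$. Unrolling, with $\beta_{j,k}=\prod_{i=j+1}^{k-1}(1-\mu\alpha_i)\lesssim ((j+K_0)/(k+K_0))^2$, gives
\begin{align*}
\Delta_k\le \beta_{0,k}\Delta_0+c_2\sum_j\alpha_j^2\beta_{j,k}+\sum_j\alpha_j\beta_{j,k}\xi_{j+1}.
\end{align*}
The first two terms are $\cO(1/k)$ deterministically; the first decays like $K_0^2/k^2$.

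The difficulty is the martingale sum. Its increments satisfy $|\xi_{j+1}|\le\norm{\nabla f(x_j)}\norm{M_{j+1}}$, and $\norm{M_{j+1}}\le\norm{G_j}+\norm{\nabla f(x_j)}$. So $|\xi_{j+1}|\lesssim \Delta_j+\sqrt{\Delta_j}$, which is not almost surely bounded. I would handle this with the good-event induction. Define $E_k=\{\Delta_j\le \bar c\log((j+K_0)/\delta)/(j+K_0)\ \forall j\le k\}$ for a constant $\bar c$ fixed later. I then study the truncated martingale $\sum_j\alpha_j\beta_{j,k}\xi_{j+1}\ind{E_j}$, which is still a martingale because $E_j\in\cF_j$. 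Its increments are almost surely bounded by $c_j=c\,\alpha_j\beta_{j,k}\sqrt{\bar c\log((j+K_0)/\delta)/(j+K_0)}$ (using $\Delta_j\le 1$ for large $K_0$). Then $\sum_j c_j^2\lesssim \bar c\log((k+K_0)/\delta)/(k+K_0)^3\cdot\sum_j (j+K_0)$, which is $\bar c\log(\cdot)/(k+K_0)$ up to constants.

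Azuma--Hoeffding with failure probability $\delta_k=\delta/(2(k+K_0)^2)$ then gives the following. On $E_k$, the martingale term is at most $\sqrt{\bar c}\,\sqrt{\log((k+K_0)/\delta)\log(1/\delta_k)}/(k+K_0)$. Since $\log(1/\delta_k)\lesssim\log((k+K_0)/\delta)$, this is $\lesssim \sqrt{\bar c}\log((k+K_0)/\delta)/(k+K_0)$. Hence on $E_k$ minus the Azuma failure event,
\begin{align*}
\Delta_{k+1}\le \frac{c_3+c_4\sqrt{\bar c}}{k+1+K_0}\log\Big(\frac{k+1+K_0}{\delta}\Big).
\end{align*}
Choosing $\bar c$ large enough that $c_3+c_4\sqrt{\bar c}\le\bar c$ gives $E_{k+1}$.

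The base case $E_0$ requires $\Delta_0\le\bar c\log(K_0/\delta)/K_0$, which is not automatic. I would instead put the initial transient into the envelope: let $E_k$ bound $\Delta_j$ by $\bar c\log(\cdot)/(j+K_0)+\Delta_0K_0^2/(j+K_0)^2$. Alternatively, I would use the hypothesis $K_0\ge\cfr_2=\Omega(\log(1/\delta))$ so that the bounded-increment estimate $\Delta_j\le 1$ and the contraction hold with the $\Delta_0$-dependent constants absorbed. This is where the $\log(1/\delta)$ lower bound on $K_0$ enters. A union bound over $k$ with $\sum_k\delta_k\le\delta$ gives $\bP(\cap_kE_k)\ge1-\delta$. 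Converting $k+K_0$ to $k$ costs only constants (including $K_0$), and tracking $a\propto1/\mu$ gives the $1/\mu^2$ factor. This yields the bound claimed in Theorem~\ref{thm:concentration_mart}.

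The main obstacle is closing the induction constant. The martingale term scales like $\sqrt{\bar c}$ while the target scales like $\bar c$, and that gap is what makes it close. Verifying it requires the self-referential variance bound $|\xi|\lesssim\sqrt{\Delta}$ to hold uniformly, which needs $\Delta_j$ bounded so that $\Delta_j\le\sqrt{\Delta_j}$. I would secure this first by requiring $K_0$ large enough that the envelope stays below $1$.
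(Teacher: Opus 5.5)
Your proposal is correct and follows essentially the paper's route: the one-step smoothness/P\L{} recursion, good events with an indicator-truncated martingale, Azuma--Hoeffding with failure probability of order $\delta/k^2$ and a union bound over $k$, and closing the self-referential constant using $K_0=\Omega(\log(1/\delta))$. The paper's choice of $\tGm_2$ is your $\bar c\ge c_3+c_4\sqrt{\bar c}$ step, and its lower bound $K_0\ge 8\tnu_2\log(16\tnu_2/\delta)$ is what makes the $\Delta_\ell^2$ part of the variance a small fraction of $\tGm_2$. One slip: the envelope must dominate $\Delta_0$ at $j=0$, so it cannot stay below $1$ when $\Delta_0>1$; bound it by a $\Delta_0$-dependent constant instead (or keep the $\Delta^2$ term, as the paper does), in which case $\bar c$ depends on $\Delta_0$ and on $K_0/\log(1/\delta)$, which the paper likewise tracks in $\tGm_2$ through $\overline{K}_0$.
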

The proof follows the same outline as Theorem \ref{thm:concentration_markov} and is presented in Appendix \ref{sec:mart_noise_proof}. 

\textbf{Bound in expectation:} We now present a bound on the expected suboptimality of the SGD iterates. As discussed after Assumption \ref{assum:martingale}, we do not need an almost sure bound on the noise to obtain bounds in expectation. Instead, it suffices to impose the following relaxed condition on the conditional second moment of the noisy gradients.
\begin{assumption}\label{assum:expt_noisy_grad_norm}
    There exist constants $A,B,C \geq 0$ such that for all $k$,
    \begin{align}
        \bE\sqbr{\norm{G_k}^2 \mid \cF_k} \leq A \norm{\nabla f(x_k)}^2 + B \br{f(x_k) - f\ust} + C, \mbox{ almost surely.} \label{bdd:expt_noisy_grad_norm}
    \end{align}
\end{assumption}
We have the following result on the expected suboptimality of iterates.
\begin{theorem}\label{thm:expectation_markov}
    Let the objective function $f(\cdot)$ be $\mu$-P\L~\eqref{eq:pl_cond} and $L$-smooth (Assumption \ref{assum:smoothness}). Suppose the noisy gradient samples satisfy Assumptions \ref{assum:martingale}, \ref{assum:expt_noisy_grad_norm}, \ref{assum:Markov}, \ref{assum:lipg}. Then there exists constant $\cfr_3$ such that for $K_0\geq \cfr_3$, 
    \begin{align*}
        \bE\sqbr{\Delta_k} &\leq \cO\br{\frac{\tmix}{\mu^2 k}}\quad \forall k \in \{1,2,\ldots\}.
    \end{align*}
\end{theorem}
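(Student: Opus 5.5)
We prove Theorem~\ref{thm:expectation_markov} via a one-step descent inequality for $\Delta_k$, a Poisson-equation decomposition of the Markovian bias, and a perturbed Lyapunov function whose expectation obeys a recursion of the form $e_{k+1}\le(1-\mu\alpha_k)e_k+\cO(\tmix\alpha_k^2)$.

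\textbf{Step 1 (descent).} By $L$-smoothness,
\begin{align*}
\Delta_{k+1}\le \Delta_k-\alpha_k\angl{\nabla f(x_k),G_k}+\tfrac{L}{2}\alpha_k^2\norm{G_k}^2 .
\end{align*}
Write $G_k=\nabla f(x_k)+\br{g(x_k,Z_k)-\nabla f(x_k)}+M_{k+1}$.
\begin{itemize}
\item The martingale term has zero conditional mean given $\cF_k$.
\item By Assumption~\ref{assum:expt_noisy_grad_norm} and Lemma~\ref{lem:ub_grad}, $\bE[\norm{G_k}^2\mid\cF_k]\le (2LA+B)\Delta_k+C$.
\item The P\L\ condition gives $-\alpha_k\norm{\nabla f(x_k)}^2\le -2\mu\alpha_k\Delta_k$.
\end{itemize}
What remains is the Markov term $-\alpha_k\angl{\nabla f(x_k),g(x_k,Z_k)-\nabla f(x_k)}$, which does not have zero conditional mean.

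\textbf{Step 2 (Poisson equation).} For fixed $x$, let $\hat g(x,z)=\sum_{n\ge0}\br{p^{(n)}g(x,\cdot)(z)-\nabla f(x)}$ solve $\hat g(x,z)-p\hat g(x,z)=g(x,z)-\nabla f(x)$. The geometric ergodicity in \eqref{def:tmix} ensures convergence of the series. Together with \eqref{eq:gbdd} and the Lipschitz property \eqref{eq:lipg}, it yields
\begin{align*}
\norm{\hat g(x,z)}\le c\,\tmix\sqrt{(2LA+B)\Delta(x)+C},\qquad \norm{\hat g(x,z)-\hat g(y,z)}\le c\,\tmix L_g\norm{x-y}.
\end{align*}
Set $h(x,z)=\angl{\nabla f(x),\hat g(x,z)}$. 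Then
\begin{align*}
h(x_k,Z_k)-(ph)(x_k,Z_k) = \sqbr{h(x_k,Z_{k+1})-(ph)(x_k,Z_k)} + \sqbr{h(x_k,Z_k)-h(x_k,Z_{k+1})} .
\end{align*}
The first bracket is a martingale difference with respect to $\cF_{k+1}$; this requires $Z_{k+1}$ to be conditionally distributed as $p(\cdot\mid Z_k)$ given $\cF_k$, which I would take as implicit in the setup. After multiplying by $\alpha_k$, the second bracket telescopes to
\begin{align*}
\alpha_k h(x_k,Z_k)-\alpha_{k+1}h(x_{k+1},Z_{k+1}) + \alpha_{k+1}\br{h(x_{k+1},Z_{k+1})-h(x_k,Z_{k+1})} + (\alpha_{k+1}-\alpha_k)h(x_{k+1},Z_{k+1}).
\end{align*}
We bound the remainder terms as follows.
\begin{itemize}
\item The increment $h(x_{k+1},\cdot)-h(x_k,\cdot)$ is controlled by smoothness and the Lipschitz bound on $\hat g$. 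This gives $\cO(\tmix\,\alpha_k\norm{G_k}\sqrt{\Delta_k+1})$ and hence $\cO(\tmix\alpha_k^2(\Delta_k+1))$ in conditional expectation.
\item The stepsize difference satisfies $|\alpha_{k+1}-\alpha_k|\le\alpha_k^2/a$, so that term is also $\cO(\tmix\alpha_k^2(\Delta_{k+1}+1))$.
\end{itemize}
Quadratic growth of $|h|\le c\,\tmix(\Delta+1)$ follows from $\norm{\nabla f}^2\le 2L\Delta$.

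\textbf{Step 3 (perturbed Lyapunov recursion).} Define $V_k=\Delta_k-\alpha_k h(x_k,Z_k)$. Since $|\alpha_k h|\le c\,\tmix\alpha_k(\Delta_k+1)$, choosing $K_0\ge\cfr_3$ with $c\,\tmix\alpha_0\le 1/4$ makes $V_k$ and $\Delta_k$ equivalent up to constants and an additive $\cO(\tmix\alpha_k)$ term. Taking expectations gives
\begin{align*}
\bE V_{k+1}\le (1-\mu\alpha_k)\bE V_k + c'\tmix\alpha_k^2 .
\end{align*}
This uses $\mu\alpha_k$ from the P\L\ drift after absorbing the $\cO(\tmix\alpha_k^2\Delta_k)$ error terms, which is valid for $K_0$ large. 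Terms involving $\Delta_{k+1}$ on the right are moved to the left, which again needs $\alpha_k$ small. With $\alpha_k=a/(k+K_0)$ and $a\mu\ge2$, the standard induction $\bE V_k\le D/(k+K_0)$ closes with $D=\cO(a^2\tmix+K_0\bE V_0)$. Converting back to $\Delta_k$ loses only $\cO(\tmix\alpha_k)$, which gives $\bE\Delta_k=\cO(\tmix/(\mu^2k))$ since $a=\Theta(1/\mu)$.

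\textbf{Main obstacle.} The main difficulty is Step 2: bounding the Poisson solution and its $x$-regularity using only the growth condition \eqref{eq:gbdd}, without a uniform bound. The key point is that every error term must be linear in $\tmix$ and of the form $\alpha_k^2(\Delta+1)$. The $\Delta_{k+1}$ terms must also be handled carefully. I would first use $\Delta_{k+1}\le 2\Delta_k+\cO(\alpha_k^2\norm{G_k}^2)$ to reduce them to $\Delta_k$.
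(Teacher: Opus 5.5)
Your proof is correct, and its ingredients are the paper's: your $\hat g$ is the paper's $V$, with the same $\cO(\tmix\sqrt{(2LA+B)\Delta+C})$ size bound and $\cO(\tmix L_g)$ Lipschitz bound. Your split of $h-ph$ is exactly $\langle\nabla f(x_k),\widetilde M_{k+1}\rangle-\langle\nabla f(x_k),d_k\rangle$. What differs is how the correction term is handled.

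The paper first unrolls the descent inequality (Lemma~\ref{lem:iter_Delta}) and takes expectations so the martingale sums vanish. It then performs a global summation by parts on $\sum_\ell\alpha_\ell\zeta_{\ell+1,k-1}\langle\nabla f(x_\ell),d_\ell\rangle$: these are the terms (P), (Q), (R) of Lemma~\ref{lem:bdd_markov_residue}, with Lemma~\ref{lem:bddalphazeta}(c) controlling the weight differences. It closes an induction by inserting the crude bound $\max_{\ell<k}\bE[\Delta_\ell]\le \mathrm{const}/K_0$ and absorbing it via $K_0\ge 2D_2$.

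You instead telescope one step at a time and fold the boundary term into $V_k=\Delta_k-\alpha_k h(x_k,Z_k)$. The correspondence is:
\begin{itemize}
\item the paper's boundary terms (P) become your conversion between $V_k$ and $\Delta_k$;
\item its weight differences (Q) become your stepsize-difference term plus the $\mu\alpha_k^2h$ error from rewriting the drift in terms of $V_k$;
\item (R) is your increment term.
\end{itemize}

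What each route buys: yours gives a self-contained one-step recursion and needs neither the $\zeta$-weighted lemmas nor the crude-bound absorption. Because it conditions on $\cF_k$ at each step, it genuinely uses only Assumption~\ref{assum:expt_noisy_grad_norm}. The paper, by contrast, takes expectations of a pathwise inequality (and of Lemma~\ref{lem:supporting_bdd_2}) derived under the almost-sure bound of Assumption~\ref{assum:noisy_grad_norm}. The paper's unrolled form is what it reuses for the high-probability theorem, where the weighted martingale sum must be exposed for Azuma--Hoeffding.

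One harmless slip: your telescoped expression is off by $(\alpha_{k+1}-\alpha_k)\bigl(h(x_{k+1},Z_{k+1})-h(x_k,Z_{k+1})\bigr)$. The last term should be $(\alpha_{k+1}-\alpha_k)h(x_k,Z_{k+1})$, or alternatively the increment should carry $\alpha_k$. With the $h(x_k,Z_{k+1})$ version, no $\Delta_{k+1}$ appears on the right at all, since $\|\nabla f(x_{k+1})-\nabla f(x_k)\|\le L\alpha_k\|G_k\|$ and $\|\hat g(x_{k+1},z)\|\le\|\hat g(x_k,z)\|+c\,\tmix L_g\alpha_k\|G_k\|$.

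The conditional-law requirement on $Z_{k+1}$ given $\cF_k$ that you flag is the same one the paper uses implicitly when it asserts that $\widetilde M_{k+1}$ is a martingale difference.
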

The proof of this theorem is presented in Appendix \ref{sec:expt_proof}. 
\section{Proof Outline}
\label{sec:proof_outline}
In this section, we outline the proofs for Theorems~\ref{thm:concentration_markov} and \ref{thm:expectation_markov}. The complete proofs are provided in Appendix~\ref{sec:thm_main} and Appendix~\ref{sec:expt_proof}, respectively.~In addition to standard arguments used in analyzing SGD under the P\L ~condition, the proof relies on solutions to the Poisson equation to handle the Markovian noise~\cite{metivier1984applications}. We also introduce a \textit{probabilistic induction} argument, which enables the application of concentration inequalities despite the absence of an almost-sure bound on the iterates.

The noisy gradient sample observed at every time $k$ can be decomposed as follows:
\begin{align*}
    G_k = \nabla f(x_k) + (g(x_k,Z_k) - \nabla f(x_k)) + M_{k+1}.
\end{align*}
Here $\nabla f(x_k)$ is the true noiseless gradient, and $M_{k+1}$ is the martingale difference noise. The term $(g(x_k,Z_k) - \nabla f(x_k))$ represents the Markovian noise. To handle this error term, we define the following solutions of the Poisson equation.
\begin{align}
    V(x,z) \coloneqq \bE\sqbr{\sum_{\ell=0}^{\infty}{(g(x,Z_\ell) - \nabla f(x))} \mid Z_0 = z}. \label{def:poisson_sol}
\end{align}
The following lemma states that $V(x,z)$ is a solution of the required Poisson equation.
\begin{lemma}\label{lemma:Poisson}
    For each $x\in\bR^d, z\in \cZ$, the function $V(x,z)$ satisfies the following.
    \begin{align}
        g(x,z) - \nabla f(x) = V(x,z) - \int{V(x,z\up) p(dz\up \mid z)}. \label{eq:poisson}
    \end{align}
\end{lemma}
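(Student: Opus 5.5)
The plan is to first show that the series defining $V(x,z)$ in \eqref{def:poisson_sol} converges absolutely, uniformly in $z$, and then verify \eqref{eq:poisson} by a telescoping (first-step) argument using the Markov property.

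\emph{Step 1 (well-definedness).} Fix $x$. Write $h(z) \coloneqq g(x,z) - \nabla f(x)$. By Assumption~\ref{assum:Markov}, $\int h\, d\pi = 0$. By \eqref{eq:gbdd}, $\sup_z \norm{g(x,z)} \le c(x) \coloneqq \br{A\norm{\nabla f(x)}^2 + B(f(x)-f\ust) + C}^{1/2} < \infty$, and also $\norm{\nabla f(x)} \le c(x)$ since $\nabla f(x)$ is a $\pi$-average of $g(x,\cdot)$. Hence $\sup_z\norm{h(z)} \le 2c(x)$. Then
\begin{align*}
    \norm{\bE\sqbr{h(Z_\ell) \mid Z_0 = z}} = \norm{\int h(z\up)\,\br{p\uc{\ell}(dz\up\mid z) - \pi(dz\up)}} \le 2c(x)\, 2^{-\floor{\ell/\tmix}}
\end{align*}
by \eqref{def:tmix} and the bound $\abs{\int h\, d\xi} \le \sup\norm{h}\,\norm{\xi}_{TV}$, applied componentwise or to $\angl{u,h}$ for unit vectors $u$. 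Summing over $\ell$ gives $\sum_\ell 2c(x)2^{-\floor{\ell/\tmix}} = 4c(x)\tmix < \infty$. So $V(x,z) = \sum_{\ell\ge0} \bE[h(Z_\ell)\mid Z_0=z]$ is well defined, and $\sup_z \norm{V(x,z)} \le 4c(x)\tmix$. This also justifies interchanging the sum and the expectation in \eqref{def:poisson_sol} by dominated convergence (Fubini applied to the partial sums).

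\emph{Step 2 (Poisson equation).} By time-homogeneity and the Markov property, $\int \bE[h(Z_\ell)\mid Z_0=z\up]\,p(dz\up\mid z) = \bE[h(Z_{\ell+1})\mid Z_0=z]$, i.e.\ $p\uc{\ell+1} = p\, p\uc{\ell}$ (Chapman--Kolmogorov). Integrating the series for $V(x,\cdot)$ against $p(\cdot\mid z)$, we may swap the sum and the integral because the uniform summable bound from Step 1 provides a dominating sequence. This gives
\begin{align*}
    \int V(x,z\up)\,p(dz\up\mid z) = \sum_{\ell\ge 1} \bE\sqbr{h(Z_\ell)\mid Z_0=z} = V(x,z) - h(z),
\end{align*}
which is exactly \eqref{eq:poisson}.

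The only delicate point is Step 1's interchange and summability. The key ingredient there is the uniform-in-$z$ geometric bound, which combines the mean-zero property $\int h\,d\pi=0$ with the boundedness of $g(x,\cdot)$ supplied by \eqref{eq:gbdd}. Once that is in place, Step 2 is routine. A measurability remark is also needed: $z\mapsto V(x,z)$ is measurable as a pointwise limit of measurable functions, so the integral in \eqref{eq:poisson} makes sense.
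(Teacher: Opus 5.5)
Your proof is correct and is essentially the paper's argument. The paper likewise uses the uniform geometric total-variation bound to show that the partial sums $V_N(x,\cdot)$ converge, then telescopes $V_N(x,z)-\int V_N(x,z')\,p(dz'\mid z) = g(x,z)-\int g(x,z')\,p^{(N+1)}(dz'\mid z)$ and lets $N\to\infty$ using dominated convergence and ergodicity. You instead integrate the full series term by term and shift the index via Chapman--Kolmogorov, which avoids the separate remainder limit.

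One side remark of yours is wrong, though harmless. The bound from Step~1 controls $\|\bE[h(Z_\ell)\mid Z_0=z]\|$, not $\bE[\|h(Z_\ell)\|\mid Z_0=z]$, and the random series $\sum_\ell h(Z_\ell)$ generally does not converge (its terms need not tend to zero). So Fubini or dominated convergence cannot justify moving $\bE$ inside the infinite sum in \eqref{def:poisson_sol}. You should simply read that definition as $V(x,z)=\sum_{\ell\ge0}\bE[h(Z_\ell)\mid Z_0=z]$, which is all Step~2 uses and is exactly what the paper does by setting $V=\lim_N V_N$.
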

The above lemma allows us to decompose the Markovian noise as follows:
\begin{align*}
    g(x_k,Z_k) - \nabla f(x_k)&=V(x_k,Z_k)-\int{V(x_k,z) p(dz \mid Z_k)}=\widetilde{M}_{k+1} - d_k,
\end{align*}
where,
\begin{align*}
    \widetilde{M}_{k+1} &\coloneqq V(x_k,Z_{k+1}) - \int{V(x_k,z) p(dz \mid Z_k)}, \\
    d_k &\coloneqq V(x_k,Z_{k+1}) - V(x_k,Z_k).
\end{align*}
Note that $\{\widetilde{M}_{k}\}$ is a martingale difference sequence with respect to the filtration $\{\cF_k\}$, i.e., $\bE[\widetilde{M}_{k+1}\mid\cF_k]=0$, $\forall k$. 
The term $\widetilde{M}_{k+1}$ can be combined with $M_{k+1}$ and handled together. Hence, the SGD iteration~\eqref{iter:sgd} can be rewritten as,
\begin{align*}
    x_{k+1} = x_k - \alpha_k (\nabla f(x_k) + M_{k+1} + \widetilde{M}_{k+1} - d_k), ~k = 0, 1, \ldots.
\end{align*}
Thus, the Poisson equation cleanly separates the Markovian noise into a martingale difference component and an additional correction term that can be shown to have a negligible effect on the rate. 

For ease of notation, we define
\begin{align}
    \zeta_{m,n} &\coloneqq \prod_{j=m}^{n}{\br{1 - \mu \alpha_j}}, \mbox{ for natural numbers } m\leq n.\label{def:zeta}
\end{align}
The following lemma presents an intermediate bound on $\Delta_k$.
\begin{lemma}\label{lem:iter_Delta}
    Suppose the setting of Theorem \ref{thm:concentration_markov} holds. Then for all $k\geq 0$,
    \begin{align}\label{eq:iter_Delta}
        \Delta_k
        &\leq \Delta_0 \zeta_{0,k-1} + \frac{CL}{2} \sum_{\ell=0}^{k-1}{\alpha_\ell^2 \zeta_{\ell+1,k-1}} - \sum_{\ell=0}^{k-1}{\alpha_\ell \zeta_{\ell+1,k-1} \left\langle \nabla f(x_\ell),\, M_{\ell+1} + \widetilde{M}_{\ell+1}\right\rangle} \notag\\
        &\quad
        +\sum_{\ell=0}^{k-1}\alpha_\ell \zeta_{\ell+1,k-1}
        \left\langle \nabla f(x_\ell),\, d_\ell \right\rangle .
    \end{align}
\end{lemma}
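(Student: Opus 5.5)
The plan is to derive a one-step recursion for $\Delta_k$ from $L$-smoothness and then unroll it. By Assumption~\ref{assum:smoothness} (the descent inequality),
\begin{align*}
    f(x_{k+1}) \leq f(x_k) - \alpha_k \langle \nabla f(x_k), G_k\rangle + \frac{L\alpha_k^2}{2}\norm{G_k}^2 .
\end{align*}
By the Poisson decomposition above (Lemma~\ref{lemma:Poisson}), $G_k = \nabla f(x_k) + M_{k+1} + \widetilde{M}_{k+1} - d_k$. Substituting and subtracting $f\ust$ gives
\begin{align*}
    \Delta_{k+1} \leq \Delta_k - \alpha_k\norm{\nabla f(x_k)}^2 - \alpha_k\langle \nabla f(x_k), M_{k+1}+\widetilde{M}_{k+1}\rangle + \alpha_k \langle \nabla f(x_k), d_k\rangle + \frac{L\alpha_k^2}{2}\norm{G_k}^2 .
\end{align*}

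Next I handle the quadratic term with the ABC bound of Assumption~\ref{assum:noisy_grad_norm}: $\norm{G_k}^2 \le A\norm{\nabla f(x_k)}^2 + B\Delta_k + C$. This leaves a gradient coefficient $-\alpha_k(1 - LA\alpha_k/2)$. Provided $LA\alpha_k \le 2$, the P\L~condition \eqref{eq:pl_cond} lets me replace $\norm{\nabla f(x_k)}^2$ by $2\mu\Delta_k$. The total coefficient of $\Delta_k$ is then
\begin{align*}
    1 - 2\mu\alpha_k + \mu L A\alpha_k^2 + \frac{LB}{2}\alpha_k^2 .
\end{align*}
I want this to be at most $1 - \mu\alpha_k$, which holds iff $\alpha_k(\mu L A + LB/2) \le \mu$. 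Since $\alpha_k \le \alpha_0 = a/K_0$, this is a lower bound on $K_0$. It is absorbed into the condition $K_0 \ge \cfr_1$ of Theorem~\ref{thm:concentration_markov}; this is exactly the sense of ``the setting of Theorem~\ref{thm:concentration_markov} holds''.

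I would also enlarge $K_0$ so that $\mu\alpha_k \le 1$, which guarantees every factor $1-\mu\alpha_j \geq 0$. This yields
\begin{align*}
    \Delta_{k+1} \le (1-\mu\alpha_k)\Delta_k + \frac{CL}{2}\alpha_k^2 - \alpha_k\langle \nabla f(x_k), M_{k+1}+\widetilde{M}_{k+1}\rangle + \alpha_k\langle \nabla f(x_k), d_k\rangle .
\end{align*}

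Finally, I unroll by induction on $k$, using the convention that the empty product $\zeta_{k,k-1}=1$. Suppose \eqref{eq:iter_Delta} holds at $k$. Multiplying it by $1-\mu\alpha_k \ge 0$ preserves the inequality even though the noise terms may be negative. Then $\zeta_{\ell+1,k-1}(1-\mu\alpha_k) = \zeta_{\ell+1,k}$, and adding the new time-$k$ terms, each with factor $\zeta_{k+1,k}=1$, gives \eqref{eq:iter_Delta} at $k+1$. The base case $k=0$ is trivial.

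The only delicate point is the stepsize bookkeeping. The required lower bound on $K_0$ must be explicit, and nonnegativity of $1-\mu\alpha_j$ must be checked, since the inequality is multiplied through by these factors. Everything else is routine.
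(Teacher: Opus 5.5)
Your proposal is correct and follows essentially the same route as the paper. Both arguments use the descent inequality, substitute $G_k = \nabla f(x_k) + M_{k+1} + \widetilde{M}_{k+1} - d_k$, apply the ABC bound and then the P\L~condition under exactly the paper's requirement $K_0 \geq \frac{aL}{2}\br{2A + \frac{B}{\mu}}$, and finish by unrolling. You also explicitly check that $1-\mu\alpha_j \geq 0$ (guaranteed by $K_0 \geq \mu a$) before multiplying through the recursion, a detail the paper leaves implicit.
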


The term $\Delta_0 \zeta_{0,k-1} + \frac{CL}{2} \sum_{\ell=0}^{k-1}\alpha_\ell^2 \zeta_{\ell+1,k-1}$ can be bounded using standard properties of the stepsize sequence. The last term above can also be bounded using the properties of the solution of the Poisson equation~\eqref{def:poisson_sol} and using straightforward algebraic manipulations. It therefore remains to analyze the martingale term
$\sum_{\ell=0}^{k-1}\alpha_\ell \zeta_{\ell+1,k-1} \langle \nabla f(x_\ell), M_{\ell+1} + \widetilde{M}_{\ell+1} \rangle.$ To obtain a bound on the expected suboptimality gap, we take conditional expectations with respect to the sigma-fields $\cF_\ell$. Using the martingale difference property of $M_{\ell+1}$ and $\widetilde{M}_{\ell+1}$, each term in this summation has zero conditional expectation, which yields the result in Theorem~\ref{thm:expectation_markov}.

To obtain high-probability bounds, one would typically apply concentration inequalities such as the Azuma--Hoeffding inequality to this term. However, in our setting, we do not have a deterministic almost-sure bound on the martingale, which prevents a direct application of such inequalities. To address this issue, we use a \textit{probabilistic induction} approach.

We first explain the argument informally to build intuition. Suppose that up to time $\ell$ the iterates behave well, meaning that the suboptimality gap has been decaying at the desired rate; we denote this event by $E_\ell$. On this event, the martingale increments $M_{\ell+1}$ and $\widetilde{M}_{\ell+1}$ can be shown to be bounded. Consequently, the Azuma--Hoeffding inequality can be applied to control the stochastic term in the recursion, which implies that the desired bound on the suboptimality gap continues to hold at time $\ell+1$. Thus, if $E_\ell$ holds, the event $E_{\ell+1}$ also holds with high probability. The formal argument below implements this idea by defining the good events $E_k$ and controlling the associated martingale terms.

Define the following \textit{good} event for $k\geq 0$:
\begin{align} \label{def:Goodset}
    E_k \coloneqq \flbr{\Delta_k \leq \frac{K_0 \Delta_0 + \ind{k\neq 0} \br{\Gamma_1 + \Gamma_2 \log{\br{\frac{2 K_0}{\delta}}} + \Gamma_2 \log{\br{\frac{2k}{\delta}}}}}{k+K_0}}.
\end{align}
Here $\Gamma_1$ and $\Gamma_2$ are constants defined in \eqref{def:Gm1} and \eqref{def:Gm2}, respectively. We also define the sequence $\{\overline{M}_{\ell}\}$ as follows: 
\begin{equation}\label{Mbar}
    \overline{M}_{\ell+1} \coloneqq - \alpha_\ell \zeta_{\ell+1,k-1} \langle \nabla f(x_\ell), M_{\ell+1} + \widetilde{M}_{\ell+1}\rangle \ind{E_\ell}.
\end{equation}
Note that $\{\overline{M}_{\ell}\}$ is a martingale difference sequence with respect to the filtration $\{\cF_\ell\}$. The only difference in the definition of $\overline{M}_{\ell+1}$ and $-\alpha_\ell \zeta_{\ell+1,k-1}\langle \nabla f(x_\ell),\, M_{\ell+1} + \widetilde{M}_{\ell+1}\rangle$ appearing in Lemma \ref{lem:iter_Delta} is the multiplication with the indicator $\ind{E_\ell}$. On the event $E_{\ell}$, the suboptimality gap is bounded, which implies that function value and the gradient are bounded. Consequently, under the ABC condition, the noise terms are also bounded. Hence, $\overline{M}_{\ell+1}, l\geq 0$, has an almost sure bound, which allows us to apply the Azuma-Hoeffding inequality on the martingale $\sum_{\ell=0}^{k-1}\overline{M}_{\ell+1}, k\geq 1$.

It remains to relate the resulting high-probability bound on $\sum_{\ell=0}^{k-1}\overline{M}_{\ell+1}$ to the event that the suboptimality gap $\Delta_k$ decays at the desired rate. The constants $\Gamma_1$ and $\Gamma_2$ are chosen precisely so that this connection holds. Observe that our desired result is equivalent to obtaining an upper bound on $\bP\!\left(\cup_{k=0}^{\infty}\bar{E}_k\right)$, where $\bar{E}_k$ denotes the complement of $E_k$. The following lemma provides the key relation needed.
\begin{lemma}\label{lem:keylemma}
    If the setting of Theorem \ref{thm:concentration_markov} holds, then 
    \begin{align*}
        &\bP\br{\cup_{k=0}^{\infty}{\bar{E}_k}} \leq \sum_{k=1}^{\infty}{\bP\br{\flbr{\sum_{\ell=0}^{k-1}\overline{M}_{\ell+1} > \frac{\Gamma_2 \log{\br{\frac{2k}{\delta}}}}{k+K_0}}}}. 
    \end{align*}
\end{lemma}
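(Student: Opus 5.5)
Since $E_0$ always holds (its right-hand side is $K_0\Delta_0/K_0=\Delta_0$), I decompose the union into ``first failure'' events:
\[
\cup_{k=0}^{\infty}\bar E_k=\cup_{k=1}^{\infty}\br{\bar E_k\cap E_0\cap\cdots\cap E_{k-1}} .
\]
A union bound then reduces the lemma to the inclusion
\[
\bar E_k\cap \bigcap_{\ell=0}^{k-1}E_\ell\subseteq\flbr{\sum_{\ell=0}^{k-1}\overline M_{\ell+1}>\frac{\Gamma_2\log\br{2k/\delta}}{k+K_0}},\qquad k\ge 1 ,
\]
which I prove pathwise.

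On $\cap_{\ell<k}E_\ell$ every indicator $\ind{E_\ell}$ with $\ell\le k-1$ equals one. So the martingale term in Lemma~\ref{lem:iter_Delta} coincides exactly with $\sum_{\ell=0}^{k-1}\overline M_{\ell+1}$, and on this event
\[
\Delta_k\le \Delta_0\zeta_{0,k-1}+\frac{CL}{2}\sum_{\ell=0}^{k-1}\alpha_\ell^2\zeta_{\ell+1,k-1}+\sum_{\ell=0}^{k-1}\alpha_\ell\zeta_{\ell+1,k-1}\angl{\nabla f(x_\ell),d_\ell}+\sum_{\ell=0}^{k-1}\overline M_{\ell+1}.
\]
It therefore suffices to show that, on $\cap_{\ell<k}E_\ell$, the first three terms are deterministically bounded by $\br{K_0\Delta_0+\Gamma_1+\Gamma_2\log(2K_0/\delta)}/(k+K_0)$. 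Then $\bar E_k$ forces the remaining sum to exceed $\Gamma_2\log(2k/\delta)/(k+K_0)$, which is the inclusion.

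For the deterministic terms I use $a\mu\ge2$ and $1-x\le e^{-x}$ to get
\[
\zeta_{m,k-1}\le \br{\frac{m+K_0}{k+K_0}}^{a\mu}\le\br{\frac{m+K_0}{k+K_0}}^{2}.
\]
This gives $\Delta_0\zeta_{0,k-1}\le K_0\Delta_0/(k+K_0)$ and
\[
\frac{CL}{2}\sum_{\ell}\alpha_\ell^2\zeta_{\ell+1,k-1}\le \frac{CLa^2c}{k+K_0}
\]
for an absolute constant $c$. This contribution goes into $\Gamma_1$.

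The main obstacle is the Poisson correction term $\sum_\ell\alpha_\ell\zeta_{\ell+1,k-1}\angl{\nabla f(x_\ell),V(x_\ell,Z_{\ell+1})-V(x_\ell,Z_\ell)}$. A naive bound is only $\cO(\sum_\ell\alpha_\ell\zeta_{\ell+1,k-1})=\cO(1)$, which is too weak. I would use summation by parts: write the summand as $h_\ell(Z_{\ell+1})-h_\ell(Z_\ell)$ with $h_\ell(z)=\alpha_\ell\zeta_{\ell+1,k-1}\angl{\nabla f(x_\ell),V(x_\ell,z)}$, then regroup as telescoping boundary terms plus $\sum_\ell\br{h_{\ell}(Z_{\ell+1})-h_{\ell+1}(Z_{\ell+1})}$. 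The needed ingredients come from Assumption~\ref{assum:Markov}, which gives $\norm{V(x,z)}\lesssim \tmix\sup_z\norm{g(x,z)-\nabla f(x)}$, and from Assumption~\ref{assum:lipg}, which gives that $V(\cdot,z)$ is $\cO(\tmix L_g)$-Lipschitz. Each difference $h_\ell-h_{\ell+1}$ then involves $|\alpha_\ell\zeta_{\ell+1,k-1}-\alpha_{\ell+1}\zeta_{\ell+2,k-1}|=\cO(\alpha_\ell^2\zeta_{\ell+2,k-1})$, together with $\norm{x_{\ell+1}-x_\ell}\le\alpha_\ell\norm{G_\ell}$ and $\norm{\nabla f(x_{\ell+1})-\nabla f(x_\ell)}\le L\alpha_\ell\norm{G_\ell}$. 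Each piece is therefore $\cO(\alpha_\ell^2\zeta\cdot\text{poly}(\Delta_\ell,\Delta_{\ell+1}))$. On the good events, $\Delta_\ell\le D:=K_0\Delta_0+\Gamma_1+2\Gamma_2\log(2(k+K_0)/\delta)$ up to scaling, and through Lemma~\ref{lem:ub_grad} and the ABC bounds this controls $\norm{\nabla f}$, $\norm V$ and $\norm{G_\ell}$. Summing gives $\cO(\tmix\,\text{poly}(D)/(k+K_0))$, and the boundary term at $\ell=k-1$ has size $\alpha_{k-1}\cdot\cO(\tmix(1+\Delta_{k-1}))$, also $\cO(1/(k+K_0))$.

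The delicate point is that $D$ itself contains $\Gamma_1$ and $\log(k/\delta)$, so the bound is circular. I would close it by choosing $K_0\ge\cfr_1=\Omega(\log(1/\delta))$ large enough that every extra factor $\Delta_\ell\le D/(\ell+K_0)\le D/K_0$ is small. The superlinear pieces are then absorbed into a fraction of $D$, and the remainder is covered by a choice of $\Gamma_1$ that depends on $L,L_g,A,B,C,a,\tmix$ but not on $k$ or $\delta$.
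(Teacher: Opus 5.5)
Your skeleton is the paper's: the first-failure decomposition with $\bP(\bar E_0)=0$, the union bound, and the pathwise inclusion via Lemma~\ref{lem:iter_Delta} using $\ind{E_\ell}=1$ on $\cap_{\ell<k}E_\ell$. Your bound on the deterministic terms and the summation by parts for the Poisson term (the paper's $(P),(Q),(R)$) also match. The step that fails is how you close the circular bound on the Poisson term.

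\textbf{The gap.} You set $D:=K_0\Delta_0+\Gamma_1+2\Gamma_2\log(2(k+K_0)/\delta)$, which grows with $k$, and then collapse $\Delta_\ell\le D/(\ell+K_0)\le D/K_0$. Your Poisson bound is at least linear in $\max_{\ell<k}\Delta_\ell$, with coefficient of order $D_2/(k+K_0)$. It therefore contains a contribution of order
$\frac{D_2}{K_0}\cdot\frac{2\Gamma_2\log(2(k+K_0)/\delta)}{k+K_0}$.
The statement leaves only the $k$-independent budget $\Gamma_2\log(2K_0/\delta)/(k+K_0)$ for this, because the whole threshold $\Gamma_2\log(2k/\delta)/(k+K_0)$ is reserved for $\sum_\ell\overline M_{\ell+1}$. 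This excess cannot be moved into $\Gamma_1$ either, since it grows with $k$. $K_0$ must be fixed before $k$, and the inclusion must hold for every $k\ge1$. Hence no choice of $K_0$ makes $D/K_0$ small uniformly in $k$, and the $\log k$ in the numerator eventually wins. Your closing step, ``choose $K_0$ large so that $\Delta_\ell\le D/K_0$ is small'', therefore does not go through.

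\textbf{The missing ingredient.} You need a $k$-uniform bound on the good events. Keep the $\ell$-dependent numerator $\Lambda(\ell,\delta)$ from the definition of $E_\ell$, and use that $\log(2\ell/\delta)/(\ell+K_0)$ is controlled by its value at scale $K_0$. This is Lemma~\ref{lem:Delta_const_bd}, which gives, for all $\ell$,
$\Delta_\ell\le\Lambda(\ell,\delta)/(\ell+K_0)\le\Delta_0+(\Gamma_1+2\Gamma_2\log(2K_0/\delta))/K_0=:\Delta_{\max}$.

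Plug this into Lemma~\ref{lem:bdd_markov_residue}(a), which bounds the Poisson term by $(D_1+D_2\Delta_{\max})/(k+K_0)$, and take $K_0\ge 2D_2$. The Poisson term is then at most
$(D_1+D_2\Delta_0+\Gamma_1/2+\Gamma_2\log(2K_0/\delta))/(k+K_0)$.
The $\Gamma_2\log(2K_0/\delta)$ summand in the definition of $E_k$ is exactly the slack built in for this. The choice $\Gamma_1=ea^2CL+2(D_1+D_2\Delta_0)$ then resolves the $\Gamma_1$-circularity through the factor $1/2$.

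\textbf{A smaller point.} There are no superlinear pieces to absorb. Each term in $(P),(Q),(R)$ is a product of two factors of order $\sqrt{\Delta}$ or $\sqrt{\Delta+C}$: $\|\nabla f\|\le\sqrt{2L\Delta}$ and $\|V\|,\|G\|\lesssim\sqrt{(2AL+B)\Delta+C}$. So the bound is affine in $\Delta_{\max}$, which is what makes the absorption with coefficient $D_2/K_0\le 1/2$ close exactly.
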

The above lemma, together with an application of the Azuma--Hoeffding inequality, completes the proof of Theorem~\ref{thm:concentration_markov}.

The proof of Theorem~\ref{thm:concentration_mart} follows the same steps as that of Theorem~\ref{thm:concentration_markov}, except that the terms $\widetilde{M}_{\ell+1}$ and $d_\ell$ are absent in this case, which simplifies the analysis.
\section{Applications}\label{sec:applications}

In this section, we discuss three applications of the SGD algorithm when the objective function is $\mu$-P\L ~and the gradient is corrupted by Markov noise. We provide high-probability and in-expectation bounds on the values of the corresponding objective functions of the SGD iterates. The proofs of the propositions in this section are deferred to Appendix \ref{sec:appli_proofs}.

\subsection{Token Algorithm for Decentralized Linear Regression}\label{subsec:token_lr}
Consider a network consisting of $M$ agents (nodes). Each node $i$ for $i\in\{1,2,\ldots, M\}$ locally stores dataset $(\bA\uc{i},\bb\uc{i})$, where $\bA\uc{i} \in \bR^{N_i \times d}$ and $\bb\uc{i} \in \bR^{N_i}$ denote the design matrix and the corresponding target vector, respectively. Here, $N_i$ is the number of samples available at node $i$, and $d$ is the feature dimension. Let $\theta$ denote the parameter vector to be learned. The local objective function at the $i$th node is defined as
\begin{align*}
    \tilde{\cL}(\theta;i) := \frac{1}{2 N_i} \norm{\bA\uc{i} \theta - \bb\uc{i}}^2.
\end{align*}
The global objective is to minimize $$\cL(\theta) := \frac{1}{2N} \norm{\bA \theta - \bb}^2,$$ where $\bA \in \bR^{N \times d}$ and $\bb \in \bR^{N}$ are the global design matrix and target, respectively, obtained by stacking all $\bA\uc{i}$'s and $\bb\uc{i}$'s, and $N := N_1 + N_2 + \ldots + N_M$. By expanding the squared norm, one can show that $\cL(\theta) = \sum_{i=1}^{M}{q_i \tilde{\cL}(\theta;i)}$, where $q_i = N_i/N$. Note that $\cL$ is not strongly convex if $\bA\theta = \bb$ has infinitely many solutions, as is commonly encountered in fields where obtaining samples is expensive and features are high-dimensional, for example, statistical genomics~\cite{mei2016efficient}.

Decentralized algorithms are often preferred in settings where data is privacy-sensitive, such as individuals' health or financial records. A commonly used approach in such scenarios is the \textbf{token algorithm} \cite{hendrikx2023token}. In this framework, a token traverses the network of nodes according to a Markov chain $\{Z_k\}$ defined over the set of nodes, while carrying the current iterate. The movement is governed by a transition kernel $p$, which is designed in such a way that its unique invariant distribution is $\pi = (q_1, q_2, \ldots, q_M)$. At time $k$, the node visited by the token, $Z_k$, receives the global iterate $\theta_k$ and performs one local gradient descent update. The decentralized SGD iteration is as follows: 

\begin{align}\label{sgd:token_lr}
    \theta_{k+1} &= \theta_k - \alpha_k \nabla \tilde{\cL}(\theta_k;Z_k),
\end{align}
where $\alpha_k$ is the stepsize. We can rewrite the SGD update equation as follows:
\begin{align*}
    \theta_{k+1} &= \theta_k - \alpha_k \nabla \cL(\theta_k) + \alpha_k \br{\nabla \cL(\theta_k) - \nabla \tilde{\cL}(\theta_k;Z_k)},
\end{align*}
where $\nabla \cL(\theta_k) - \nabla \tilde{\cL}(\theta_k;Z_k) $ constitutes the Markov noise. 
\begin{remark}
    For simplicity, we have assumed that the gradient step at node $Z_k$ is computed using the entire local dataset. In practice, agents can also perform minibatch SGD, where a subset of datapoints is sampled uniformly and independently from $\{1,\ldots, N_{Z_k}\}$. In this case, the update includes an additional martingale difference noise term arising from the gradient estimate. This noise depends on the current iterate $\theta_k$. Our analysis can accommodate this setting, since the assumptions on the gradient noise allow it to scale with the gradient.
\end{remark}

The following proposition states some important properties of the loss function and the token algorithm.
\begin{proposition}\label{prop:token_lr}
    \begin{enumerate}[(a)]
        \item $\cL$ is $(\sigma_{\min}(\bA)/N)$-P\L, where $\sigma_{\min}(\bA)$ is the smallest non-zero singular value of $\bA$.
        \item $\cL$ is $(\sigma_{\max}^2(\bA)/N)$-smooth, where $\sigma_{\max}(\bA)$ is the largest singular value of $\bA$.
        \item $\tilde{\cL}(\cdot;i)$ is $(\sigma_{\max}^2(\bA\uc{i})/N_i)$-smooth for every $i$.
        \item For every $i$ and $\theta$, $\|\nabla \tilde{\cL}(\theta;i)\|^2 \leq 2 N \frac{\sigma_{\max}^2(\bA\uc{i})}{N_i^2} \cL(\theta)$.
    \end{enumerate}
\end{proposition}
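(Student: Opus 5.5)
The plan is to verify all four parts by direct linear algebra, starting from the explicit gradients
\[
\nabla \cL(\theta) = \tfrac{1}{N}\bA^\top(\bA\theta-\bb), \qquad \nabla\tilde{\cL}(\theta;i) = \tfrac{1}{N_i}(\bA\uc{i})^\top(\bA\uc{i}\theta-\bb\uc{i}),
\]
and the constant Hessians $\bA^\top\bA/N$ and $(\bA\uc{i})^\top\bA\uc{i}/N_i$.

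For (b) and (c), the difference of gradients at $\theta,\theta'$ is the Hessian applied to $\theta-\theta'$. The operator norm of $\bA^\top\bA$ is $\sigma_{\max}^2(\bA)$, which gives (b). The same computation with $\bA\uc{i}$ and $N_i$ gives (c).

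For (d), I would bound
\[
\|\nabla\tilde{\cL}(\theta;i)\|^2 \le \frac{\sigma_{\max}^2(\bA\uc{i})}{N_i^2}\,\|\bA\uc{i}\theta-\bb\uc{i}\|^2 .
\]
Since $\bA\uc{i}\theta-\bb\uc{i}$ is a sub-block of the stacked residual $\bA\theta-\bb$, its norm is at most $\|\bA\theta-\bb\|$. Finally $\|\bA\theta-\bb\|^2 = 2N\cL(\theta)$, which is exactly the claimed bound. Note that (d) is stated with $\cL(\theta)$, not $\cL(\theta)-\cL^\star$, so no optimality argument is needed there. When this is plugged into \eqref{eq:gbdd}, the extra constant $\cL^\star$ is absorbed into $C$.

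Part (a) is the step requiring care, because $\bA$ may have a nontrivial kernel and $\bb$ need not lie in $\mathrm{range}(\bA)$. I would decompose $\bb=\bb_R+\bb_\perp$ with $\bb_R\in\mathrm{range}(\bA)$ and $\bb_\perp\in\ker(\bA^\top)$. Set $u=\bA\theta-\bb_R$, which lies in $\mathrm{range}(\bA)$. Pythagoras then gives
\[
\cL(\theta)=\frac{\|u\|^2+\|\bb_\perp\|^2}{2N},
\]
so $\cL^\star=\|\bb_\perp\|^2/(2N)$, attained by any least-squares solution, and $\cL(\theta)-\cL^\star=\|u\|^2/(2N)$. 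Since $\bA^\top\bb_\perp=0$, we have $\nabla\cL(\theta)=\bA^\top u/N$.

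The key fact is that for $u\in\mathrm{range}(\bA)=\ker(\bA^\top)^\perp$ one has $\|\bA^\top u\|\ge\sigma_{\min}(\bA)\|u\|$, where $\sigma_{\min}$ is the smallest nonzero singular value. I would justify this via the SVD: $u$ expands only in left singular vectors with nonzero singular values. It follows that
\[
\|\nabla\cL(\theta)\|^2\ge\frac{\sigma_{\min}^2(\bA)}{N^2}\|u\|^2=2\,\frac{\sigma_{\min}^2(\bA)}{N}\bigl(\cL(\theta)-\cL^\star\bigr).
\]
So the argument naturally yields the P\L~constant $\sigma_{\min}^2(\bA)/N$. This is dimensionally consistent with the smoothness constant $\sigma_{\max}^2(\bA)/N$ in (b), so the statement in (a) should be read with the singular value squared, and I would record it that way.
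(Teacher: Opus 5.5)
Your proof is correct. For (a) it is more self-contained than the paper's argument, and it is also more accurate. Parts (b) and (c) match the paper.

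For (a), the paper gives no computation. It cites \cite{karimi2016linear} and adjusts for the factor $1/N$. Your split $\bb=\bb_R+\bb_\perp$ replaces that citation with a short SVD argument. It explicitly handles a nontrivial kernel of $\bA$ and an inconsistent system, and it identifies $\cL^\star$ along the way.

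For (d), the paper applies Lemma~\ref{lem:ub_grad} to $\tilde{\cL}(\cdot;i)$, implicitly using $\inf\tilde{\cL}(\cdot;i)\ge 0$, to obtain $\|\nabla\tilde{\cL}(\theta;i)\|^2\le\sigma_{\max}^2(\bA\uc{i})\|\bA\uc{i}\theta-\bb\uc{i}\|^2/N_i^2$. You reach the same bound more directly from the operator norm. The paper then passes to the full residual and credits this step to the P\L~inequality, but what is actually used is your sub-block inequality. Your remark that $B\cL^\star$ is absorbed into $C$ when checking \eqref{eq:gbdd} fills in a step the paper leaves implicit.

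Your correction of the constant in (a) to $\sigma_{\min}^2(\bA)/N$, the smallest nonzero eigenvalue of $\bA^\top\bA/N$, is right. The printed constant $\sigma_{\min}(\bA)/N$ is actually false whenever $\sigma_{\min}(\bA)<1$, and a concrete example justifies this better than dimensional consistency. Take $\bA=sI_d$, $\bb=0$, $0<s<1$. Then $\|\nabla\cL(\theta)\|^2=\frac{s^2}{N}\cdot 2\bigl(\cL(\theta)-\cL^\star\bigr)$ exactly, so $s/N$ is not a valid P\L~constant. It would even exceed the smoothness constant $s^2/N$ from (b), contradicting $\mu\le L$.

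The correction carries over to the corollary that follows. There the stepsize condition should read $a\ge 2N/\sigma_{\min}^2(\bA)$, and the bounds should have $\sigma_{\min}^4(\bA)$ rather than $\sigma_{\min}^2(\bA)$ in the denominator.
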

Note that the problem setup satisfies all required assumptions stated in Section~\ref{sec:not&assum}, and thus Theorem~\ref{thm:concentration_markov} and Theorem~\ref{thm:expectation_markov} directly hold for this problem.

\begin{corollary}
    Consider the decentralized SGD algorithm~\eqref{sgd:token_lr} for linear regression. Let $\tmix$ denote the mixing time of the Markov chain $\{Z_k\}$ and let $\alpha_k = \frac{a}{k+K_0}$, where $a\geq 2N/\sigma_{\min(\bA)}$. Then there exists $\mathfrak{a}_1 = \Omega(\log(\tfrac{1}{\delta}))$ and a constant $\mathfrak{a}_2$, such that
    \begin{enumerate}[(a)]
        \item If $K_0 > \mathfrak{a}_1$, then with probability at least $1 - \delta$,
        \begin{align*}
            \cL(\theta_k) - \inf_{\theta}{\cL(\theta)} &\leq \cO\br{\frac{\tmix^2 N^2 \log{\br{\frac{k}{\delta}}}}{\sigma_{\min}(\bA)^2 k}},\quad \forall k \in \{1,2,\ldots\}.
        \end{align*}
        \item If $K_0 > \mathfrak{a}_2$, then
        \begin{align*}
            \bE\sqbr{\cL(\theta_k) - \inf_{\theta}{\cL(\theta)}} &\leq \cO\br{\frac{\tmix N^2}{\sigma_{\min}(\bA)^2 k}},\quad \forall k \in \{1,2,\ldots\}.
        \end{align*}
    \end{enumerate}
\end{corollary}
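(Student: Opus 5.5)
The plan is to verify the hypotheses of Theorems~\ref{thm:concentration_markov} and \ref{thm:expectation_markov} for the token iteration \eqref{sgd:token_lr} and then read off the constants. Set $f=\cL$, $x_k=\theta_k$, $g(\theta,z)=\nabla\tilde{\cL}(\theta;z)$ and $M_{k+1}\equiv 0$. With these choices \eqref{sgd:token_lr} is exactly \eqref{iter:sgd}, and Assumption~\ref{assum:martingale} holds trivially.

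\textbf{Step 1: the Markov assumption.} The state space $\{1,\ldots,M\}$ is finite, hence compact. I take the chain to be irreducible and aperiodic with invariant law $\pi=(q_1,\ldots,q_M)$, which is how the kernel is designed. Geometric ergodicity with some finite $\tmix$ then follows. The stationary-gradient identity is direct:
\[
\int g(\theta,z)\,\pi(dz)=\sum_i q_i\nabla\tilde{\cL}(\theta;i)=\nabla\cL(\theta).
\]
This verifies Assumption~\ref{assum:Markov}.

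\textbf{Step 2: regularity and growth bounds.} Proposition~\ref{prop:token_lr}(a),(b) supply the P\L\ constant and $L$-smoothness, i.e.\ \eqref{eq:pl_cond} and Assumption~\ref{assum:smoothness}. Proposition~\ref{prop:token_lr}(c) gives the Lipschitz condition \eqref{eq:lipg} with $L_g=\max_i\sigma_{\max}^2(\bA\uc{i})/N_i$. For the growth bound \eqref{eq:gbdd}, and hence Assumptions~\ref{assum:noisy_grad_norm} and \ref{assum:expt_noisy_grad_norm} since $G_k=g(\theta_k,Z_k)$, I use Proposition~\ref{prop:token_lr}(d):
\[
\|\nabla\tilde{\cL}(\theta;i)\|^2\le 2N\frac{\sigma_{\max}^2(\bA\uc{i})}{N_i^2}\cL(\theta).
\]
Then write $\cL(\theta)=(\cL(\theta)-\cL^\star)+\cL^\star$, where $\cL^\star=\inf\cL\ge0$. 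This gives $A=0$, $B=2N\max_i\sigma_{\max}^2(\bA\uc{i})/N_i^2$ and $C=B\cL^\star$. The constant is finite because a least-squares objective attains its infimum.

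\textbf{Step 3: conclusion.} The stepsize condition $a\ge 2/\mu$ becomes $a\ge 2N/\sigma_{\min}(\bA)$ under the P\L\ constant of Proposition~\ref{prop:token_lr}(a). Applying Theorem~\ref{thm:concentration_markov} with $\mathfrak a_1=\cfr_1$ gives part (a). Applying Theorem~\ref{thm:expectation_markov} with $\mathfrak a_2=\cfr_3$ gives part (b). In both cases the $1/\mu^2$ factor becomes $N^2/\sigma_{\min}(\bA)^2$, matching the stated rates. The remaining problem constants ($L$, $L_g$, $B$, $C$) are absorbed into the $\cO(\cdot)$.

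\textbf{Main obstacle.} The only nonroutine point is the ABC bound, specifically that $C$ must be a finite constant and not something iterate-dependent. This is handled by the decomposition in Step~2 together with attainment of $\cL^\star$. A second point to state explicitly is that the designed chain must be irreducible and aperiodic so that a finite $\tmix$ exists. I expect no other difficulty.
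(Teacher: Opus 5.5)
Your proposal is correct relative to the paper's results and is essentially the paper's own argument. You identify $f=\cL$, $g(\theta,z)=\nabla\tilde{\cL}(\theta;z)$ and $M_{k+1}\equiv 0$, check the standing assumptions through Proposition~\ref{prop:token_lr}, and invoke Theorems~\ref{thm:concentration_markov} and \ref{thm:expectation_markov}. Your split $\cL=(\cL-\cL^\star)+\cL^\star$, giving $A=0$ and $C=B\cL^\star$, makes explicit a step the paper leaves implicit; $C$ is finite simply because $0\le\cL^\star\le\cL(0)$, so attainment of the infimum is not needed.

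One caveat you inherit from the paper rather than introduce: for $\cL(\theta)=\frac{1}{2N}\norm{\bA\theta-\bb}^2$ the sharp P\L~constant is $\sigma_{\min}^2(\bA)/N$, not $\sigma_{\min}(\bA)/N$. Consequently Proposition~\ref{prop:token_lr}(a), the threshold $a\ge 2N/\sigma_{\min}(\bA)$, and the factor $N^2/\sigma_{\min}(\bA)^2$ are only justified when $\sigma_{\min}(\bA)\ge 1$. In general, $\sigma_{\min}(\bA)$ should be replaced by $\sigma_{\min}^2(\bA)$ throughout.
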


\subsection{Supervised Learning with Subsampling for Privacy Amplification}
We consider the supervised learning task where, given a dataset $D = \{\bA(i),\bb(i)\}_{i=1}^{N}$, $\bA(i) \in \bR^d, \bb(i) \in \bR$, and a parameterized class of functions $\phi(w;\bA(i))$, $w\in \bR^m$, the learner has to learn a parameter $w\ust$ such that $\phi(w\ust;\bA(i)) \approx \bb(i)$ for all $i \in \{1,2,\ldots, N\}$. $\bA \in \bR^{N\times d}$ denotes the collection of feature maps where the $i$-th row of $\bA$ is $\bA(i)^\top$, and $\bb$ denotes the target vector. We wish to minimize the squared loss,
\begin{align*}
    \cL(w) &:= \frac{1}{2N}\sum_{i=1}^{N}{(\phi(w;\bA(i)) - \bb(i))^2}.
\end{align*}

Differentially Private Stochastic Gradient Descent relies on randomized subsampling for privacy amplification~\cite{abadi2016deep}. Popular subsampling schemes with correlated noise include cyclic Poisson subsampling~\cite{choquette2023amplified}, in which the dataset is partitioned into subsets, and the datapoints are sampled from these subsets in a round-robin fashion, and balls-in-bins subsampling~\cite{choquette2025near}, which mimics shuffling by assigning each example to a random batch index within an epoch. Recently, \cite{dong2026privacy} introduced $b$-min-sep subsampling, a generalization of both Poisson and balls-in-bins subsampling.

In $b$-min-sep subsampling, the minibatch at time $k$, $B_k$, is constructed as follows. First, a subset $D_k$ of the dataset $D$ is created by excluding all samples that appeared in the previous $b-1$ minibatches, i.e., $D_k = D\setminus \cup_{i=1}^{b-1}{B_{k-i}}$. Then, each datapoint in $D_k$ is independently included in the current minibatch $B_k$ with a probability of $\rho$. This mechanism ensures that no example participates more than once in any window of $b$ steps, preserving the necessary structure for an efficient privacy analysis. 

This subsampling scheme can be equivalently described through a Markov chain governing the availability of each datapoint. Specifically, each datapoint occupies one of $b$ states labeled $0,1\ldots, b-1$, where state $0$ indicates that the datapoint is available for sampling in the next timestep. If a datapoint is selected in the minibatch, it transitions to state $b-1$. Thereafter, the state deterministically decreases by one at each step until it reaches $0$, where it again becomes eligible for sampling. Let $\zeta_k$ denote the $N$-dimensional random vector such that the $i$-th coordinate of $\zeta_k$, $\zeta_k(i)$, is the state of the $i$-th datapoint at time $k$. Define the loss function at time $k$,
$$
\tilde{\cL}(w;\zeta_k) = \frac{1}{2} \frac{\sum_{j=1}^{N}{\ind{\zeta_k(j) = b-1} (\phi(w;\bA(j)) - \bb(j))^2}}{\sum_{i=1}^{N}{\ind{\zeta_k(i)=b-1}}},
$$
with the convention that $\tilde{\cL}(w;\zeta_k)=0$ if $\ind{\zeta_k(i)=b-1} = 0$ for all $i$.~The $k$-th iteration of the SGD is then as follows:
\begin{align}\label{sgd:dp}
    w_{k+1} = w_k - \alpha_k \nabla \tilde{\cL}(w_k;\zeta_k),~k=0,1,\ldots.
\end{align}
Let $\pi$ denote the stationary distribution of the Markov chain $\{\zeta_k\}$. Although the exact form of the distribution $\pi$ is non-trivial, the following lemma shows that the stationary expectation of $\nabla \tilde{\cL}(w;\zeta_k)$ is the loss function we aim to minimize. 
\begin{lemma}\label{lem:markov_dpsgd}
    For all $w\in\bR^m$, $\cL(w) = \bE_{\zeta \sim \pi}\sqbr{\tilde{\cL}(w;\zeta)}$.
\end{lemma}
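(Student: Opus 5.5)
The plan is to compute the stationary expectation of $\tilde{\cL}(w;\zeta)$ one datapoint at a time, using the symmetry of the $b$-min-sep mechanism across datapoints. Write $h_j(w) := \tfrac12(\phi(w;\bA(j))-\bb(j))^2$, so that $\cL(w)=\frac1N\sum_{j=1}^N h_j(w)$. Write $I_j := \ind{\zeta(j)=b-1}$ and $S:=\sum_{i=1}^N I_i$. By the empty-batch convention, $\tilde{\cL}(w;\zeta)=\sum_{j} h_j(w)\, I_j/S$ on $\{S\ge 1\}$ and $\tilde{\cL}(w;\zeta)=0$ otherwise. By linearity,
\begin{align*}
\bE_{\zeta\sim\pi}\sqbr{\tilde{\cL}(w;\zeta)} = \sum_{j=1}^N h_j(w)\, c_j, \qquad c_j := \bE_\pi\sqbr{\frac{I_j}{S}\ind{S\ge 1}}.
\end{align*}
It therefore suffices to show $c_j = 1/N$ for every $j$.

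\textbf{Step 1 (exchangeability of $\pi$).} The transition kernel of $\{\zeta_k\}$ commutes with every permutation $\sigma$ of the coordinates $\{1,\dots,N\}$. The reasons are that the deterministic countdown acts coordinatewise, the set of available datapoints $D_k$ is defined symmetrically, and inclusion is independent with the same probability $\rho$ for every datapoint. Hence the pushforward $\pi\circ\sigma^{-1}$ is also invariant. By the uniqueness of the invariant distribution (Assumption \ref{assum:Markov}, which I will verify for this finite chain via irreducibility and aperiodicity when $\rho\in(0,1)$), $\pi$ is exchangeable. Because $S$ is permutation invariant, it follows that all the $c_j$ are equal.

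\textbf{Step 2 (summing).} Summing over $j$ gives $\sum_j c_j = \bE_\pi\sqbr{\ind{S\ge1}\sum_j I_j/S} = \pi(S\ge 1)$. Therefore $c_j=\pi(S\ge1)/N$ for each $j$.

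\textbf{Step 3 (the empty-batch event), the main obstacle.} The identity in the statement holds exactly if and only if $\pi(S=0)=0$. My plan here is to examine the stationary law of the counts directly. Each datapoint, marginally, is a renewal process: it waits a geometric time in state $0$ and then takes a deterministic tour of length $b-1$. I would therefore try to compute $\pi(S=0)$ through the joint occupancy of state $b-1$.

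If this probability turns out to be positive, as it is for generic $\rho<1$ since all coordinates can sit in state $0$ together, then Step 2 yields only the proportionality $\bE_\pi\sqbr{\tilde{\cL}}=\pi(S\ge1)\cL$. In that case the statement as worded would need the expectation to be read conditional on a nonempty batch, $\bE_\pi\sqbr{\tilde{\cL}(w;\zeta)\mid S\ge 1}=\cL(w)$, which Steps 1--2 prove verbatim. I would flag this reading explicitly rather than claim the unconditional identity.

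Either way, the stationary mean gradient is a positive multiple of $\nabla\cL$. That is what is actually needed later, in Assumption \ref{assum:Markov}: the constant can be absorbed into the stepsize, which rescales $\mu$ and $L$ by $\pi(S\ge1)$.
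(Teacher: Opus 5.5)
Your Step 3 suspicion is correct. It points to an error in the lemma and in the paper's proof, not a weakness in your argument.

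The coordinates $\zeta_k(1),\dots,\zeta_k(N)$ are independent copies of the single-datapoint chain on $\{0,\dots,b-1\}$: from $0$ it jumps to $b-1$ with probability $\rho$ and otherwise stays, and from $j\ge 1$ it moves to $j-1$. This chain's invariant law $\tilde\pi$ has $\tilde\pi(b-1)=\rho/(1+(b-1)\rho)$. Hence $\pi=\tilde\pi^{\otimes N}$, and $\pi(S=0)=(1-\tilde\pi(b-1))^N>0$ for every $b\ge 2$ and $\rho\in(0,1)$. So the identity as stated is false. What actually holds is $\bE_{\zeta\sim\pi}[\tilde\cL(w;\zeta)]=(1-(1-\tilde\pi(b-1))^N)\,\cL(w)$, i.e.\ $\bE_\pi[\tilde\cL(w;\zeta)\mid S\ge 1]=\cL(w)$, which is exactly what your Steps 1--2 give.

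The paper reaches $\cL(w)$ only by mishandling the empty batch. It decomposes over the batch size (your $S$, the paper's $T$) for $\ell\in\{0,\dots,N\}$, and assigns the $\ell=0$ term the conditional mean $\frac1N\sum_j(\phi(w;\bA(j))-\bb(j))^2$ instead of the convention value $0$. It also drops the $\tfrac12$ in $\tilde\cL$, but that slip is harmless.

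So replace the tentative ``I would try to compute $\pi(S=0)$'' with this one-line product-form computation and state the corrected identity outright. Your fix is then the right way to rescue the downstream corollary: apply the theory to $f=\pi(S\ge 1)\cL$, rescaling the P\L~and smoothness constants by $\pi(S\ge1)$.

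On the route itself, the two arguments differ. The paper takes its symmetry from the product form: $S\sim\mathrm{Bin}(N,\tilde\pi(b-1))$, and given $S=\ell\ge 1$ the selected set is a uniform $\ell$-subset. That evaluates each conditional term explicitly and, done correctly, yields $\pi(S\ge1)$ in closed form.

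You use only that the kernel commutes with coordinate permutations and that $\pi$ is unique, then the identity $\sum_j c_j=\pi(S\ge 1)$, so you never compute a conditional law. Your argument is more general: it needs no independence across datapoints, so it covers any subsampling scheme that treats datapoints symmetrically. It does rely on uniqueness of the invariant law. For this finite chain, irreducibility when $\rho\in(0,1)$ suffices, and aperiodicity is not needed for that step. By itself it does not quantify $\pi(S=0)$, which is where the product form is the natural complement.
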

The SGD iteration can then be written as
\begin{align*}
    w_{k+1} = w_k - \alpha_k \nabla \cL(w_k) + \alpha_k (\nabla \cL(w_k) - \nabla \tilde{\cL}(w_k;\zeta_k)),~k=0,1,\ldots,
\end{align*}
where $\nabla \cL(w_k) - \nabla \tilde{\cL}(w_k;\zeta_k)$ constitutes the Markov noise. We need the following two technical assumptions on the model.

\begin{assumption}[Uniform conditioning]\label{assum:unif_cond}
    Let $\nabla\phi(w;\bA)$ be the differential of the map $\phi$ at $w$.~The tangent kernel of $\phi$ is defined as an $N \times N$ matrix $K(w) = \nabla \phi(w;\bA) \nabla \phi(w;\bA)^\top$. The smallest eigenvalue of the tangent kernel $K(w)$ is bounded below by $\mu$ for all $w$.
\end{assumption}

\begin{assumption}[Smoothness]\label{assum:smooth_dpsgd}
    The function $\tilde{\ell}_i(w) := \frac{1}{2} (\phi(w;\bA(j)) - \bb(j))^2$ is $L$-smooth for every $i \in \{1,2,\ldots,N\}$.
\end{assumption}

Assumption~\ref{assum:unif_cond} implies that $\cL(w)$ is $\mu$-P\L ~\cite{liu2022loss}. The same work showed that wide neural networks satisfy Assumption~\ref{assum:unif_cond} on a set of radius $R$, and $R$ is proportional to the width of the neural network. Assumption~\ref{assum:smooth_dpsgd} implies $L$-smoothness of $\cL(\cdot)$ and $\tilde{\cL}(\cdot,\zeta)$ for every $\zeta \in \{0,1,\ldots,b-1\}^N$. Moreover, Assumption~\ref{assum:smooth_dpsgd} implies the ABC condition. We show these results in the following proposition.

\begin{proposition}\label{prop:dp_sgd}
    \begin{enumerate}[(a)]
        \item Under Assumption~\ref{assum:unif_cond}, $\cL(\cdot)$ is $(\mu/N)$-P\L.
        \item Under Assumption~\ref{assum:smooth_dpsgd}, $\cL(\cdot;\zeta)$ is $L$-smooth for every $\zeta \in \{0,1, \ldots, b-1\}^N$. Further, $\cL(\cdot)$ is $L$-smooth.
        \item Under Assumption~\ref{assum:smooth_dpsgd}, for every $\zeta \in \{0,1, \ldots, b-1\}^N$,
        \begin{align*}
            \norm{\tilde{\cL}(w;\zeta)}^2 \leq \frac{2 L N}{\sum_{i=1}^{N}{\ind{\zeta(i)=b-1}}} \cL(w).
        \end{align*}
    \end{enumerate}
\end{proposition}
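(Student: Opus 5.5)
The plan is to prove the three parts separately. Each follows from writing the relevant function as an average of the per-sample losses $\tilde{\ell}_j(w)=\frac12(\phi(w;\bA(j))-\bb(j))^2$ and using elementary inequalities. Throughout, let $r(w)\in\bR^N$ be the residual vector with entries $r_j(w)=\phi(w;\bA(j))-\bb(j)$, so that $\cL(w)=\frac{1}{2N}\norm{r(w)}^2$. Let $S(\zeta)=\{j:\zeta(j)=b-1\}$ and $|S|=\sum_i\ind{\zeta(i)=b-1}$.

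For (a), the chain rule gives $\nabla\cL(w)=\frac1N\nabla\phi(w;\bA)^\top r(w)$. Hence
\begin{align*}
\norm{\nabla\cL(w)}^2=\frac{1}{N^2}\,r(w)^\top K(w)\,r(w)\geq\frac{\mu}{N^2}\norm{r(w)}^2=\frac{2\mu}{N}\,\cL(w),
\end{align*}
where the inequality is Assumption~\ref{assum:unif_cond}. Since $\cL\geq0$, we have $\inf_w\cL\geq 0$, so $\cL(w)\geq\cL(w)-\inf\cL$. This gives the P\L\ inequality \eqref{eq:pl_cond} with constant $\mu/N$.

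For (b), when $S$ is nonempty we have $\tilde{\cL}(w;\zeta)=\frac1{|S|}\sum_{j\in S}\tilde{\ell}_j(w)$. Its gradient is the same convex combination of the $\nabla\tilde{\ell}_j$. The triangle inequality together with Assumption~\ref{assum:smooth_dpsgd} then bounds $\norm{\nabla\tilde{\cL}(w;\zeta)-\nabla\tilde{\cL}(w';\zeta)}$ by $\frac{1}{|S|}\sum_{j\in S}L\norm{w-w'}=L\norm{w-w'}$. When $S=\emptyset$, $\tilde{\cL}\equiv0$ by convention, which is trivially $L$-smooth. The same argument applies to $\cL=\frac1N\sum_{j=1}^N\tilde{\ell}_j$.

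For (c), I read the left-hand side as $\norm{\nabla\tilde{\cL}(w;\zeta)}^2$, since this is what the ABC condition \eqref{eq:gbdd} requires. The steps are as follows.
\begin{itemize}
\item By Jensen's inequality (convexity of $\norm{\cdot}^2$), $\norm{\nabla\tilde{\cL}(w;\zeta)}^2\leq\frac1{|S|}\sum_{j\in S}\norm{\nabla\tilde{\ell}_j(w)}^2$.
\item Each $\tilde{\ell}_j$ is $L$-smooth and nonnegative. Lemma~\ref{lem:ub_grad} therefore gives $\norm{\nabla\tilde{\ell}_j(w)}^2\leq 2L(\tilde{\ell}_j(w)-\inf\tilde{\ell}_j)\leq 2L\,\tilde{\ell}_j(w)$.
\item Nonnegativity also gives $\sum_{j\in S}\tilde{\ell}_j(w)\leq\sum_{j=1}^N\tilde{\ell}_j(w)=N\cL(w)$.
\end{itemize}
Chaining these yields the bound $\frac{2LN}{|S|}\cL(w)$. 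The empty case is trivial because the gradient is then zero.

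I do not expect a real obstacle. The one point needing care is applying Lemma~\ref{lem:ub_grad}, which is stated relative to the infimum, and this is handled by nonnegativity of the losses. It is also worth remarking that $\inf\cL$ need not be zero, and part (a) remains valid because $\cL-\inf\cL\leq\cL$.
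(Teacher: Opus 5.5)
Your proposal is correct and follows the paper's proof in substance. Part (b) is the same triangle-inequality-over-the-average argument. In (c) the paper applies Lemma~\ref{lem:ub_grad} once to $\tilde{\cL}(\cdot;\zeta)$, which is $L$-smooth by (b), rather than using Jensen and the lemma per sample, and then likewise drops the indicators by nonnegativity. Your chain tracks the constant cleanly, whereas the paper's written version has two compensating factor-of-$2$ slips.

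The only real difference is (a). The paper just cites Theorem~1 of \cite{liu2022loss}, while your direct computation $\norm{\nabla\cL(w)}^2 = N^{-2}\, r(w)^\top K(w)\, r(w) \geq \frac{2\mu}{N}\cL(w)$, together with $\cL - \inf\cL \leq \cL$, makes that part self-contained.
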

Having shown that the loss function and the noisy gradient samples satisfy all assumptions from Section \ref{sec:not&assum}, we can now state the following convergence results.
\begin{corollary}
    Consider the SGD algorithm~\eqref{sgd:dp} for the differentially private supervised learning setup described above that satisfies Assumptions~\ref{assum:unif_cond} and \ref{assum:smooth_dpsgd}. Let $\tmix$ denote the mixing time of the Markov chain $\{\zeta_k\}$ and let $\alpha_k = \frac{a}{k+K_0}$, where $a\geq 2N/\mu$. Then there exists $\mathfrak{a}_3 = \Omega(\log(\tfrac{1}{\delta}))$ and a constant $\mathfrak{a}_4$, such that
    \begin{enumerate}[(a)]
        \item If $K_0 > \mathfrak{a}_3$, then with probability at least $1 - \delta$,
        \begin{align*}
            \cL(w_k) - \inf_{w}{\cL(w)} &\leq \cO\br{\frac{\tmix^2 N^2 \log{\br{\frac{k}{\delta}}}}{\mu^2 k}},\quad \forall k \in \{1,2,\ldots\}.
        \end{align*}
        \item If $K_0 > \mathfrak{a}_4$, then
        \begin{align*}
            \bE\sqbr{\cL(w_k) - \inf_{w}{\cL(w)}} &\leq \cO\br{\frac{\tmix N^2}{\mu^2 k}},\quad \forall k \in \{1,2,\ldots\}.
        \end{align*}
    \end{enumerate}
\end{corollary}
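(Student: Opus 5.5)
The plan is to verify that the SGD iteration \eqref{sgd:dp} is an instance of the general scheme \eqref{iter:sgd}, and then to read off parts (a) and (b) from Theorem~\ref{thm:concentration_markov} and Theorem~\ref{thm:expectation_markov}. The dictionary is as follows: $x_k = w_k$, $f=\cL$, $Z_k=\zeta_k$, $g(w,\zeta)=\nabla\tilde{\cL}(w;\zeta)$, and $M_{k+1}\equiv 0$. With $M\equiv 0$, Assumption~\ref{assum:martingale} holds trivially. The P\L~constant of $\cL$ is $\mu/N$ by Proposition~\ref{prop:dp_sgd}(a). Hence the stepsize requirement $a\ge 2/(\mu/N)=2N/\mu$ is exactly the one in the statement, and the factor $1/\mu^2$ in the theorems becomes $N^2/\mu^2$. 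Smoothness of $\cL$ (Assumption~\ref{assum:smoothness}) and the Lipschitz part \eqref{eq:lipg} of Assumption~\ref{assum:lipg}, with $L_g=L$, both come from Proposition~\ref{prop:dp_sgd}(b).

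Next I would verify the ABC bounds \eqref{bdd:noisy_grad_norm} and \eqref{eq:gbdd}, reading Proposition~\ref{prop:dp_sgd}(c) as a bound on $\|\nabla\tilde{\cL}(w;\zeta)\|^2$.
\begin{itemize}
\item If some coordinate of $\zeta$ equals $b-1$, the denominator is at least $1$, so $\|g(w,\zeta)\|^2\le 2LN\,\cL(w)$.
\item If no coordinate equals $b-1$, then $\tilde{\cL}\equiv 0$ by convention, and the bound holds trivially.
\end{itemize}
Writing $\cL(w)=(\cL(w)-\cL^\star)+\cL^\star$ with $\cL^\star=\inf_w\cL(w)\ge 0$ gives the bound with $A=0$, $B=2LN$ and $C=2LN\cL^\star$. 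Since $G_k=g(w_k,\zeta_k)$, this almost-sure bound also implies the conditional bound of Assumption~\ref{assum:expt_noisy_grad_norm}. For the unbiasedness requirement $\nabla\cL(w)=\int g(w,\zeta)\,\pi(d\zeta)$, I would use Lemma~\ref{lem:markov_dpsgd} and differentiate under the expectation. This is legitimate because the state space $\{0,\dots,b-1\}^N$ is finite, so the expectation is a finite sum.

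The remaining and main step is Assumption~\ref{assum:Markov}: existence of a unique invariant distribution together with uniform geometric ergodicity with some finite $\tmix$. The state space is finite, hence compact. I would use a Doeblin-type argument.
\begin{itemize}
\item \emph{Reaching all-zero.} From any state, with probability at least $(1-\rho)^{N(b-1)}>0$ no datapoint is selected for $b-1$ consecutive steps. All counters then decrease deterministically to $0$, so the all-zero state $\mathbf{0}$ is reached.
\item \emph{Aperiodicity.} From $\mathbf{0}$ the chain stays at $\mathbf{0}$ with probability $(1-\rho)^N>0$, so $\mathbf{0}$ has a self-loop.
\item \emph{Minorization.} Combining the two, $p^{(b-1)}(\{\mathbf{0}\}\mid z)\ge \epsilon := (1-\rho)^{N(b-1)}$ uniformly in $z$.
\end{itemize}
This minorization yields a unique invariant $\pi$, supported on the single recurrent class containing $\mathbf{0}$. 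It also yields the contraction $\sup_z\|p^{(n)}(\cdot\mid z)-\pi\|_{TV}\le 2(1-\epsilon)^{\lfloor n/(b-1)\rfloor}$. From this contraction, a finite $\tmix$ satisfying \eqref{def:tmix} follows by taking $\tmix$ a suitable multiple of $(b-1)/\epsilon$.

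The corollary is stated in terms of whatever $\tmix$ the chain has, so only its finiteness is needed, not a sharp value. With all assumptions checked, I would apply Theorem~\ref{thm:concentration_markov} to obtain part (a), with $\mathfrak{a}_3=\cfr_1$, and Theorem~\ref{thm:expectation_markov} to obtain part (b), with $\mathfrak{a}_4=\cfr_3$, substituting $\mu\mapsto\mu/N$ in both.
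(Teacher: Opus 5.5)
Your route is the paper's own. The paper also just verifies the standing assumptions via Proposition~\ref{prop:dp_sgd} and Lemma~\ref{lem:markov_dpsgd} and invokes Theorems~\ref{thm:concentration_markov} and~\ref{thm:expectation_markov}. Your Doeblin argument usefully supplies the ergodicity check the paper takes for granted, and your martingale, Lipschitz ($L_g=L$) and ABC ($A=0$, $B=2LN$, $C=2LN\cL^\star$) checks are fine.

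\textbf{The gap.} The step that fails is unbiasedness, where you cite Lemma~\ref{lem:markov_dpsgd}. That lemma is false, because of the very empty-batch convention you used in your ABC check. Your own minorization gives $\pi(\{\mathbf 0\})=\int p^{(b-1)}(\{\mathbf 0\}\mid z)\,\pi(dz)\ge(1-\rho)^{N(b-1)}>0$. For $b\ge 2$ no coordinate of $\mathbf 0$ equals $b-1$, so $\tilde\cL(\cdot;\mathbf 0)\equiv 0$. Exactly: under $\pi$ the coordinates are i.i.d., each equal to $b-1$ with probability $p:=\rho/(1+(b-1)\rho)$. With $T:=\sum_i\ind{\zeta(i)=b-1}$, exchangeability gives $\bE_\pi[\tilde\cL(w;\zeta)\mid T=\ell]=\cL(w)$ only for $\ell\ge 1$, whereas $\{T=0\}$ contributes $0$. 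The paper's proof of the lemma wrongly assigns that term the value $\cL(w)$. Hence
\[
\int \nabla\tilde\cL(w;\zeta)\,\pi(d\zeta)=c\,\nabla\cL(w),\qquad c:=\pi(T\ge 1)=1-(1-p)^N\in(0,1),
\]
and the last requirement of Assumption~\ref{assum:Markov} fails for $f=\cL$.

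\textbf{Why it matters.} What your verification actually establishes is the setting of the theorems for $f:=c\,\cL$. This $f$ is $(c\mu/N)$-P\L~and $cL$-smooth with the same minimizers, with $B=2LN/c$. The theorems then need $a\ge 2N/(c\mu)$, and they yield the stated rates for $\cL(w_k)-\inf\cL=(f(w_k)-f^\star)/c$ with $c$-dependent constants.

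Under only $a\ge 2N/\mu$ the corollary can fail. Take $N=1$, $b=2$, $m=1$, $\phi(w;\cdot)=w$ (so $\mu=L=1$) and $a=2$. Then $w_{k+1}-\bb(1)=(1-\alpha_k\ind{\zeta_k=1})(w_k-\bb(1))$, and state $1$ has stationary frequency $c=\rho/(1+\rho)$. For $w_0\neq\bb(1)$ one gets $\cL(w_k)=k^{-4c+o(1)}$ almost surely. By Jensen applied to $\log\cL(w_k)$, also $\bE[\cL(w_k)]\gtrsim k^{-4c}$. Both are slower than $1/k$ once $\rho<1/3$. So the argument can only be completed after strengthening the stepsize condition to $a\ge 2N/(c\mu)$ (equivalently, rescaling the update by $1/c$).

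\textbf{A separate minor point.} With the paper's normalization $\|\xi\|_{TV}=|\xi|(\cZ)$, condition \eqref{def:tmix} already fails at $k=0$ for this chain, since it would require $\pi(\{z\})\ge 1/2$ for every $z$. So your prefactor $2$ cannot be absorbed into the choice of $\tmix$. Condition \eqref{def:tmix} has to be read with a constant prefactor, which only changes constants.
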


\subsection{Online System Identification} Let the process $\{Z_k\}$ evolve as $Z_{k+1} = A\lst Z_k + w_k$, where $A\lst \in \bR^{d \times d}$, and $w_k,k\geq 0$, is i.i.d. noise with zero mean and bounded variance. The objective of the online system identification problem is to estimate $A\lst$ ~\cite{even23a,kowshik2021streaming}. Consider that an optimizer is trying to estimate the system parameter $A\lst$ by minimizing the squared distance loss,
\begin{align*}
    \cL(A) := \frac{1}{2} \int{\norm{Az - A\lst z}^2 d\pi(z)}.
\end{align*}

Here, $\pi$ denotes the stationary distribution of the Markov process $\{Z_k\}$. The following assumption guarantees its existence and uniqueness.
\begin{assumption}\label{assum:stable}
    The eigenvalues of $A\lst$ are strictly inside the unit disc, i.e., $\lambda_{\max} := \max_{i=1,2,\ldots,d}{\abs{\lambda_i}} < 1$, where $\lambda_i$'s are the eigenvalues of $A\lst$.
\end{assumption}
\begin{proposition}\label{prop:stationary_dist}
    Let $A\lst$ satisfy Assumption~\ref{assum:stable}. Then there exists a unique stationary distribution $\pi$ for the Markov chain $\{Z_k\}$. Moreover, for any initial condition $Z_0$, the distribution of $Z_k$ converges to $\pi$ at a geometric rate.
\end{proposition}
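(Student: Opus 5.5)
We prove Proposition~\ref{prop:stationary_dist} by unrolling the linear recursion and using the spectral condition in Assumption~\ref{assum:stable}. Iterating gives
\begin{align*}
    Z_k = A\lst^k Z_0 + \sum_{j=0}^{k-1} A\lst^{k-1-j} w_j .
\end{align*}
Since $\lambda_{\max}<1$, Gelfand's formula gives, for any $\rho\in(\lambda_{\max},1)$, a constant $c_\rho$ with $\norm{A\lst^k}\le c_\rho \rho^k$ for all $k$.

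\emph{Existence.} Because the $w_j$ are i.i.d., the sum $\sum_{j=0}^{k-1}A\lst^{k-1-j}w_j$ has the same law as $S_k:=\sum_{i=0}^{k-1}A\lst^{i}w_i$. The series $S_\infty:=\sum_{i\ge0}A\lst^i w_i$ converges in $L^2$, and also almost surely, since
\begin{align*}
    \sum_i \bE\norm{A\lst^i w_i}\le c_\rho \sqrt{\bE\norm{w_0}^2}\sum_i\rho^i<\infty .
\end{align*}
Let $\pi$ be the law of $S_\infty$. It is invariant: if $Z\sim\pi$ is independent of $w$, then $A\lst Z + w$ has the same law as $w_0' + \sum_{i\ge0}A\lst^{i+1}w_{i+1}'$, which is again $S_\infty$ in law.

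\emph{Geometric convergence and uniqueness.} We couple two chains driven by the same noise, one started at $Z_0$ and one at $Z_0'\sim\pi$ (drawn independently of the noise). Their difference is exactly $A\lst^k(Z_0-Z_0')$. Hence, in Wasserstein-2 distance,
\begin{align*}
    W_2(\mathcal{L}(Z_k),\pi)\le c_\rho\rho^k\br{\bE\norm{Z_0-Z_0'}^2}^{1/2},
\end{align*}
which is a geometric rate whenever $Z_0$ has a finite second moment; for a deterministic initial condition this is immediate. Uniqueness follows from the same bound. If $\pi'$ is another invariant law with a finite second moment, then $W_2(\pi',\pi)\le c_\rho\rho^k\cdot\mathrm{const}\to0$, so $\pi'=\pi$. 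For invariant laws without moments, we instead argue with weak convergence: $A\lst^k Z_0\to0$ almost surely, so $\mathcal{L}(Z_k)\Rightarrow\pi$ from any start, and an invariant $\pi'$ must therefore equal this limit.

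\emph{Main obstacle.} The delicate point is the sense of convergence. Convergence in total variation, as in Assumption~\ref{assum:Markov}, would need extra regularity of the noise, for instance $w_k$ having a density bounded below on a set. With such a density we would verify a Foster--Lyapunov drift for $V(z)=z^\top P z$ together with a minorization on sublevel sets, where $P$ solves the Lyapunov equation $A\lst^\top P A\lst - P = -I$. We would then invoke standard geometric ergodicity theory (Meyn--Tweedie). Without further assumptions on the noise, we would state the result in the Wasserstein sense given by the coupling above, which is the rate the coupling argument naturally produces.
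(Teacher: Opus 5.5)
Your proposal is correct and follows essentially the paper's route. You unroll the recursion and realize $\pi$ as the law of the $L^2$-convergent series $\sum_{i\ge 0}A_\star^i w_i$ (the paper indexes the noise backwards in time rather than reversing the i.i.d. sequence in law). You then couple with a stationary copy driven by the same noise, so that $Z_k-\tilde Z_k=A_\star^k(Z_0-\tilde Z_0)$ gives a geometric rate in a Wasserstein sense (the paper bounds $\bE\norm{Z_k-\tilde Z_k}$, you use $W_2$), and the uniqueness steps differ only slightly.

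Your caveat is also well founded. Neither argument yields the uniform total-variation mixing that Assumption~\ref{assum:Markov} demands, and without extra regularity of the noise it can fail outright. For example, scalar $A_\star=1/3$ with $w_k=\pm1$ has a Cantor-type stationary law, and no finitely supported $\mathcal{L}(Z_k)$ approaches it in total variation.
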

The loss observed at time $k$ is denoted by $\tilde{\cL}(A,Z_k)$, where
\begin{align*}
    \tilde{\cL}(A,Z_k) := \frac{1}{2} \norm{A Z_k - A\lst Z_k}^2.
\end{align*}
We consider the SGD update,
\begin{align}\label{sgd:sysid}
    A_{k+1} &= A_k - \alpha_k (A_k Z_k - Z_{k+1}) Z_k^\top.
\end{align}
Replacing $Z_{k+1}$ with $A\ust Z_k +w_k$, we rewrite the update as follows:
\begin{align*}
    A_{k+1} &= A_k - \alpha_k (A_k Z_k - A\lst Z_k) Z_k^\top + \alpha_k w_k Z_k^\top \\
    &= A_k - \alpha_k \nabla \cL(A_k) + \alpha_k \br{\nabla \cL(A_k) - \nabla \tilde{\cL}(A_k,Z_k)} + \alpha_k w_k Z_k^\top,
\end{align*}
where $\nabla \cL(A_k) - \nabla \tilde{\cL}(A_k,Z_k)$ is the Markovian noise and $w_k Z_k^\top$ is the martingale difference noise component. The following assumption is required for the Markov chain to lie in a compact state space, and for the P\L~condition to hold. 

\begin{assumption}\label{assum:sysid}
    (i) There exists a constant $B>0$ such that $\norm{w_k} \leq B$ almost surely for every $k$.
    (ii) $\Sigma := \int{z z^\top d\pi(z)}$ is strictly positive definite with minimum eigenvalue $\mu_{\min} > 0$.
\end{assumption}

\begin{proposition}\label{prop:sysid}
    Under Assumption~\ref{assum:stable} and Assumption~\ref{assum:sysid},
    \begin{enumerate}[(a)]
        \item $\norm{Z_k} \leq \max{\{\norm{Z_0}, B/(1-\lambda_{\max})\}}$ almost surely for every $k$.
        \item $\cL(\cdot)$ is $\mu_{\min}$-P\L.
        \item $\cL(\cdot)$ is $\mu_{\max}$-smooth, where $\mu_{\max}$ is the largest eigenvalue of $\Sigma$.
        \item $\norm{\nabla \tilde{\cL}(A_k,Z_k)}_F^2 \leq \frac{2 B^4}{(1 - \lambda_{\max})^4 \mu_{\min}} \cL(A)$ almost surely for every $k$.
        \item $\norm{w_k Z_k^\top}^2 \leq \frac{B^4}{(1 - \lambda_{\max})^2}$ almost surely for every $k$.
    \end{enumerate}
\end{proposition}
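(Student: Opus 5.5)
We prove the five parts of Proposition~\ref{prop:sysid} in order; each is a direct computation. Only part (d) takes some care, since it needs (a) and (b) together.

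\textbf{Part (a).} Unrolling the recursion gives $Z_k = A\lst^k Z_0 + \sum_{j=0}^{k-1} A\lst^{k-1-j} w_j$. Our plan uses the bound $\norm{A\lst}\le \lambda_{\max}$. Strictly this holds only for normal $A\lst$. In general we can work in a norm adapted to $A\lst$, or simply read the statement in the induced norm where $\norm{A\lst}\le\lambda_{\max}$.

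With this bound, induction gives $\norm{Z_{k+1}}\le \lambda_{\max}\norm{Z_k}+B$. Let $R:=\max\{\norm{Z_0},B/(1-\lambda_{\max})\}$. If $\norm{Z_k}\le R$, then
\[
\norm{Z_{k+1}}\le \lambda_{\max}R+B\le \lambda_{\max}R+(1-\lambda_{\max})R=R.
\]

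\textbf{Parts (b) and (c).} Write $\cL(A)=\tfrac12\tr\br{(A-A\lst)\Sigma(A-A\lst)^\top}$. Then $\nabla\cL(A)=(A-A\lst)\Sigma$ and $\inf\cL=0$.

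For (b), set $D:=A-A\lst$. Then
\[
\norm{\nabla\cL(A)}_F^2=\tr(D\Sigma^2D^\top)\ge \mu_{\min}\tr(D\Sigma D^\top)=2\mu_{\min}\cL(A),
\]
using $\Sigma^2\succeq\mu_{\min}\Sigma$.

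For (c), $\norm{(A-A')\Sigma}_F\le\mu_{\max}\norm{A-A'}_F$.

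\textbf{Part (d).} The stochastic gradient is $\nabla\tilde\cL(A,Z_k)=DZ_kZ_k^\top$. Hence
\[
\norm{DZ_kZ_k^\top}_F^2\le \norm{Z_k}^4\norm{D}_F^2.
\]
By (b), $\cL(A)\ge \tfrac{\mu_{\min}}2\norm{D}_F^2$, so
\[
\norm{\nabla\tilde\cL(A,Z_k)}_F^2\le 2\norm{Z_k}^4\cL(A)/\mu_{\min}.
\]
Bounding $\norm{Z_k}$ by (a) yields the stated constant. This requires assuming $\norm{Z_0}\le B/(1-\lambda_{\max})$, which is also what makes $\cZ$ compact; otherwise the constant involves $\norm{Z_0}$.

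\textbf{Part (e).} Since $\norm{w_kZ_k^\top}=\norm{w_k}\norm{Z_k}\le B\cdot B/(1-\lambda_{\max})$, squaring gives the claim. Part (e) and the bound on $\norm{Z_k}$ in (d) are the main obstacle: the operator-norm versus spectral-radius issue arises there, and we address it with the adapted-norm convention above.
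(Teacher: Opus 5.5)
Your proposal is correct and follows the paper's proof essentially step for step, and both caveats you flag are genuine. The paper's proof of (a) silently uses $\norm{A\lst Z_k}\le\lambda_{\max}\norm{Z_k}$, and in (d)--(e) it bounds $\norm{Z_k}$ by $B/(1-\lambda_{\max})$ without requiring $\norm{Z_0}\le B/(1-\lambda_{\max})$.

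Your adapted-norm fallback, however, cannot save the Euclidean statement, because for non-normal $A\lst$ part (a) is simply false. For example, take $A\lst=\begin{pmatrix}0&M\\0&0\end{pmatrix}$ with $M>2$, $w_k$ uniform on $\{\pm Be_1,\pm Be_2\}$, $Z_0=0$ and $w_0=Be_2$; this gives $\norm{Z_2}\ge(M-1)B>B$. Moreover, a norm with $\norm{A\lst}_*=\lambda_{\max}$ exists only when the maximal-modulus eigenvalues are semisimple. The clean repair is therefore to assume $\norm{A\lst}\le\lambda_{\max}$ (e.g.\ $A\lst$ normal), or to replace $\lambda_{\max}$ by $\norm{A\lst}<1$ throughout.
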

Propositions \ref{prop:stationary_dist} and \ref{prop:sysid} together show that all assumptions in Section \ref{sec:not&assum} are satisfied, and hence we have the following bound on the rate of convergence.
\begin{corollary}
    Consider the SGD algorithm~\eqref{sgd:sysid} for the system identification problem described above that satisfies Assumption~\ref{assum:sysid}. Let $\tmix$ denote the mixing time of the Markov chain $\{Z_k\}$ and let $\alpha_k = \frac{a}{k+K_0}$, where $a\geq 2/\mu_{\min}$. Then there exists $\mathfrak{a}_5 = \Omega(\log(\tfrac{1}{\delta}))$ and a constant $\mathfrak{a}_6$, such that
    \begin{enumerate}[(a)]
        \item If $K_0 > \mathfrak{a}_5$, then with probability at least $1 - \delta$,
        \begin{align*}
            \cL(A_k) &\leq \cO\br{\frac{\tmix^2 \log{\br{\frac{k}{\delta}}}}{\mu_{\min}^2 k}},\quad \forall k \in \{1,2,\ldots\}.
        \end{align*}
        \item If $K_0 > \mathfrak{a}_6$, then
        \begin{align*}
            \bE\sqbr{\cL(A_k)} &\leq \cO\br{\frac{\tmix}{\mu_{\min}^2 k}},\quad \forall k \in \{1,2,\ldots\}.
        \end{align*}
    \end{enumerate}
\end{corollary}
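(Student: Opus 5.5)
The plan is to verify that the system identification iteration \eqref{sgd:sysid} is an instance of the general iteration \eqref{iter:sgd}, and then to read off part (a) from Theorem~\ref{thm:concentration_markov} and part (b) from Theorem~\ref{thm:expectation_markov}.

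\textbf{The identification.} Take the iterate to be $x_k=\mathrm{vec}(A_k)\in\bR^{d^2}$, so that the Euclidean norm corresponds to $\norm{\cdot}_F$. Set $f=\cL$, and note $f\ust=\cL(A\lst)=0$. The noise decomposition is
\[
g(A,z)=(A-A\lst)zz^\top=\nabla\tilde{\cL}(A,z), \qquad M_{k+1}=-w_kZ_k^\top .
\]

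\textbf{Checking the assumptions, in order.}
\begin{enumerate}[(i)]
\item \emph{P\L\ and smoothness.} The objective is $\mu_{\min}$-P\L\ and $\mu_{\max}$-smooth by Proposition~\ref{prop:sysid}(b),(c). This fixes $\mu=\mu_{\min}$ and $L=\mu_{\max}$, which is consistent with the stepsize requirement $a\ge 2/\mu_{\min}$.
\item \emph{Martingale difference (Assumption~\ref{assum:martingale}).} Since $w_k$ is i.i.d., zero mean, and independent of $\cF_k$, while $Z_k$ is $\cF_k$-measurable, we get $\bE[w_kZ_k^\top\mid\cF_k]=0$. Integrability follows from the a.s.\ bound in Proposition~\ref{prop:sysid}(e).
\item \emph{Compact state space (Assumption~\ref{assum:Markov}).} Let $R:=\max\{\norm{Z_0},B/(1-\lambda_{\max})\}$. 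By Proposition~\ref{prop:sysid}(a), the chain lives on the closed ball $\{\norm{z}\le R\}$, which is compact.
\item \emph{Unbiasedness in stationarity.} We have $\int g(A,z)\,\pi(dz)=(A-A\lst)\Sigma=\nabla\cL(A)$.
\item \emph{Lipschitz property (Assumption~\ref{assum:lipg}).} For any $A,A'$ and $\norm{z}\le R$,
\[
\norm{(A-A')zz^\top}_F\le R^2\norm{A-A'}_F,
\]
so we can take $L_g=R^2$.
\item \emph{ABC bounds (Assumptions~\ref{assum:noisy_grad_norm} and \ref{assum:lipg}).} Proposition~\ref{prop:sysid}(d) bounds $\norm{g(A,z)}^2$ by a multiple of $\cL(A)=\Delta$. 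Combining this with $\norm{G_k}^2\le 2\norm{g}^2+2\norm{M_{k+1}}^2$ and Proposition~\ref{prop:sysid}(e) gives the ABC form with
\[
A=0,\qquad B=\frac{4B^4}{(1-\lambda_{\max})^4\mu_{\min}},\qquad C=\frac{2B^4}{(1-\lambda_{\max})^2}.
\]
Because this bound holds almost surely, Assumption~\ref{assum:expt_noisy_grad_norm} holds as well.
\end{enumerate}

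\textbf{Concluding.} With every assumption in place, Theorem~\ref{thm:concentration_markov} with $\mu=\mu_{\min}$ yields (a), taking $\mathfrak{a}_5=\cfr_1$. Theorem~\ref{thm:expectation_markov} yields (b), taking $\mathfrak{a}_6=\cfr_3$. The constants hidden in $\cO(\cdot)$ depend on $B$, $\lambda_{\max}$, $\mu_{\max}$, and $\Delta_0$.

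\textbf{Main obstacle.} The step I expect to be hardest is the uniform geometric ergodicity in total variation, \eqref{def:tmix}, with a finite $\tmix$. Proposition~\ref{prop:stationary_dist} only gives geometric convergence to $\pi$ for each initial condition. To upgrade this, I would first try to use compactness of the ball of radius $R$, together with the contraction $\norm{A\lst^k}\le c\rho^k$ for some $\rho<1$, to get a rate that is uniform over initial conditions $z$. For a genuine total variation bound I expect to also need a minorization (Doeblin) condition, i.e.\ some absolute-continuity property of the law of $w_k$, so that $p^{(m)}(\cdot\mid z)\ge\epsilon\nu(\cdot)$ uniformly in $z$ for some $m$. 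If such a condition is not available, I would state it as the interpretation of $\tmix$ in the corollary, which is presumably how the statement is meant, since it is phrased in terms of ``the mixing time of the Markov chain $\{Z_k\}$''.
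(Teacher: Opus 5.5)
Your proposal is correct and follows the paper's own route: you check Assumptions~\ref{assum:smoothness}--\ref{assum:lipg} (and~\ref{assum:expt_noisy_grad_norm}) through Propositions~\ref{prop:stationary_dist} and~\ref{prop:sysid}, then apply Theorems~\ref{thm:concentration_markov} and~\ref{thm:expectation_markov} with $\mu=\mu_{\min}$ and $L=\mu_{\max}$, and you are more explicit than the paper about the martingale property of $-w_kZ_k^\top$, the identity $\int g(A,z)\,\pi(dz)=\nabla\cL(A)$, and $L_g=R^2$ (to stay consistent with your steps (iii) and (v), use $R$ instead of $B/(1-\lambda_{\max})$ when invoking Proposition~\ref{prop:sysid}(d),(e)). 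The obstacle you flag is genuine and the paper passes over it too, because the coupling argument behind Proposition~\ref{prop:stationary_dist} only yields $\bE\norm{Z_k-\tilde Z_k}\le Cr^k\,\bE\norm{Z_0-\tilde Z_0}$, which cannot give the total-variation bound \eqref{def:tmix} without regularity of the law of $w_k$ (for scalar $A\lst=1/2$ and equiprobable $w_k=\pm 1$, every $k$-step law is discrete while $\pi$ is uniform on $[-2,2]$), so treating a finite $\tmix$ as part of the hypothesis, or adding a Doeblin-type condition as you suggest, is necessary rather than merely convenient.
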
 
\section{Conclusion}\label{sec:conclusion}

In this work, we establish the first uniform-in-time concentration bound on the suboptimality of SGD iterates under the P\L~condition with Markovian noise. Our analysis relies on the Poisson equation to handle the Markov noise, and a novel \textit{probabilistic induction} argument to deal with the lack of almost sure bounds on the loss function. We also demonstrate the practical relevance of our framework by analyzing three representative applications. Future directions include extending these results to broader classes of objective functions, such as those satisfying the Kurdyka-\L ojasiewicz condition and establishing bounds for SGD with asynchronous updates, which would enable a more realistic treatment of decentralized optimization.

\appendix
\section{Proofs required for Theorem~\ref{thm:concentration_markov}}\label{sec:thm_main}

\subsection{Proof for Lemma \ref{lemma:Poisson}}
\begin{proof}
    For a natural number $N$, define $V_N(x,z)$ as follows:
    \begin{align*}
        V_N(x,z) &\coloneqq \bE\sqbr{\sum_{\ell=0}^{N}{(g(x,Z_\ell) -\nabla f(x))} \mid Z_0 = z} \\
        &= \sum_{\ell=0}^{N}{\br{\int{g(x,z\up) p\uc{\ell}(dz\up \mid z)} - \int{g(x,z\up) \pi(d z\up)}}}.
    \end{align*}
    Note that for any $N$,
    \begin{align*}
        \norm{V_N(x,z)} &= \norm{\sum_{k=0}^{N}{\int{g(x,z\up) \br{p\uc{k}(z\up \mid z) - \pi(z\up)} dz\up}}} \notag\\
        &\stackrel{(a)}{\leq} \sqrt{d} \sup_{z\up}{\norm{g(x,z\up)}} \sum_{k=0}^{\infty}{\norm{p\uc{k}(\cdot \mid z) - \pi}_{TV}} \notag\\
        &\stackrel{(b)}{\leq} \sqrt{d} \sup_{z\up}{\norm{g(x,z\up)}} \sum_{k=0}^{\infty}{2^{-\floor{k/\tmix}}} \notag\\
        &= \sqrt{d} \sup_{z\up}{\norm{g(x,z\up)}} \sum_{k=0}^{\infty}{\sum_{i=0}^{\tmix-1}{2^{-\floor{(k\cdot\tmix+i)/\tmix}}}} \notag\\
        &\leq \sqrt{d} \sup_{z\up}{\norm{g(x,z\up)}} \tmix \sum_{k=0}^{\infty}{2^{-k}} \notag\\
        &\stackrel{(c)}{\leq} 2 \tmix \sqrt{d} \sqrt{A \norm{\nabla f(x)}^ 2 + B (f(x) - f\ust) + C},
    \end{align*}
    where $(a)$ follows from Lemma~\ref{lem:bdd_dotdifLv}, $(b)$ follows from the definition of $\tmix$~\eqref{def:tmix}, and $(c)$ follows from Assumption~\ref{assum:lipg}.
    
    Using the same set of steps as above, for two natural numbers $N_1 \leq N_2$,
    \allowdisplaybreaks
    \begin{align*}
        \norm{V_{N_1}(x,z) - V_{N_2}(x,z)} &= \norm{\sum_{\ell=N_1 + 1}^{N_2}{\br{\int{g(x,z\up) p\uc{\ell}(dz\up \mid z)} - \int{g(x,z\up) \pi(d z\up)}}}} \\
        &\leq \sqrt{d} \sup_{z\up \in \cZ}{\norm{g(x,z\up)}} \sum_{\ell=N_1 + 1}^{\infty}{\norm{p\uc{\ell}(\cdot \mid z) - \pi(\cdot)}}_{TV}\\
        &\leq 2 \tmix \sqrt{d} \sup_{z\up \in \cZ}{\norm{g(x,z\up)}} 2^{-\floor{N_1/\tmix}} \\
        &\leq \br{2 \tmix \sqrt{d} \sqrt{A \norm{\nabla f(x)}^ 2 + B (f(x) - f\ust) + C}} 2^{-\floor{N_1/\tmix}}.
    \end{align*}
     Thus, $\{V_N(x,\cdot)\}$ is bounded, Cauchy sequence for every $x$. Hence, $\lim_{N \to \infty}{V_N(x,\cdot)}$ exists and $\lim_{N \to \infty}{V_N(x,\cdot)} = V(x,\cdot)$. 
    
    Integrating $V_N$ with respect to the distribution $p(\cdot \mid z)$, we obtain
    \begin{align*}
        \int{V_N(x,z\up) p(dz\up \mid z)} &= \sum_{\ell=0}^{N}{\int{\br{\int{g(x,z\upp) p\uc{\ell}(dz\upp \mid z\up)} - \nabla f(x)} p(dz\up \mid z)}} \\
        &= \sum_{\ell=1}^{N+1}{\int{g(x,z\up) p\uc{\ell}(dz\up \mid z)}} - \sum_{\ell=0}^{N}{\nabla f(x)}.
    \end{align*}
    Subtracting $\int{V_N(x,z\up) p(dz\up \mid z)}$ from $V_N(x,z)$ we get,
    \begin{align*}
        V_N(x,z) - \int{V_N(x,z\up) p(dz\up \mid z)} = g(x,z) - \int{g(x,z\up) p\uc{N+1}(dz\up \mid z)}.
    \end{align*}
     Taking the limit $N \to \infty$ above, and using the dominated convergence theorem~\cite[p. 54]{folland2013real} and the ergodicity property of the Markov chain, we have
    $$V(x,z) - \int{V(x,z\up) p(dz\up \mid z)} = g(x,z) - \nabla f(x).$$
    This concludes the proof for Lemma \ref{lemma:Poisson}.
\end{proof}

\subsection{Proof for Lemma \ref{lem:iter_Delta}}
\begin{proof}
    For convenience, denote $W_{\ell+1} := M_{\ell+1} + \widetilde{M}_{\ell+1} - d_\ell$. First, we obtain a recursive relation by bounding $f(x_{k+1})$ in terms of $f(x_k)$.
    \begin{align}\label{lem:iter_Delta;ineq:1}
        f(x_{k+1}) &\stackrel{(a)}{\leq} f(x_k) - \alpha_k \angl{\nabla f(x_k), G_k} + \frac{L}{2} \alpha_k^2 \norm{G_k}^2 \notag\\
        &\stackrel{(b)}{\leq} f(x_k) - \alpha_k \br{\norm{\nabla f(x_k)}^2 + \angl{\nabla f(x_k), W_{k+1}}} \notag\\
        &\quad + \frac{L}{2} \alpha_k^2 \br{A \norm{\nabla f(x_k)}^2 + B \br{f(x_k) - f\ust} + C} \notag\\
        &= f(x_k) - \br{\alpha_k - \frac{A L}{2} \alpha_k^2} \norm{\nabla f(x_k)}^2 + \frac{B L}{2} \alpha_k^2 \br{f(x_k) - f\ust} \notag \\
        &\quad  + \frac{C L}{2} \alpha_k^2 - \alpha_k \angl{\nabla f(x_k), W_{k+1}} \notag\\
        &\stackrel{(c)}{\leq} f(x_k) - \br{2 \mu \alpha_k - \br{2 \mu A+ B} \frac{L}{2} \alpha^2_k} \br{f(x_k) - f\ust} + \frac{C L}{2} \alpha_k^2 \notag \\
        &\quad - \alpha_k \angl{\nabla f(x_k), W_{k+1}} 
    \end{align}
    Here, inequality $(a)$ follows from smoothness of $f$ (See \cite[Lemma~3.4]{bubeck2015convex}) and $(b)$ follows from the definition of $G_k$ and from Assumption~\ref{assum:noisy_grad_norm}. Inequality (c) is obtained using the P\L~condition~\eqref{eq:pl_cond} along with our choice of $K_0$ which ensures that $\alpha_k - A L \alpha_k^2 \geq 0$ for all $k\geq 0$. Subtracting $f\ust$ from both sides of above equation, we get that,
    \begin{align*}
        \Delta_{k+1} &\leq \br{1 - 2 \mu \alpha_k + \br{2 \mu A+ B} \frac{L}{2} \alpha^2_k} \Delta_k + \frac{C L}{2} \alpha_k^2 - \alpha_k \angl{\nabla f(x_k), W_{k+1}}.
    \end{align*}
    Further, letting $K_0 \geq \frac{a L}{2} \br{2 A + \frac{B}{\mu}}$, we have $1 - 2 \mu \alpha_k + \br{2 A \mu + B} \frac{L}{2} \alpha^2_k \leq 1 - \mu \alpha_k$, for all $k \in \{0,1,\ldots\}$. Thus,
    \begin{align*}
        \Delta_{k+1} &\leq \br{1 - \mu \alpha_k} \Delta_k + \frac{C L}{2} \alpha_k^2 - \alpha_k \angl{\nabla f(x_k), W_{k+1}}.
    \end{align*}
    Writing $\Delta_\ell$ in terms of $\Delta_{\ell-1}$ recursively for $\ell = k$ to $1$ and expanding $W_{\ell+1}$, we obtain
    \begin{align}\label{ub:Delta_k}
        \Delta_k &\leq \underbrace{\Delta_0 \zeta_{0,k-1} + \frac{C L}{2} \sum_{\ell=0}^{k-1}{\alpha_\ell^2 \zeta_{\ell+1,k-1}}}_{\cT_1} + \underbrace{\sum_{\ell=0}^{k-1}{\alpha_\ell \zeta_{\ell+1,k-1} \langle \nabla f(x_\ell), d_\ell\rangle}}_{\cT_2} \\
        &\quad - \sum_{\ell=0}^{k-1}{\alpha_\ell \zeta_{\ell+1,k-1} \langle \nabla f(x_\ell), M_{\ell+1} + \widetilde{M}_{\ell+1}\rangle}. \notag
    \end{align}
\end{proof}

\subsection{Proof for Lemma \ref{lem:keylemma}}
\begin{proof}
    For simplicity, denote
    \begin{align}\label{def:Lambda}
        \Lambda(k,\delta) &\coloneqq K_0 \Delta_0 + \Gamma_1 + \Gamma_2 \log{\br{\frac{2 K_0}{\delta}}} + \Gamma_2 \log{\br{\frac{2k}{\delta}}},
    \end{align}
    where $\Gamma_1$ and $\Gamma_2$ are defined in \eqref{def:Gm1} and \eqref{def:Gm2}, respectivly.~We will use $\Lambda(k,\delta)$ throughout the proof. Note that,
    \begin{align*}
        \cup_{k=0}^{\infty}{\bar{E}_k} &= \cup_{k=0}^{\infty}{\br{\cap_{\ell=0}^{k-1}{E_\ell}}\cap \bar{E}_k}.
    \end{align*}
    To see this, if $\omega \in \cup_{k=0}^{\infty}{\bar{E}_k}$, then there exists a $k_0$ such that $\omega \in \bar{E}_{k_0}$ but $\omega \notin \bar{E}_{k}$ for all $k < k_0$, that is, $\omega \in (\cap_{\ell=0}^{k_0-1}{E_\ell}) \cup \bar{E}_{k_0}$. Hence,
    \begin{align*}
        \omega \in \cup_{k=0}^{\infty}{\br{\cap_{\ell=0}^{k-1}{E_\ell}}\cap \bar{E}_k}.
    \end{align*}
    The converse is trivial. Hence, using the union bound, we can write,
    \begin{align}
        \bP\br{\cup_{k=0}^{\infty}{\bar{E}_k}} &\leq \bP(\bar{E}_0) + \sum_{k=1}^{\infty}{\bP\br{\br{\cap_{\ell=0}^{k-1}{E_\ell}}\cap \bar{E}_k}} \notag \\
        &= \sum_{k=1}^{\infty}{\bP\br{\br{\cap_{\ell=0}^{k-1}{E_\ell}}\cap \bar{E}_k}}. \label{badset:1}
    \end{align}
    Here $\bP(\bar{E}_0)= 0$ by definition as $\Delta_0 < \frac{\Lambda(0,\delta)}{K_0}$. Then for $k \in \{1,2,\ldots\}$,
    \begin{align}
        \bar{E}_k &= \flbr{\Delta_k > \frac{\Lambda(k,\delta)}{k+K_0}} \notag \\
        &\stackrel{(a)}{\subseteq} \left\{\Delta_0 \zeta_{0,k-1} + \frac{C L}{2} \sum_{\ell=0}^{k-1}{\alpha_\ell^2 \zeta_{\ell+1,k-1}} - \sum_{\ell=0}^{k-1}{\alpha_\ell \zeta_{\ell+1,k-1} \langle \nabla f(x_\ell), M_{\ell+1} + \widetilde{M}_{\ell+1}\rangle} \right. \notag\\ 
        &\qquad \left. + \sum_{\ell=0}^{k-1}{\alpha_\ell \zeta_{\ell+1,k-1} \langle \nabla f(x_\ell), d_\ell\rangle} > \frac{\Lambda(k,\delta)}{k+K_0}\right\}, \label{badset:2}
    \end{align}
    where $(a)$ follows from Lemma~\ref{lem:iter_Delta}. From Lemma~\ref{lem:bdd_determ_terms}, we have that 
    \begin{align*}
        \Delta_0 \zeta_{0,k-1} + \frac{C L}{2} \sum_{\ell=0}^{k-1}{\alpha_\ell^2 \zeta_{\ell+1,k-1}} \leq \frac{K_0 \Delta_0 + e a^2 C L/2}{k+K_0}.
    \end{align*}
    It follows from Lemma~\ref{lem:Delta_const_bd} that on the event $E_\ell$, 
    \begin{align*}
        \Delta_\ell \leq \frac{K_0 \Delta_0 + \Gamma_1 + 2 \Gamma_2 \log{\br{\frac{2 K_0}{\delta}}}}{K_0} .
    \end{align*}
    Using Lemma~\ref{lem:bdd_markov_residue} and the above bound, we obtain that, on the event $\cap_{\ell = 1}^{k-1}{E_\ell}$,
    \begin{align*}
        \sum_{\ell=0}^{k-1}{\alpha_\ell \zeta_{\ell+1,k-1} \angl{\nabla f(x_\ell), d_\ell}} &\leq \frac{D_1 + D_2 \frac{K_0 \Delta_0 + \Gamma_1 + 2 \Gamma_2 \log{\br{\frac{2 K_0}{\delta}}}}{K_0}}{k+K_0} \\
        &\stackrel{(a)}{\leq} \frac{D_1 + D_2 \Delta_0 + (\Gamma_1/2) + \Gamma_2 \log{\br{\frac{2 K_0}{\delta}}}}{k+K_0},
    \end{align*}
    where $D_1$ and $D_2$ are defined in \eqref{def:D_1} and in \eqref{def:D_2}, respectively. The inequality $(a)$ is obtained by letting $K_0 \geq 2 D_2$. Thus, on the event $\cap_{\ell = 1}^{k-1}{E_\ell}$,
    \begin{align}
        &\Delta_0 \zeta_{0,k-1} + \frac{C L}{2} \sum_{\ell=0}^{k-1}{\alpha_\ell^2 \zeta_{\ell+1,k-1}} + \sum_{\ell=0}^{k-1}{\alpha_\ell \zeta_{\ell+1,k-1} \angl{\nabla f(x_\ell), d_\ell}} \notag \\
        &\leq \frac{K_0 \Delta_0 + \Gamma_1 + \Gamma_2 \log{\br{\frac{2 K_0}{\delta}}}}{k+K_0}. \label{bdd:non_mart_terms}
    \end{align}
    Note that $\Gamma_1 = ea^2 C L + 2 \br{D_1 + D_2 \Delta_0}$ which ensures that $ea^2CL/2+D_1+D_2\Delta_0+\Gamma_1/2=\Gamma_1$.
    Invoking \eqref{bdd:non_mart_terms} and \eqref{badset:2} in \eqref{badset:1}, we have that,
    \begin{align*}
        &\bP\br{\cup_{k=0}^{\infty}{\bar{E}_k}} = \sum_{k=1}^{\infty}{\bP\br{\br{\cap_{\ell=0}^{k-1}{E_\ell}}\cap \bar{E}_k}}\\
        &\leq \sum_{k=1}^{\infty}{\bP\br{\br{\cap_{\ell=0}^{k-1}{E_\ell}} \cap \flbr{- \sum_{\ell=0}^{k-1}{\alpha_\ell \zeta_{\ell+1,k-1} \langle \nabla f(x_\ell), M_{\ell+1} + \widetilde{M}_{\ell+1}\rangle} > \frac{\Gamma_2 \log{\br{\frac{2k}{\delta}}}}{k+K_0}}}} \\
        &= \sum_{k=1}^{\infty}{\bP\br{\br{\cap_{\ell=0}^{k-1}{E_\ell}} \cap \flbr{\sum_{\ell=0}^{k-1}\overline{M}_{\ell+1} > \frac{\Gamma_2 \log{\br{\frac{2k}{\delta}}}}{k+K_0}}}} \\
        &\leq \sum_{k=1}^{\infty}{\bP\br{\flbr{\sum_{\ell=0}^{k-1}\overline{M}_{\ell+1} > \frac{\Gamma_2 \log{\br{\frac{2k}{\delta}}}}{k+K_0}}}}.
    \end{align*}
    Here $\overline{M}_{\ell+1}$ is as defined in \eqref{Mbar}:
    $$\overline{M}_{\ell+1}=\alpha_\ell \zeta_{\ell+1,k-1} \langle \nabla f(x_\ell), M_{\ell+1} + \widetilde{M}_{\ell+1}\rangle\ind{E_{\ell}}.$$
    This concludes the proof.
\end{proof}

\subsection{Proof of Theorem~\ref{thm:concentration_markov}}
\begin{theorem}[Detailed Version of Theorem~\ref{thm:concentration_markov}]\label{thm:concentration_markov_full}
    Denote
    \begin{align*}
        \nu_1 &\coloneqq 32 (ae)^2 L (\tmix^2 d + 1) \br{\frac{2 A L + B}{2 \mu a - 3} \br{2\Delta_0 + \frac{D_1}{D_2} + \frac{ea^2 CL}{D_2}} + \frac{C}{2 \mu a - 2}}, \\
        \nu_2 &\coloneqq 64 (ae)^2 L (\tmix^2 d + 1) \frac{2 A L + B}{2 \mu a - 3},
    \end{align*}
    where $D_1$ and $D_2$ are as defined in \eqref{def:D_1} and \eqref{def:D_2}, respectively. Let the objective function $f:\bR^d \to \bR$ be $\mu$-P\L~\eqref{eq:pl_cond} and let 
    \begin{align}\label{lb:K_0}
        K_0 \geq \max&\left\{\frac{a L}{2} \br{2 A + \frac{B}{\mu}}, \mu a, 2 D_2,\right.\\
        &\left. ~24 \nu_2 (2 \log(48 \nu_2) + 1) \log\br{\frac{48 \nu_2 (2 \log(48 \nu_2) + 1)}{\delta}} \right\}. \notag
    \end{align}
    Under Assumption~\ref{assum:smoothness}--\ref{assum:lipg}, the sub-optimality of SGD iterates are bounded on a set of probability at least $1 - \delta$, where $\delta \in (0,1)$, as follows: for every $k \in \{1,2,\ldots\}$,
    \begin{align*}
        \Delta_k &\leq \frac{K_0 \Delta_0 + \Gamma_1 + \Gamma_2 \log{\br{\frac{2 K_0}{\delta}}} + \Gamma_2 \log{\br{\frac{2k}{\delta}}}}{k + K_0},
    \end{align*}
    where
    \begin{align}
        \Gamma_1 &\coloneqq ea^2 C L + 2 \br{D_1 + D_2 \Delta_0}, \label{def:Gm1} \\
        \Gamma_2 &\coloneqq 4 \nu_1 \br{1 + 3\; \overline{\log\br{K_0}}} + 2 \sqrt{\nu_1 \br{\overline{K}_0 \Delta_0 + 2 \Gamma_1}}. \label{def:Gm2}
    \end{align}
    Here $\overline{K}_0$ and $\overline{\log\br{K_0}}$ are constants independent of $\delta$ such that $\overline{K}_0 \geq K_0 / \log(2/\delta)$ and $\overline{\log\br{K_0}} \geq \log\br{\frac{2K_0}{\delta}} /\log\br{\frac{2}{\delta}}$.
\end{theorem}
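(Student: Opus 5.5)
By Lemma~\ref{lem:keylemma}, it suffices to show that
\[
\sum_{k\ge1}\bP\Big(\sum_{\ell=0}^{k-1}\overline{M}_{\ell+1} > \Gamma_2\log(2k/\delta)/(k+K_0)\Big)\le \delta .
\]
For this I will show that each summand is at most $\delta/(2k^2)$, or anything summable to at most $\delta$. The plan has three steps: an almost-sure bound on the increments $\overline{M}_{\ell+1}$, the Azuma--Hoeffding inequality for each fixed $k$, and a choice of $\Gamma_2$ and $K_0$ that makes the resulting exponent large enough.

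\emph{Step 1: almost-sure increments.} Fix $k$. For $\ell\le k-1$ the indicator $\ind{E_\ell}$ gives $\Delta_\ell\le \Lambda(\ell,\delta)/(\ell+K_0)$.
\begin{itemize}
\item Lemma~\ref{lem:ub_grad} gives $\norm{\nabla f(x_\ell)}^2\le 2L\Delta_\ell$.
\item Assumption~\ref{assum:noisy_grad_norm}, together with $G_k=g+M_{k+1}$ and Assumption~\ref{assum:lipg}, bounds $\norm{M_{\ell+1}}^2$ by a constant multiple of $(2AL+B)\Delta_\ell+C$.
\item The bound on $\norm{V(x,z)}$ from the proof of Lemma~\ref{lemma:Poisson} gives $\norm{\widetilde{M}_{\ell+1}}\le 4\tmix\sqrt d\sqrt{(2AL+B)\Delta_\ell+C}$.
\end{itemize}
Combining these,
\[
|\overline{M}_{\ell+1}|\le c_\ell := \alpha_\ell\zeta_{\ell+1,k-1}\sqrt{2L\Delta_\ell}\cdot c(\tmix\sqrt d+1)\sqrt{(2AL+B)\Delta_\ell+C}.
\]
Squaring gives
\[
c_\ell^2\lesssim \alpha_\ell^2\zeta_{\ell+1,k-1}^2 L(\tmix^2d+1)\big[(2AL+B)\Delta_\ell^2+C\Delta_\ell\big].
\]
This is where the factor $\tmix^2$ in $\nu_1,\nu_2$ comes from.

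\emph{Step 2: summing the squared increments.} By Lemma~\ref{lem:bdd_determ_terms}-type estimates, $\zeta_{\ell+1,k-1}\le ((\ell+1+K_0)/(k+K_0))^{\mu a}$ up to a factor of $e$. Substituting $\Delta_\ell\le\Lambda(\ell,\delta)/(\ell+K_0)$, we need $\sum_\ell c_\ell^2$ to be of order
\[
\frac{\nu_1+\nu_2\Lambda(k,\delta)}{(k+K_0)^2}\cdot\frac{\Lambda(k,\delta)}{K_0}
\quad\text{or a similar }\frac{1}{(k+K_0)^2}\text{-scaled quantity.}
\]
The two sums behave as follows, since $\mu a\ge 2$:
\begin{itemize}
\item The $\Delta_\ell^2$ term becomes $\sum_\ell(\ell+K_0)^{2\mu a-4}\Lambda^2/(k+K_0)^{2\mu a}$, which is $\lesssim \Lambda(k,\delta)^2/((2\mu a-3)(k+K_0)^3)$. (The denominator $2\mu a-3$ in $\nu_1,\nu_2$ indicates that the paper implicitly uses $2\mu a>3$; I would track that requirement.)
\item The $C\Delta_\ell$ term gives $\lesssim C\Lambda/((2\mu a-2)(k+K_0)^2)$.
\end{itemize}
The extra factor $1/(k+K_0)$ in the quadratic term is then traded against one factor of $\Lambda(k,\delta)$, and the nonconstant part of $\Lambda(k,\delta)$ contains only $\log(2k/\delta)$. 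The net result is
\[
\sum_\ell c_\ell^2\le \frac{\nu_1\Lambda'+\nu_2\,\Gamma_2^2\log^2(2k/\delta)/(k+K_0)}{(k+K_0)^2},
\]
where $\Lambda'$ collects the $\delta$-independent constants $2\Delta_0+D_1/D_2+\dots$.

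\emph{Step 3: Azuma--Hoeffding and the choice of constants.} Azuma--Hoeffding gives
\[
\bP\Big(\sum_\ell\overline{M}_{\ell+1}>t\Big)\le \exp\Big(-t^2/\big(2\textstyle\sum_\ell c_\ell^2\big)\Big),
\qquad t=\Gamma_2\log(2k/\delta)/(k+K_0).
\]
The exponent is at least
\[
\frac{\Gamma_2^2\log^2(2k/\delta)}{2\big(\nu_1\Lambda'+\nu_2\Gamma_2^2\log^2(2k/\delta)/(k+K_0)\big)}.
\]
I need this to be at least $2\log(2k/\delta)$, which gives the per-$k$ probability $(\delta/2k)^2$, summable to at most $\delta$. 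Splitting into the two denominator terms leads to two conditions.
\begin{itemize}
\item The first term requires $\Gamma_2^2\log(2k/\delta)\gtrsim \nu_1\Lambda(k,\delta)$. Since $\Lambda$ is itself linear in $\Gamma_2$ and contains $K_0\Delta_0$, this is a quadratic inequality in $\Gamma_2$. Its solution is exactly of the form $\Gamma_2=4\nu_1(1+3\overline{\log(K_0)})+2\sqrt{\nu_1(\overline K_0\Delta_0+2\Gamma_1)}$, using $\overline{K}_0\ge K_0/\log(2/\delta)$ and $\overline{\log(K_0)}\ge\log(2K_0/\delta)/\log(2/\delta)$ to absorb the $\delta$-dependence.
\item The second term requires $(k+K_0)\gtrsim \nu_2\log(2k/\delta)$ for all $k$. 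This holds once $K_0\ge 24\nu_2(2\log(48\nu_2)+1)\log(\cdot/\delta)$, by the elementary fact that $x\ge 2c\log(2c)\Rightarrow x\ge c\log x$. This matches the last entry in \eqref{lb:K_0}.
\end{itemize}
The other lower bounds on $K_0$ in \eqref{lb:K_0} are those already used in Lemmas~\ref{lem:iter_Delta} and~\ref{lem:keylemma}.

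The main obstacle I expect is Step~2 together with the self-referential condition in Step~3. The variance proxy depends on $\Lambda$, which depends on $\Gamma_2$, and $\Gamma_2$ must be solved for. Bookkeeping is needed so that $\Gamma_2$ stays independent of $k$ and depends on $\delta$ only through the stated $\overline{\cdot}$ constants. I would handle this by bounding $\log(2K_0/\delta)\le\log(2k/\delta)\cdot\overline{\log(K_0)}$-type ratios, so that every $\delta$-term becomes a multiple of $\log(2k/\delta)$.
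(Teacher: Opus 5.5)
Your overall architecture matches the paper's: reduce via Lemma~\ref{lem:keylemma}, bound the increments on $E_\ell$, apply Azuma--Hoeffding for each fixed $k$ with a summable budget, and obtain $\Gamma_2$ from a quadratic inequality. However, Steps~2--3 contain a genuine error, and it sits exactly where the proof does its real work.

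\textbf{The misclassified term.} You treat $\Lambda(k,\delta)-\Gamma_2\log(2k/\delta)$ as a $\delta$-independent constant. In fact it equals $K_0\Delta_0+\Gamma_1+\Gamma_2\log(2K_0/\delta)$, which depends on $\delta$ and contains $\Gamma_2$ itself. Your displayed ``net result'' also fails separately. The linear part $\propto C\Lambda(k,\delta)/(k+K_0)^2$ contains $C\,\Gamma_2\log(2k/\delta)/(k+K_0)^2$, and neither of your two terms dominates this for large $k$. Step~3 then silently reverts to $\nu_1\Lambda(k,\delta)$.

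\textbf{Consequence for the condition on $K_0$.} The quadratic part $\nu_2\Lambda(k,\delta)^2/(k+K_0)^3$ contains
\[
\nu_2\Gamma_2^2\log(2K_0/\delta)\bigl(\log(2K_0/\delta)+2\log(2k/\delta)\bigr)/(k+K_0)^3 .
\]
This term cannot be moved into the $\nu_1$ part, since that would make $\nu_1$ depend on $\Gamma_2$. Keeping it below $t^2/\log(2k/\delta)$ (up to constants) at $k=1$ requires
\[
K_0\gtrsim \nu_2\log\br{\tfrac{2K_0}{\delta}}\br{1+\frac{\log(2K_0/\delta)}{\log(2/\delta)}} .
\]
Your condition $k+K_0\gtrsim\nu_2\log(2k/\delta)$ does not imply this, because of the extra factor $1+\log(2K_0/\delta)/\log(2/\delta)$. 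So your second condition is insufficient. The last entry of \eqref{lb:K_0} also does not come from $x\ge 2c\log(2c)\Rightarrow x\ge c\log x$. It is Lemma~\ref{lem:lb_K_0} with $C=4\nu_2$ applied to the displayed condition, which is why it carries the extra $\log\nu_2$.

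\textbf{How the paper avoids this.} The paper never splits $\Lambda^2$. By Lemma~\ref{lem:Delta_const_bd}, one factor obeys
\[
\Lambda(k,\delta)/(k+K_0)\le\Delta_0+\Gamma_1/K_0+2\Gamma_2\log(2K_0/\delta)/K_0
\]
uniformly in $k$. Hence
\[
2\sum_\ell\overline{M}_{\ell+1}^2\le\bigl(\nu_1+\nu_2\Gamma_2\log(2K_0/\delta)/K_0\bigr)\Lambda(k,\delta)/(k+K_0)^2 .
\]
Here $K_0\ge 2D_2$ turns $\Delta_0+\Gamma_1/K_0$ into the constants $2\Delta_0+D_1/D_2+\dots$ inside $\nu_1$. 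What you called $\Lambda'$ belongs in this coefficient, not in place of $\Lambda$. The Azuma requirement then becomes a single quadratic $\mathfrak p\Gamma_2^2-\mathfrak q\Gamma_2-\mathfrak r\ge0$, in which the self-referential $\nu_2$-piece only lowers the leading coefficient. The $K_0$ condition above gives $\mathfrak p\ge\frac12\log(2k/\delta)$. Then $\Gamma_2\ge\mathfrak q/\mathfrak p+\sqrt{\mathfrak r/\mathfrak p}$ yields the stated formula for $\Gamma_2$; your $\gtrsim$ budget-splitting would not reproduce its explicit constants.

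\textbf{A repair within your approach.} Your closing idea of replacing $\log(2K_0/\delta)$ by $\overline{\log(K_0)}\log(2k/\delta)$ would also fix the argument. It leads to a condition of the same type, $k+K_0\gtrsim\nu_2(1+\overline{\log(K_0)})^2\log(2k/\delta)$. But it must actually be carried through, replacing the condition you state in Step~3. A minor point: $2\mu a>3$ is automatic from $a\ge 2/\mu$, so there is no extra requirement to track.
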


\begin{proof}
    From Lemma~\ref{lem:Delta_const_bd}, we have that
    \begin{align*}
        \frac{\Lambda(k,\delta)}{k+K_0} \leq \br{\Delta_0 + \frac{\Gamma_1}{K_0}} + \frac{2 \Gamma_2}{K_0} \log{\br{\frac{2K_0}{\delta}}}.
    \end{align*}
    Invoking this in Lemma~\ref{lem:square_sum_bdd}, we have
    \begin{align*}
        2\sum_{\ell=0}^{k-1}{\overline{M}_{\ell+1}^2} \leq \frac{\br{\nu_1 + \frac{\nu_2 \Gamma_2}{K_0} \log\br{\frac{2K_0}{\delta}}} \Lambda(k,\delta)}{\br{k+K_0}^2}.
    \end{align*}
    Using the above in Azuma-Hoeffding inequality (Lemma~\ref{lem:ah_ineq}), we can write that for any $k \in \{1,2,\ldots\}$,
    \begin{align*}
        &\bP\br{\flbr{-\sum_{\ell=0}^{k-1}\overline{M}_{\ell+1} > \eps}} \leq \exp\br{\frac{-\eps^2 (k+K_0)^2}{\br{\nu_1 + \frac{\nu_2 \Gamma_2}{K_0} \log\br{\frac{2K_0}{\delta}}} \Lambda(k,\delta)}},
    \end{align*}
    or
    \begin{align*}
        \bP\left(\left\{\sum_{\ell=0}^{k-1}\overline{M}_{\ell+1} \!>\! \frac{1}{k + K_0} \sqrt{2 \br{\nu_1 + \frac{\nu_2 \Gamma_2}{K_0} \log\br{\frac{2K_0}{\delta}}} \Lambda(k,\delta) \log{\br{\frac{2k}{\delta}}}} \right\}\right) \leq \frac{6}{\pi^2} \frac{\delta}{k^2},
    \end{align*}
    where we have used the fact that $2\log(\frac{2k}{\delta})\geq \log(\frac{k^2}{\delta^2}\frac{\pi^2}{6})$. Based on Lemma \ref{lemma:Gamma2}, we have chosen $\Gamma_2$ such that 
    $$\Gamma_2\log\left(\frac{2k}{\delta}\right)\geq \sqrt{2 \br{\nu_1 + \frac{\nu_2 \Gamma_2}{K_0} \log\br{\frac{2K_0}{\delta}}} \Lambda(k,\delta) \log{\br{\frac{2k}{\delta}}}}.$$
    This implies that 
    \begin{align*}
        \bP\br{\flbr{\sum_{\ell=0}^{k-1}\overline{M}_{\ell+1} > \frac{\Gamma_2 \log{\br{\frac{2k}{\delta}}}}{k + K_0}}} &\leq \frac{6}{\pi^2} \frac{\delta}{k^2}.
    \end{align*}
    Using a union bound for $k = 1, 2, \ldots$, we have,
    \begin{align*}
        &\bP\left(\left\{\sum_{\ell=0}^{k-1}\overline{M}_{\ell+1}\geq \frac{\Gamma_2 \log{\br{\frac{2k}{\delta}}}}{k + K_0} \mbox{ for every } k \in \bN \right\}\right) \leq \delta.
    \end{align*}
    The proof is then complete by Lemma~\ref{lem:keylemma}.
\end{proof}

\subsection{Additional Lemmas}
\subsubsection{Bounding \texorpdfstring{$\cT_1$}{T1}}
We derive an upper bound on $\cT_1$ ~\eqref{ub:Delta_k} in the following lemma.
\begin{lemma}\label{lem:bdd_determ_terms}
    Let the stepsize sequence $\{\alpha_k\}$ be of the same form as in Lemma~\ref{lem:iter_Delta}, then
    \begin{align*}
        \Delta_0 \zeta_{0,k-1} + \frac{C L}{2} \sum_{\ell=0}^{k-1}\alpha_\ell^2 \zeta_{\ell+1,k-1} \leq \frac{K_0 \Delta_0 + e a^2 C L/2}{k+K_0}, ~\forall k \in \{0,1,\ldots\}.
    \end{align*}
\end{lemma}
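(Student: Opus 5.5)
We bound the two deterministic pieces separately, using only the explicit stepsize $\alpha_j = a/(j+K_0)$ together with $a\mu \ge 2$ and $K_0 \ge \mu a$ (the latter is among the requirements in \eqref{lb:K_0}). The second condition guarantees $\mu\alpha_j \le 1$, so every factor $1-\mu\alpha_j$ lies in $[0,1)$ and $\log(1-x)\le -x$ may be applied.

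\textbf{Step 1 (product bound).} For $m\le n$,
\begin{align*}
\zeta_{m,n} \le \exp\Big(-\mu a\sum_{j=m}^{n}\frac{1}{j+K_0}\Big) \le \exp\Big(-\mu a\log\frac{n+1+K_0}{m+K_0}\Big) = \Big(\frac{m+K_0}{n+1+K_0}\Big)^{\mu a}.
\end{align*}
Here we used $\sum_{j=m}^{n} 1/(j+K_0) \ge \int_{m}^{n+1} dt/(t+K_0)$.

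\textbf{Step 2 (first term).} With $m=0$ and $n=k-1$, and using $\mu a\ge 2\ge 1$ together with $K_0/(k+K_0)\le 1$,
\begin{align*}
\Delta_0\zeta_{0,k-1} \le \Delta_0\Big(\frac{K_0}{k+K_0}\Big)^{\mu a} \le \frac{K_0\Delta_0}{k+K_0}.
\end{align*}
For $k=0$ the product is empty and equals $1$, the sum vanishes, and the claimed bound reads $\Delta_0\le \Delta_0 + ea^2CL/(2K_0)$, which holds trivially.

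\textbf{Step 3 (the sum).} By Step 1,
\begin{align*}
\alpha_\ell^2\zeta_{\ell+1,k-1} \le \frac{a^2}{(\ell+K_0)^2}\Big(\frac{\ell+1+K_0}{k+K_0}\Big)^{\mu a}.
\end{align*}
We write $(\ell+1+K_0)^{\mu a} = (\ell+K_0)^{\mu a}\big(1+1/(\ell+K_0)\big)^{\mu a}$. To control the last factor we use $K_0\ge \mu a$, which gives
\begin{align*}
\Big(1+\frac{1}{\ell+K_0}\Big)^{\mu a} \le \exp\Big(\frac{\mu a}{K_0}\Big) \le e.
\end{align*}
This is where the factor $e$ in the statement comes from. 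Consequently,
\begin{align*}
\sum_{\ell=0}^{k-1}\alpha_\ell^2\zeta_{\ell+1,k-1} \le \frac{ea^2}{(k+K_0)^{\mu a}}\sum_{\ell=0}^{k-1}(\ell+K_0)^{\mu a-2}.
\end{align*}
Since $\mu a\ge 2$, the summand is nondecreasing in $\ell$, so each term is at most $(k+K_0)^{\mu a-2}$. As there are $k\le k+K_0$ terms, the sum is at most $(k+K_0)^{\mu a-1}$. This yields
\begin{align*}
\sum_{\ell=0}^{k-1}\alpha_\ell^2\zeta_{\ell+1,k-1} \le \frac{ea^2}{k+K_0}.
\end{align*}
Multiplying by $CL/2$ and adding the bound from Step 2 gives the claim.

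\textbf{Main obstacle.} The only delicate point is the index shift in Step 3: the product starts at $\ell+1$ rather than at $\ell$, and this mismatch must cost only a constant factor. That is exactly why the lower bound $K_0\ge\mu a$ is imposed in \eqref{lb:K_0}. The remaining estimates use only monotonicity of the summand, which is guaranteed by $\mu a\ge 2$.
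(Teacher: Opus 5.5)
Your proof is correct and follows the paper's route. The paper also bounds $\zeta_{0,k-1}$ and $\zeta_{\ell+1,k-1}$ via Lemma~\ref{lem:zeta_bdd}, where the index shift costs exactly the factor $(1+1/K_0)^{K_0}\le e$ thanks to $K_0\ge\mu a$. The only difference is the last step: you bound $\sum_{\ell=0}^{k-1}(\ell+K_0)^{\mu a-2}$ by counting $k$ terms, each at most $(k+K_0)^{\mu a-2}$ (using $\mu a\ge 2$). The paper (Lemma~\ref{lem:bddalphazeta}(b)) instead compares with an integral to get the sharper $ea^2/((\mu a-1)(k+K_0))$, then discards $1/(\mu a-1)\le 1$, which also needs $\mu a\ge 2$.
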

\begin{proof}
    Applying Lemma~\ref{lem:zeta_bdd} with $m=0,n=k-1$, and $K_0 \geq \mu a$,
    \begin{align}
        \Delta_0 \zeta_{0,k-1} \le \frac{K_0 \Delta_0}{k+K_0}. \label{bdd:determ_a}
    \end{align}
    From Lemma~\ref{lem:bddalphazeta} and from the assumption that $K_0 \geq \mu a > 1$, we have that,
    \begin{align}
        \sum_{\ell=0}^{k-1}\alpha_\ell^2 \zeta_{\ell+1,k-1} \leq \frac{e a^2}{k+K_0}.\label{bdd:determ_b}
    \end{align}
    Combining \eqref{bdd:determ_a} and \eqref{bdd:determ_b}, the proof is complete.
\end{proof}

\subsubsection{Bounding \texorpdfstring{$\cT_2$}{T2}}
We derive an upper bound on $\cT_2$~\eqref{ub:Delta_k}, the residual term after converting the Markov noise to martingale difference noise, in the following lemma.
\begin{lemma} \label{lem:bdd_markov_residue}
    Let the stepsize sequence $\{\alpha_k\}$ be of the same form as in Lemma~\ref{lem:iter_Delta}.
    
    (a) If $\max_{\ell < k}{\Delta_\ell} \leq \Delta_{\max}$, then
    \begin{align*}
        \sum_{\ell=0}^{k-1}{\alpha_\ell \zeta_{\ell+1,k-1} \angl{\nabla f(x_\ell), d_\ell}} \leq  \frac{D_1 + D_2 \Delta_{\max}}{k+K_0},
    \end{align*}
    where
    \begin{align}
        D_1 &\coloneqq 2 a \mfr_1 \tmix L \sqrt{d} \Delta_0 + 10 a \mfr_2 \tmix \sqrt{d} + \frac{e a^2 \mfr_4 \tmix (L + L_g) \sqrt{d}}{\mu a - 1}, \mbox{ and}\label{def:D_1} \\
        D_2 &\coloneqq 8 a \mfr_1 \tmix L \sqrt{d} + \frac{e a^2 \mfr_3 \tmix (L + L_g) L \sqrt{d}}{\mu a - 1}. \label{def:D_2}
    \end{align}
    $\mfr_1, \mfr_2, \mfr_3$ and $\mfr_4$ are constants that depend on $A, B$ and $C$.
    
    (b) On the event $\cap_{\ell=1}^{k-1}{E_\ell}$,
    \begin{align*}
        \sum_{\ell=0}^{k-1}{\alpha_\ell \zeta_{\ell+1,k-1} \angl{\nabla f(x_\ell), d_\ell}} \leq \frac{\br{D_1 + D_2 \Delta_0 + \frac{D_2 \Gamma_1}{K_0}} + \frac{2 D_2 \Gamma_2}{K_0} \log{\br{\frac{2 K_0}{\delta}}}}{k+K_0},
    \end{align*}
    where $E_\ell$ is as defined in \eqref{def:Goodset}.

    (c) If $\max_{\ell < k}{\bE\sqbr{\Delta_\ell}} \leq \Delta_{\max}$, then
    \begin{align*}
        \bE\sqbr{\sum_{\ell=0}^{k-1}{\alpha_\ell \zeta_{\ell+1,k-1} \angl{\nabla f(x_\ell), d_\ell}}} \leq  \frac{D_1 + D_2 \Delta_{\max}}{k+K_0},
    \end{align*}
\end{lemma}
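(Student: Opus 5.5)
The plan is to handle the residual term $\sum_{\ell=0}^{k-1}\alpha_\ell\zeta_{\ell+1,k-1}\angl{\nabla f(x_\ell),d_\ell}$ by summation by parts. Write $d_\ell = V(x_\ell,Z_{\ell+1})-V(x_\ell,Z_\ell)$ and $\beta_\ell := \alpha_\ell\zeta_{\ell+1,k-1}$. Split
\begin{align*}
    \beta_\ell\angl{\nabla f(x_\ell),d_\ell}
    &= \big[\beta_\ell\angl{\nabla f(x_\ell),V(x_\ell,Z_{\ell+1})} - \beta_{\ell+1}\angl{\nabla f(x_{\ell+1}),V(x_{\ell+1},Z_{\ell+1})}\big] \\
    &\quad + \big[\beta_{\ell+1}\angl{\nabla f(x_{\ell+1}),V(x_{\ell+1},Z_{\ell+1})} - \beta_\ell\angl{\nabla f(x_\ell),V(x_\ell,Z_\ell)}\big] .
\end{align*}
The second bracket telescopes. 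It leaves boundary terms $\beta_{k}\angl{\nabla f(x_k),V(x_k,Z_k)}$ (with the convention $\beta_k=\alpha_k$) and $-\beta_0\angl{\nabla f(x_0),V(x_0,Z_0)}$. Actually it is cleaner to telescope only up to $k-1$, so that only $x_0,\dots,x_{k-1}$ appear; hence I index the telescoping so that the last term is at $\ell=k-1$ with $Z_k$, which involves only $\Delta_{k-1}$.

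The first ingredient is a set of bounds on $V$, taken from the proof of Lemma~\ref{lemma:Poisson}:
\begin{equation*}
    \norm{V(x,z)}\le 2\tmix\sqrt d\sqrt{A\norm{\nabla f(x)}^2+B(f(x)-f\ust)+C}\le 2\tmix\sqrt d\sqrt{\mfr\,\Delta(x)+C},
\end{equation*}
using Lemma~\ref{lem:ub_grad}. I will also need a Lipschitz-type bound $\norm{V(x,z)-V(y,z)}\le 2\tmix\sqrt d(L_g+L)\norm{x-y}$. This follows by applying the same TV/geometric-sum argument to $g(x,\cdot)-g(y,\cdot)$, together with $\norm{\nabla f(x)-\nabla f(y)}\le L\norm{x-y}$.

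The remaining difference terms are controlled as follows.

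\begin{itemize}
    \item \textbf{Step-size difference.} The term $(\beta_\ell-\beta_{\ell+1})\angl{\nabla f(x_{\ell+1}),V(x_{\ell+1},Z_{\ell+1})}$ satisfies $|\beta_\ell-\beta_{\ell+1}|\lesssim \mu\alpha_\ell\beta_\ell$ (as well as $\alpha_\ell^2\zeta$-type terms). Since $\norm{\nabla f}\norm{V}\lesssim \tmix\sqrt d(L\Delta+\sqrt{L\Delta}\sqrt C)$, this yields sums $\sum\alpha_\ell^2\zeta_{\ell+1,k-1}\le ea^2/(k+K_0)$ by Lemma~\ref{lem:bddalphazeta}. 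The resulting contributions are $\propto\Delta_{\max}$ plus a constant.
    \item \textbf{Gradient difference.} The term $\beta_{\ell+1}\angl{\nabla f(x_\ell)-\nabla f(x_{\ell+1}),V(x_\ell,Z_{\ell+1})}$ uses $\norm{x_{\ell+1}-x_\ell}=\alpha_\ell\norm{G_\ell}$. By Assumption~\ref{assum:noisy_grad_norm}, $\norm{G_\ell}$ is bounded by $\sqrt{\mfr\Delta_\ell+C}$.
    \item \textbf{Argument difference in $V$.} The term $\beta_{\ell+1}\angl{\nabla f(x_{\ell+1}),V(x_\ell,Z_{\ell+1})-V(x_{\ell+1},Z_{\ell+1})}$ is handled by the Lipschitz bound on $V$ and the same step-length bound.
\end{itemize}

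Each of these is $\alpha_\ell^2\zeta_{\ell+1,k-1}\cdot\tmix\sqrt d(L+L_g)(\mfr_3\Delta_{\max}+\mfr_4)$. This produces the $\frac{ea^2\mfr\tmix(L+L_g)\sqrt d}{\mu a-1}$ terms in $D_1,D_2$, using a variant of Lemma~\ref{lem:bddalphazeta} with the $(\mu a-1)^{-1}$ factor. One caveat: the index-$(\ell+1)$ terms involve $\Delta_{\ell+1}$, which can reach $\Delta_k$. To avoid this, I will bound $\Delta_{\ell+1}\le 2\Delta_\ell+O(\alpha_\ell^2)$ from one step of the recursion, using smoothness and the ABC bound.

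The boundary terms are $\alpha_{k}\norm{\nabla f}\norm{V}$ and $\alpha_0\zeta_{1,k-1}\norm{\nabla f(x_0)}\norm{V(x_0,Z_0)}$. These give $\frac{2a\mfr_1\tmix L\sqrt d(\Delta_0+4\Delta_{\max})+10a\mfr_2\tmix\sqrt d}{k+K_0}$, using $\zeta_{1,k-1}\le K_0/(k+K_0)$ from Lemma~\ref{lem:zeta_bdd} and $\sqrt{\Delta}\le 1+\Delta$ to linearize square roots.

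Parts (b) and (c) then follow quickly. For (b), plug $\Delta_{\max}$ from Lemma~\ref{lem:Delta_const_bd}, namely $\Delta_0+\Gamma_1/K_0+\frac{2\Gamma_2}{K_0}\log(2K_0/\delta)$, into (a). For (c), take expectations termwise, which is linear in $\bE\Delta_\ell$. The cross terms $\sqrt{\Delta}$ are handled via Jensen's inequality, $\bE\sqrt{\Delta}\le\sqrt{\bE\Delta}\le 1+\bE\Delta$, and the products $\Delta\cdot$const are immediate.

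The main obstacle I expect is bookkeeping in the summation by parts: keeping every term in terms of $\Delta_\ell$ with $\ell<k$, and getting the $1/(k+K_0)$ rate rather than an $\alpha_k$-independent constant.
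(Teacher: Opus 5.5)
Your proposal is correct and follows the paper's proof essentially step for step. It uses the same summation by parts with boundary terms $-\alpha_0\zeta_{1,k-1}\langle\nabla f(x_0),V(x_0,Z_0)\rangle+\alpha_{k-1}\langle\nabla f(x_{k-1}),V(x_{k-1},Z_k)\rangle$, a step-size-difference sum controlled by Lemma~\ref{lem:bddalphazeta}, and a consecutive-iterate difference sum controlled by the Lipschitz bound on $V$, $L$-smoothness and the ABC bound on $\|G_\ell\|$ (Lemmas~\ref{lem:pois_sol_lipschitz} and \ref{lem:supporting_bdd_2}), with (b) and (c) obtained by plugging in Lemma~\ref{lem:Delta_const_bd} and taking expectations.

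Only minor bookkeeping needs fixing. Pair the step-size difference with $\langle\nabla f(x_\ell),V(x_\ell,Z_{\ell+1})\rangle$, as the paper does, so that your three-way split is an exact identity. Since the inner sums stop at $\ell=k-2$, $\Delta_{\ell+1}$ never exceeds index $k-1$, so your one-step bound on $\Delta_{\ell+1}$ is unnecessary (its additive term would in any case be $O(\alpha_\ell)$, not $O(\alpha_\ell^2)$). Finally, Lemma~\ref{lem:zeta_bdd} gives $\zeta_{1,k-1}\le (K_0+1)/(k+K_0)$, not $K_0/(k+K_0)$; the latter fails at $k=1$, though this only changes constants.
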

\begin{proof}
    \allowdisplaybreaks
    \textbf{Proof of (a)}. Decomposing $d_\ell$, we obtain the following:
    \begin{align*}
        &\sum_{\ell=0}^{k-1}{\alpha_\ell \zeta_{\ell+1,k-1} \angl{\nabla f(x_\ell), d_\ell}} =\sum_{\ell=0}^{k-1}{\alpha_\ell \zeta_{\ell+1,k-1} \angl{\nabla f(x_\ell), V(x_\ell,Z_{\ell+1}) - V(x_\ell,Z_\ell)} } \\
        &\leq \underbrace{-\alpha_0 \zeta_{1,k-1} \angl{\nabla f(x_0),V(x_0,Z_0)} + \alpha_{k-1} \angl{\nabla f(x_{k-1}), V(x_{k-1},Z_k)}}_{(P)} \\
        &\quad + \underbrace{\sum_{\ell=0}^{k-2}{\br{\alpha_\ell \zeta_{\ell+1,k-1} - \alpha_{\ell+1} \zeta_{\ell+2,k-1}} \angl{\nabla f(x_\ell),V(x_\ell,Z_{\ell+1})}}}_{(Q)} \\
        &\quad + \underbrace{\sum_{\ell=0}^{k-2}{\alpha_{\ell+1} \zeta_{\ell+2,k-1}\br{\angl{\nabla f(x_\ell),V(x_\ell,Z_{\ell+1})}  - \angl{\nabla f(x_{\ell+1}),V(x_{\ell+1},Z_{\ell+1})}}}}_{(R)}.
    \end{align*}
    We derive bounds on $(P)$, $(Q)$ and $(R)$ separately.
    
    \textbf{Bounding} $(P)$:
    \begin{align}
        &- \alpha_0 \zeta_{1,k-1} \angl{\nabla f(x_0),V(x_0,Z_0)} + \alpha_{k-1} \angl{\nabla f(x_{k-1}), V(x_{k-1},Z_k)}  \notag \\
        &\stackrel{(a)}{\leq} \frac{a}{K_0} \cdot \frac{K_0+1}{k+K_0} \abs{\angl{\nabla f(x_0),V(x_0,Z_0)}} + \frac{a}{k+K_0-1} \abs{\angl{\nabla f(x_{k-1}), V(x_{k-1},Z_k)}}  \notag\\
        &\stackrel{(b)}{\leq} \frac{2 a \mfr_1 \tmix L \sqrt{d}}{k+K_0} \Delta_{k-1} + \frac{2a \tmix \sqrt{d} \br{\mfr_1 L \Delta_0 + 2 \mfr_2}}{k+K_0} \notag\\
        &\stackrel{(c)}{\leq} \frac{2 a \mfr_1 \tmix L \sqrt{d}}{k+K_0} \Delta_{\max} + \frac{2 a \tmix \sqrt{d} \br{\mfr_1 L \Delta_0 + 2 \mfr_2}}{k+K_0}, \label{bdd:A}
    \end{align}
    where $(a)$ follows from Lemma~\ref{lem:zeta_bdd}, $(b)$ follows from Lemma~\ref{lem:supporting_bdd_1}, and $(c)$ follows from the assumed bound on $\Delta_{k-1}$.
    
    \textbf{Bounding} $(Q)$:
    \begin{align}
        &\sum_{\ell=0}^{k-2}{\br{\alpha_\ell \zeta_{\ell+1,k-1} - \alpha_{\ell+1} \zeta_{\ell+2,k-1}} \angl{\nabla f(x_\ell),V(x_\ell,Z_{\ell+1})}} \notag \\
        &\stackrel{(a)}{\leq} 2 \tmix \sqrt{d} \frac{\mu a - 1}{a} \sum_{\ell=0}^{k-2}{\alpha_{\ell+1}^2 \zeta_{\ell+2,k-1}  \br{\mfr_1 L \Delta_\ell + \mfr_2}} \notag \\
        &\stackrel{(b)}{\leq} \frac{2 e a \mfr_1 \tmix L \sqrt{d}}{k+K_0} \Delta_{\max} + \frac{2 e a \mfr_2 \tmix \sqrt{d}}{k+K_0}, \label{bdd:B}
    \end{align}
    where $(a)$ follows from Lemma~\ref{lem:bddalphazeta} and Lemma~\ref{lem:supporting_bdd_1}, $(b)$ follows from the assumed bound on $\Delta_\ell$ and from Lemma~\ref{lem:bddalphazeta}.
    
    \textbf{Bounding} $(R)$:
    \begin{align}
        &\sum_{\ell=0}^{k-2}{\alpha_\ell \zeta_{\ell+1,k-1}\br{\angl{\nabla f(x_{\ell+1}),V(x_{\ell+1},Z_{\ell+1})} - \angl{\nabla f(x_\ell),V(x_\ell,Z_{\ell+1})}}} \notag \\
        &\stackrel{(a)}{\leq} \tmix (L + L_g) \sqrt{d} \br{\mfr_3 L (\Delta_{\ell} + \Delta_{\ell+1}) + \mfr_4} \sum_{\ell=0}^{k-2}{\alpha_\ell^2 \zeta_{\ell+1,k-1}} \notag \\
        &\stackrel{(b)}{\leq} \frac{e a^2 \mfr_3 \tmix (L + L_g) L\sqrt{d}}{\mu a - 1} \frac{\Delta_{\max}}{k + K_0} + \frac{e a^2 \mfr_4 \tmix (L + L_g) \sqrt{d}}{\mu a - 1} \frac{1}{k + K_0}, \label{bdd:C}
    \end{align}
    where $(a)$ follows from Lemma~\ref{lem:supporting_bdd_2} and from the assumed bound on $\Delta_\ell$, $(b)$ follows from Lemma~\ref{lem:bddalphazeta}.~Combining \eqref{bdd:A}, \eqref{bdd:B}, \eqref{bdd:C}, we obtain what we want to proof.

    \textbf{Proof of (b)}. From Lemma~\ref{lem:Delta_const_bd}, we have that on the event $E_\ell$, $\Delta_\ell \leq \Delta_0 + \frac{\Gamma_1 + 2 \Gamma_2 \log{\br{\frac{2 K_0}{\delta}}}}{K_0}$. Replacing that in (a) yields what we want to show.

    \textbf{Proof of (c)}. We take expectation before replacing $\Delta_\ell$ with $\Delta_{\max}$ in \eqref{bdd:A}, \eqref{bdd:B} and \eqref{bdd:C}. Then the claim follows by replacing $\bE[\Delta_\ell]$ with $\Delta_{\max}$ and adding all three bounds.
\end{proof}

\subsubsection{Bounding the squared sum of the martingale difference terms}

\begin{lemma}\label{lem:square_sum_bdd}
    Recall the martingale difference sequence $\overline{M}_{\ell+1}$ from~\eqref{Mbar}.
    \begin{align*}
        \sum_{\ell=0}^{k-1}{\overline{M}^2_{\ell+1}} \leq 16 (ae)^2 L (\tmix^2 d + 1)\br{\frac{2AL+B}{2\mu a - 3} \frac{\Lambda(k,\delta)}{k+K_0} + \frac{C}{2\mu a - 2}} \frac{\Lambda(k,\delta)}{\br{k+K_0}^2}.
    \end{align*}
\end{lemma}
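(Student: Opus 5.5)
Since $\overline{M}_{\ell+1}$ vanishes off $E_\ell$, we may work on $E_\ell$ term by term. On $E_\ell$ we use Cauchy--Schwarz and $(u+v)^2\le 2u^2+2v^2$ to write
\begin{align*}
\overline{M}_{\ell+1}^2 \le 2\alpha_\ell^2\zeta_{\ell+1,k-1}^2\norm{\nabla f(x_\ell)}^2\br{\norm{M_{\ell+1}}^2+\norm{\widetilde{M}_{\ell+1}}^2}.
\end{align*}
Each factor is then controlled by $\Delta_\ell$. Lemma~\ref{lem:ub_grad} gives $\norm{\nabla f(x_\ell)}^2\le 2L\Delta_\ell$. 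For $M_{\ell+1}=G_\ell-g(x_\ell,Z_\ell)$, Assumptions~\ref{assum:noisy_grad_norm} and \ref{assum:lipg} give $\norm{M_{\ell+1}}^2\le 4\br{A\norm{\nabla f(x_\ell)}^2+B\Delta_\ell+C}\le 4\br{(2AL+B)\Delta_\ell+C}$. For $\widetilde{M}_{\ell+1}$, the bound $\norm{V(x,z)}\le 2\tmix\sqrt d\sqrt{A\norm{\nabla f(x)}^2+B(f(x)-f\ust)+C}$ from the proof of Lemma~\ref{lemma:Poisson}, applied to both terms of $\widetilde{M}_{\ell+1}$, yields $\norm{\widetilde{M}_{\ell+1}}^2\le 16\tmix^2 d\br{(2AL+B)\Delta_\ell+C}$. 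Together,
\begin{align*}
\overline{M}_{\ell+1}^2\lesssim L(\tmix^2 d+1)\,\alpha_\ell^2\zeta_{\ell+1,k-1}^2\,\Delta_\ell\br{(2AL+B)\Delta_\ell+C}\ind{E_\ell},
\end{align*}
with a numerical constant of order $16$ that we will track at the end.

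On $E_\ell$, monotonicity of $\Lambda(\cdot,\delta)$ in its first argument gives $\Delta_\ell\le \Lambda(\ell,\delta)/(\ell+K_0)\le\Lambda(k,\delta)/(\ell+K_0)$ for $\ell\le k-1$; for $\ell=0$ we have $\Delta_0\le\Lambda(k,\delta)/K_0$ trivially. Hence $\Delta_\ell^2\le\Lambda(k,\delta)^2/(\ell+K_0)^2$ and $\Delta_\ell\le\Lambda(k,\delta)/(\ell+K_0)$. It remains to bound two deterministic sums:
\begin{align*}
S_2:=\sum_{\ell=0}^{k-1}\frac{\alpha_\ell^2\zeta_{\ell+1,k-1}^2}{(\ell+K_0)^2},\qquad S_1:=\sum_{\ell=0}^{k-1}\frac{\alpha_\ell^2\zeta_{\ell+1,k-1}^2}{\ell+K_0}.
\end{align*}
We need $S_2\lesssim \frac{a^2e^2}{(2\mu a-3)(k+K_0)^3}$ and $S_1\lesssim\frac{a^2e^2}{(2\mu a-2)(k+K_0)^2}$. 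Both follow from the product estimate $\zeta_{\ell+1,k-1}\le e^{-\mu a\sum_{j=\ell+1}^{k-1}1/(j+K_0)}\le \br{\frac{\ell+1+K_0}{k+K_0}}^{\mu a}$, as in Lemma~\ref{lem:zeta_bdd}. Squaring gives exponent $2\mu a$, and $\alpha_\ell^2=a^2/(\ell+K_0)^2$. Then $S_2\le \frac{a^2 c}{(k+K_0)^{2\mu a}}\sum_\ell(\ell+K_0+1)^{2\mu a-4}$, where $c$ absorbs $\br{\frac{\ell+K_0+1}{\ell+K_0}}^4\le e$-type factors using $K_0$ large. Comparing with an integral, this sum is at most $(k+K_0)^{2\mu a-3}/(2\mu a-3)$, which gives the $(k+K_0)^{-3}$ rate; note $2\mu a-3>0$ since $a\ge2/\mu$. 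The sum $S_1$ is handled the same way with exponent $2\mu a-3$, giving $(k+K_0)^{-2}/(2\mu a-2)$.

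Substituting,
\begin{align*}
\sum_\ell\overline{M}^2_{\ell+1}\lesssim L(\tmix^2d+1)\sqbr{(2AL+B)\Lambda^2 S_2+C\Lambda S_1},
\end{align*}
which is exactly $\frac{\Lambda}{(k+K_0)^2}\br{\frac{2AL+B}{2\mu a-3}\frac{\Lambda}{k+K_0}+\frac{C}{2\mu a-2}}$ times constants. The main obstacle is bookkeeping the constants so that they come out as $16(ae)^2$. This requires combining the factor $2$ from Cauchy--Schwarz, the $4$ and $16$ from the noise bounds, and the $e^2$ from the ratios $\br{(\ell+K_0+1)/(\ell+K_0)}^{2\mu a}$ (bounded using $K_0\ge\mu a$), and also aligning the exponent shift so that the denominators read $2\mu a-3$ and $2\mu a-2$. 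I would first verify the integral comparison with the exponent $2\mu a-4$ (which may be negative when $\mu a<2$), choosing the upper or lower Riemann endpoint accordingly, and only then tally the constants.
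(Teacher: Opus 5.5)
Your route is the paper's route: Cauchy--Schwarz with $(u+v)^2\le 2u^2+2v^2$, the bounds $\norm{\nabla f(x_\ell)}^2\le 2L\Delta_\ell$ and $\norm{M_{\ell+1}}^2\le 4\br{(2AL+B)\Delta_\ell+C}$, then $\Delta_\ell\le \Lambda(k,\delta)/(\ell+K_0)$ on $E_\ell$, then $\zeta_{\ell+1,k-1}\le e\br{\frac{\ell+K_0}{k+K_0}}^{\mu a}$, and finally an integral comparison.

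The one step that will not close is the constant bookkeeping you defer to the end. Your estimate $\norm{\widetilde{M}_{\ell+1}}^2\le 16\tmix^2 d\br{(2AL+B)\Delta_\ell+C}$ is correct, and it is exactly what Corollary~\ref{cor:mart_norm_bound} states. With it, the prefactor is $2\cdot 2L\cdot(4+16\tmix^2 d)=16L(4\tmix^2 d+1)$. So you obtain the lemma with $(4\tmix^2 d+1)$ in place of $(\tmix^2 d+1)$, not the stated constant.

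The paper reaches $(\tmix^2 d+1)$ only by writing $\norm{\widetilde{M}_{\ell+1}}^2\le 4\tmix^2 d(\cdot)$ and citing that corollary. This is a factor $4$ smaller than what the corollary and its proof ($\norm{\widetilde{M}_{\ell+1}}\le 2\sup_z\norm{V(x_\ell,z)}$) actually give. Replacing $(u+v)^2\le 2u^2+2v^2$ by the triangle inequality does not recover the factor either. So the honest statement is the bound with $16(ae)^2L(4\tmix^2 d+1)$. This only rescales $\nu_1,\nu_2$ and is harmless for Theorem~\ref{thm:concentration_markov}.

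Three smaller points on the deterministic sums.
\begin{itemize}
\item There is no negative-exponent case. Since $a\ge 2/\mu$, you always have $2\mu a-4\ge 0$, so $(y+K_0)^{2\mu a-4}$ is nondecreasing. Hence $\sum_{\ell=0}^{k-1}(\ell+K_0)^{2\mu a-4}\le \int_0^k (y+K_0)^{2\mu a-4}\,dy\le \frac{(k+K_0)^{2\mu a-3}}{2\mu a-3}$ holds directly.
\item Use the split from your last paragraph, not the one in your second. Pulling out $\br{\frac{\ell+K_0+1}{\ell+K_0}}^{2\mu a}\le e^{2\mu a/K_0}\le e^2$ leaves $(\ell+K_0)^{2\mu a-4}$; this is equivalent to \eqref{cmn:ub2}. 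By contrast, $\sum_{\ell=0}^{k-1}(\ell+K_0+1)^{2\mu a-4}$ need not be bounded by $(k+K_0)^{2\mu a-3}/(2\mu a-3)$, only by $(k+K_0+1)^{2\mu a-3}/(2\mu a-3)$, which costs an extra factor.
\item Your separate treatment of $\ell=0$ via $\Delta_0\le \Lambda(k,\delta)/K_0$ is a welcome tidy-up. The paper's $\Lambda(0,\delta)$ is not defined there.
\end{itemize}
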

\begin{proof}
    From Lemma~\ref{lem:ub_grad}, we have $\norm{\nabla f(x_\ell)}^2 \leq 2L \Delta_\ell$.~See that,
    \begin{align*}
        \norm{M_{\ell+1}}^2 &= \norm{G_\ell - g(x_\ell, Z_\ell)}^2 \\
        &\stackrel{(a)}{\leq} 2 \br{\norm{G_\ell}^2 + \norm{g(x_\ell, Z_\ell)}^2} \\
        &\stackrel{(b)}{\leq} 4 \br{A \norm{\nabla f(x_\ell)}^2 + B \br{f(x_\ell) - f\ust} + C} \\
        &\stackrel{(c)}{\leq} 4 \br{(2AL + B) \Delta_\ell + C}.
    \end{align*}
    $(a)$ follows from the parallelogram law in a normed vector space. $(b)$ follows from Assumption~\ref{assum:martingale} and~\ref{assum:lipg}, and $(c)$ follows from Lemma~\ref{lem:ub_grad}. Also, from~\Cref{cor:mart_norm_bound}, we have $\norm{\widetilde{M}_{\ell+1}}^2 \leq 4 \tmix^2 d \br{(2 A L + B) \Delta_\ell + C}$. Hence, 
    \begin{align*}
        \norm{\nabla f(x_\ell)}^2 \norm{M_{\ell+1} + \widetilde{M}_{\ell+1}}^2 &\leq 2 \norm{\nabla f(x_\ell)}^2 \br{\norm{M_{\ell+1}}^2 + \norm{\widetilde{M}_{\ell+1}}^2} \\
        &\leq 16 L (\tmix^2 d + 1) \br{(2 A L + B) \Delta_\ell^2 + C \Delta_\ell}.
    \end{align*}
    For convenience, denote $\kappa_1 =  16 L (2 A L + B) (\tmix^2 d + 1)$ and $\kappa_2 =  16 L C (\tmix^2 d + 1)$. Recall that on the event $E_\ell$, $\Delta_\ell \leq \frac{\Lambda(\ell,\delta)}{\ell+K_0}$. Hence,
    \begin{align}\label{bdd:normGradfnormM}
        \norm{\nabla f(x_\ell)}^2 \norm{M_{\ell+1} + \widetilde{M}_{\ell+1}}^2 \ind{E_{\ell}} &\leq \kappa_1 \br{\frac{\Lambda(\ell,\delta)}{\ell+K_0}}^2 + \kappa_2 \frac{\Lambda(\ell,\delta)}{\ell+K_0}.
    \end{align}
    Using the Cauchy-Schwarz inequality and from \eqref{bdd:normGradfnormM} we have that for any $k \in \{1,2,\ldots\}$, we have
    \begin{align}
        \overline{M}_{\ell+1}^2 &\leq \alpha_\ell^2 \zeta_{\ell+1,k-1}^2 \norm{\nabla f(x_\ell)}^2 \norm{M_{\ell+1} + \widetilde{M}_{\ell+1}}^2 \ind{E_\ell} \notag\\
        &\leq \frac{(ae)^2 \kappa_1 \Lambda(\ell,\delta)^2}{(k+K_0)^{2 \mu a}} (\ell+K_0)^{2 \mu a - 4} + \frac{(ae)^2 \kappa_2 \Lambda(\ell,\delta)}{(k+K_0)^{2 \mu a}} (\ell+K_0)^{2 \mu a -3}. \label{ub:sqr}
    \end{align}
    Summing \eqref{ub:sqr} from $\ell = 0$ to $k - 1$, we get the following bound:
    \begin{align*}
        &\sum_{\ell=0}^{k-1}{\overline{M}_{\ell+1}^2} \leq \frac{(ae)^2 \kappa_1 \Lambda(k,\delta)^2}{(k+K_0)^{2 \mu a}} \sum_{\ell=0}^{k-1}{\br{\ell+K_0}^{2\mu a - 4}} + \frac{(ae)^2 \kappa_2 \Lambda(k,\delta)}{(k+K_0)^{2 \mu a}} \sum_{\ell=0}^{k-1}{\br{\ell+K_0}^{2\mu a - 3}} \\
        &\leq \frac{(ae)^2 \kappa_1 \Lambda(k,\delta)^2}{(k+K_0)^{2 \mu a}} \int_{0}^{k}{\br{y+K_0}^{2\mu a - 4} dy} + \frac{(ae)^2 \kappa_2 \Lambda(k,\delta)}{(k+K_0)^{2 \mu a}} \int_{0}^{k}{\br{y+K_0}^{2\mu a - 3} dy} \\
        &\leq \frac{(ae)^2 \kappa_1}{2\mu a - 3} \frac{\Lambda(k,\delta)^2}{\br{k+K_0}^3} + \frac{(ae)^2 \kappa_2}{2\mu a - 2} \frac{\Lambda(k,\delta)}{\br{k+K_0}^2} \\
        &= (ae)^2 \br{\frac{\kappa_1}{2\mu a - 3} \frac{\Lambda(k,\delta)}{k+K_0} + \frac{\kappa_2}{2\mu a - 2}} \frac{\Lambda(k,\delta)}{\br{k+K_0}^2}.
    \end{align*}
    This concludes the proof.
\end{proof}

\subsubsection{Choosing \texorpdfstring{$\Gamma_2$}{Gamma}}

\begin{lemma}\label{lemma:Gamma2}
    Let the lower bound on $K_0$ stated in \eqref{lb:K_0} hold. Then, for every $k$,
    \begin{align*}
        \Gamma_2 \log\left(\frac{2k}{\delta}\right) \geq \sqrt{2 \br{\nu_1 + \frac{\nu_2 \Gamma_2}{K_0} \log\br{\frac{2K_0}{\delta}}} \Lambda(k,\delta) \log{\br{\frac{2k}{\delta}}}}.
    \end{align*}
    where $\Gamma_2$ as defined in \eqref{def:Gm2}.
\end{lemma}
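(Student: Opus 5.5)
The plan is to square the desired inequality and then absorb each term of the right-hand side into a fraction of $\Gamma_2^2\log(2k/\delta)$, using the explicit form \eqref{def:Gm2} of $\Gamma_2$ and the lower bound \eqref{lb:K_0} on $K_0$. Write $\ell_0:=\log(2K_0/\delta)$, $\ell_k:=\log(2k/\delta)$ and $\mathsf{L}:=\log(2/\delta)$. Then $\ell_k\ge \mathsf{L}>0$ for every $k\ge1$. Since both sides are nonnegative, it suffices to show
\begin{align*}
\Gamma_2^2\,\ell_k \;\ge\; 2\Big(\nu_1+\tfrac{\nu_2\Gamma_2\ell_0}{K_0}\Big)\big(K_0\Delta_0+\Gamma_1+\Gamma_2\ell_0+\Gamma_2\ell_k\big).
\end{align*}
I would expand the right-hand side into six pieces and bound each by at most $\Gamma_2^2\ell_k/6$, or some similar split of $\Gamma_2^2\ell_k$.

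\textbf{The $\nu_1$ terms.} These are handled using the definitions of $\overline{K}_0$ and $\overline{\log(K_0)}$, namely $K_0\le \overline{K}_0\mathsf{L}\le \overline{K}_0\ell_k$ and $\ell_0\le \overline{\log(K_0)}\,\mathsf{L}\le\overline{\log(K_0)}\,\ell_k$.
\begin{itemize}
\item The term $2\nu_1(K_0\Delta_0+\Gamma_1)$ is at most $2\nu_1(\overline{K}_0\Delta_0+\Gamma_1)\ell_k$, since $\Gamma_1\le\Gamma_1\ell_k/\mathsf{L}$. One may need $\mathsf{L}\ge \log 2$ together with the factor $2\Gamma_1$ in \eqref{def:Gm2} to absorb this. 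It is then dominated because $\Gamma_2^2\ge 4\nu_1(\overline{K}_0\Delta_0+2\Gamma_1)$, which comes from the square-root summand of $\Gamma_2$.
\item The term $2\nu_1\Gamma_2\ell_0$ is at most $2\nu_1\overline{\log(K_0)}\,\Gamma_2\ell_k$. This is dominated since $\Gamma_2\ge 12\nu_1\overline{\log(K_0)}$.
\item The term $2\nu_1\Gamma_2\ell_k$ is dominated since $\Gamma_2\ge 4\nu_1$.
\end{itemize}

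\textbf{The $\nu_2$ terms.} Here I use the last entry of \eqref{lb:K_0}. The form $K_0\ge 24\nu_2(2\log(48\nu_2)+1)\log\big(48\nu_2(2\log(48\nu_2)+1)/\delta\big)$ is exactly the standard sufficient condition for an inequality of the type $x\ge c\log(x/\delta)$. I would first prove a small auxiliary fact: this condition implies $K_0\ge 24\nu_2\ell_0$, and moreover $24\nu_2\ell_0^2/K_0\le \ell_0/\ldots$; more precisely I would aim for $\nu_2\ell_0^2/K_0\le \mathsf{L}/24\le \ell_k/24$. The proof would go by writing $\ell_0=\mathsf{L}+\log K_0$ and using the $2\log(48\nu_2)+1$ factor to control $\log K_0$ against $K_0$. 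With this in hand, the three pieces are handled as follows.
\begin{itemize}
\item The term $(2\nu_2\Gamma_2\ell_0/K_0)\,\Gamma_2\ell_k$ is at most $\Gamma_2^2\ell_k/12$, using $\nu_2\ell_0/K_0\le 1/24$.
\item The term $(2\nu_2\Gamma_2\ell_0/K_0)\,\Gamma_2\ell_0$ is at most $\Gamma_2^2\ell_k/12$, using the quadratic bound $\nu_2\ell_0^2/K_0\le\ell_k/24$.
\item The term $(2\nu_2\Gamma_2\ell_0/K_0)(K_0\Delta_0+\Gamma_1)$ equals $2\nu_2\Gamma_2\ell_0(\Delta_0+\Gamma_1/K_0)$. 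By $\ell_0\le\overline{\log(K_0)}\,\ell_k$ this is at most a multiple of $\Gamma_2\ell_k$. I would absorb it using the fact that $\nu_1\ge \tfrac{\nu_2}{2}(2\Delta_0+\ldots)$ by the definitions of $\nu_1,\nu_2$; this is visible since $\nu_1$ contains $\nu_2$ times $(2\Delta_0+D_1/D_2+ea^2CL/D_2)/2$. Combined with $K_0\ge 2D_2$, this relates $\Gamma_1/K_0$ to the bracket. Hence this piece is $\lesssim\nu_1\overline{\log(K_0)}\,\Gamma_2\ell_k$, which is covered by the $12\nu_1\overline{\log(K_0)}$ part of $\Gamma_2$.
\end{itemize}

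\textbf{Main obstacle.} I expect the hardest step to be the auxiliary estimate $\nu_2\ell_0^2/K_0\lesssim \mathsf{L}$ derived from the $K_0$ lower bound, because $\ell_0$ itself depends on $K_0$ and the cross term is quadratic in $\Gamma_2$. There I would first try the case split $\log K_0\le\mathsf{L}$ versus $\log K_0>\mathsf{L}$. In the second case I would use $K_0\ge c\,\nu_2\log(c\nu_2)\log(c\nu_2/\delta)$ to get $(\log K_0)^2\le K_0/(c'\nu_2)$. The remaining work is bookkeeping of constants so that the six pieces sum to at most $\Gamma_2^2\ell_k$.
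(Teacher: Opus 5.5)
Your route (square, split the right-hand side into six pieces, absorb them into $\Gamma_2^2\ell_k$) is the paper's route in disguise, and the individual ingredients you list are the right ones. These are $K_0\le\overline K_0\mathsf L$, $\ell_0\le\overline{\log(K_0)}\,\ell_k$, $\mathsf L\ge\log 2$, and $\nu_2(\Delta_0+\Gamma_1/K_0)\le 2\nu_1$, the last coming from the definitions of $\nu_1,\nu_2,\Gamma_1$ together with $K_0\ge 2D_2$. But this lemma is nothing except constant bookkeeping, and the allocation you propose does not close.

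First, your auxiliary estimate $\nu_2\ell_0^2/K_0\le\mathsf L/24$ does not follow from \eqref{lb:K_0}. The last entry of \eqref{lb:K_0} is exactly $c_1\log(2c_1/\delta)$ from Lemma~\ref{lem:lb_K_0} with $C=4\nu_2$. So what it delivers is $K_0\ge 4\nu_2\ell_0(1+\ell_0/\mathsf L)$, not the analogous bound with $24\nu_2$. Concretely, take $\nu_2=e^5/48$, $\delta=0.99$, and $K_0$ equal to that entry, so $K_0\approx 6.0\cdot 10^3$ and $\ell_0\approx 9.41$. Then $\nu_2\ell_0^2/K_0\approx 0.045$, while $\mathsf L/24\approx 0.029$. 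What the hypothesis does give is that the two $\nu_2\Gamma_2^2$ pieces together use at most half of the budget:
\[
\frac{2\nu_2\Gamma_2^2\,\ell_0(\ell_0+\ell_k)}{K_0}\le\frac{2\nu_2\ell_0}{K_0}\Big(1+\frac{\ell_0}{\mathsf L}\Big)\Gamma_2^2\ell_k\le\tfrac12\,\Gamma_2^2\ell_k .
\]
This is the paper's $\mathfrak p\ge\ell_k/2$; no case split on $\log K_0$ is needed.

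Second, the remaining four pieces must then fit into $\tfrac12\Gamma_2^2\ell_k$. They cannot be absorbed one at a time, because each of your domination steps spends one summand of $\Gamma_2$ as if it were all of $\Gamma_2$. Write $\Gamma_2=X+Y$ with $X:=4\nu_1(1+3\overline{\log(K_0)})$ and $Y:=2\sqrt{\nu_1(\overline K_0\Delta_0+2\Gamma_1)}$. Your justifications give the following shares of $\Gamma_2^2\ell_k$:
\begin{itemize}
\item the constant piece gets $1/2$ (via $\Gamma_2\ge Y$). This is sharp, e.g.\ at $k=1$ with $\overline K_0=K_0/\mathsf L$ and $K_0\to\infty$;
\item the $\ell_0$ piece gets $1/6$;
\item the $\ell_k$ piece gets at best $1/8$;
\item the $(\Delta_0+\Gamma_1/K_0)$ piece gets $1/3$, since its bound is $4\nu_1\overline{\log(K_0)}\Gamma_2\ell_k$.
\end{itemize}
These already exceed the whole budget.

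The missing step is to bound these four jointly. Your own estimates give $2\nu_1\Gamma_2(\ell_k+3\ell_0)\le\frac X2\Gamma_2\ell_k$ for the three pieces linear in $\Gamma_2$. For the constant piece they give $2\nu_1(\overline K_0\Delta_0+2\Gamma_1)\ell_k=\frac{Y^2}{2}\ell_k$. Hence
\[
\text{(four pieces)}\le\tfrac{\ell_k}{2}\big(X\Gamma_2+Y^2\big)\le\tfrac{\ell_k}{2}(X+Y)^2=\tfrac12\,\Gamma_2^2\ell_k ,
\]
where the cross term $2XY$ of $(X+Y)^2$ is what pays for the interaction. Added to the first display, this gives the squared inequality. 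It is precisely the paper's argument in quadratic form: $\mathfrak p\ge\ell_k/2$, $\mathfrak q\le 2\nu_1\ell_k(1+3\overline{\log(K_0)})$, $\mathfrak r\le 2\nu_1(\overline K_0\Delta_0+2\Gamma_1)\ell_k$, and $\Gamma_2\ge\mathfrak q/\mathfrak p+\sqrt{\mathfrak r/\mathfrak p}$.
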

\begin{proof}
    Consider the inequality
    \begin{align}\label{ineq:dummygamma2}
        x \log\left(\frac{2k}{\delta}\right) &\geq \sqrt{2 \br{\nu_1 + \frac{\nu_2 x}{K_0} \log\br{\frac{2K_0}{\delta}}}} \\
        &\quad \times \sqrt{\br{K_0 \Delta_0 + \Gamma_1 + x \br{\log{\br{\frac{2K_0}{\delta}}} + \log{\br{\frac{2k}{\delta}}}}} \log{\br{\frac{2k}{\delta}}}}. \notag
    \end{align}
    Squaring both sides and simplifying, we obtain the quadratic inequality for $x$, $\mathfrak{p} x^2 - \mathfrak{q} x - \mathfrak{r} \geq 0$, where
    \begin{align*}
        \mathfrak{p} &=  \log\br{\frac{2k}{\delta}} \br{1 - \frac{2 \nu_2}{K_0} \log\br{\frac{2K_0}{\delta}} \br{1 + \frac{\log\br{\frac{2K_0}{\delta}}}{\log\br{\frac{2k}{\delta}}}}}, \\
        \mathfrak{q} &= 2 \br{\nu_1 \log\br{\frac{2k}{\delta}} + \br{\nu_1 + \nu_2 \br{\Delta_0 + \frac{\Gamma_1}{K_0}}} \log\br{\frac{2K_0}{\delta}}}, \mbox{ and} \\
        \mathfrak{r} &= 2 \nu_1 \br{K_0 \Delta_0 + \Gamma_1}.
    \end{align*}
    Note that any $x$ satisfying $x \geq \frac{\mathfrak{q}}{\mathfrak{p}} + \sqrt{\frac{\mathfrak{r}}{\mathfrak{p}}}$ satisfies \eqref{ineq:dummygamma2}, we will show that $\Gamma_2 \geq  \frac{\mathfrak{q}}{\mathfrak{p}} + \sqrt{\frac{\mathfrak{r}}{\mathfrak{p}}}$.

    It follows from the lower bound on $K_0$~\eqref{lb:K_0} and from Lemma~\ref{lem:lb_K_0} that
    \begin{align*}
         K_0 \geq 4 \nu_2 \log\br{\frac{2 K_0}{\delta}} \br{1+ \frac{\log\br{\frac{2 K_0}{\delta}}}{\log\br{\frac{2}{\delta}}}},
    \end{align*}
    which results into the following lower bound for $\mathfrak{p}$: $\mathfrak{p} \geq \frac{1}{2}\log\br{\frac{2k}{\delta}}$. Also, from the definition of $\nu_1$ and $\nu_2$ and using $k\geq 1$, we can write that
    \begin{align*}
        \mathfrak{q} &\leq 2 \nu_1 \log\br{\frac{2k}{\delta}} \br{1 + 3 \frac{\log\br{\frac{2K_0}{\delta}}}{\log\br{\frac{2}{\delta}}}}.
    \end{align*}
    
    Thus, 
    \begin{align*}
        x \geq 4 \nu_1 \log\br{\frac{2k}{\delta}} \br{1 + 3 \frac{\log\br{\frac{2K_0}{\delta}}}{\log\br{\frac{2}{\delta}}}} + 2\sqrt{\nu_1 \br{\Delta_0 \frac{K_0}{\log\br{\frac{2}{\delta}}} + \Gamma_1}}
    \end{align*}
    satisfies \eqref{ineq:dummygamma2}, which proves the claim.
\end{proof}

\section{Proof of Theorem~\ref{thm:expectation_markov}}\label{sec:expt_proof}
\begin{theorem}[Detailed version of Theorem~\ref{thm:expectation_markov}]\label{thm:expectation_markov_full}
    Let the objective $f:\bR^d \to \bR$ be $\mu$-P\L~\eqref{eq:pl_cond}. Under Assumptions~\ref{assum:smoothness}, \ref{assum:martingale}, \ref{assum:Markov}, \ref{assum:lipg}, and \ref{assum:expt_noisy_grad_norm}, and for $K_0 \geq \max\flbr{a L \br{2 A + \frac{B}{\mu}}, \mu a, 2D_2}$, the expected suboptimality of SGD iterates satisfies
    \begin{align*}
        \bE\sqbr{\Delta_k} &\leq \frac{\Delta_0 (K_0 + 2 D_2) + ea^2 CL + 2 D_1}{k + K_0}, \mbox{ for every } k \in \{1,2,\ldots\},
    \end{align*}
    where $D_1$ and $D_2$ are as defined in \eqref{def:D_1} and \eqref{def:D_2}, respectively.
\end{theorem}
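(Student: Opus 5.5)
The plan is to take expectations in the recursion of Lemma~\ref{lem:iter_Delta} and close the argument by strong induction on $k$. First, re-derive Lemma~\ref{lem:iter_Delta} under Assumption~\ref{assum:expt_noisy_grad_norm} in place of Assumption~\ref{assum:noisy_grad_norm}. The almost-sure bound on $\norm{G_k}^2$ was used only in step (b) of \eqref{lem:iter_Delta;ineq:1}. After taking the conditional expectation $\bE[\cdot\mid\cF_k]$ of the smoothness inequality, the term $\frac{L}{2}\alpha_k^2\bE[\norm{G_k}^2\mid\cF_k]$ is controlled by the same ABC expression. The P\L~condition and the choice $K_0\ge aL(2A+B/\mu)$ then give, after taking full expectations,
\begin{align*}
\bE[\Delta_{k+1}] \le (1-\mu\alpha_k)\bE[\Delta_k] + \tfrac{CL}{2}\alpha_k^2 - \alpha_k \bE\angl{\nabla f(x_k), M_{k+1}+\widetilde{M}_{k+1}} + \alpha_k\bE\angl{\nabla f(x_k), d_k}.
\end{align*}
The martingale cross term vanishes. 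Since $\nabla f(x_k)$ is $\cF_k$-measurable and $\bE[M_{k+1}+\widetilde{M}_{k+1}\mid\cF_k]=0$, the tower property kills it. Integrability should follow from the growth bound on $V$ proved in Lemma~\ref{lemma:Poisson} together with $\bE[\Delta_k]<\infty$, which itself propagates inductively from the recursion. Unrolling gives the expected version of \eqref{ub:Delta_k}, namely $\bE[\Delta_k]\le \cT_1 + \bE[\cT_2]$.

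Next, bound $\cT_1$ by Lemma~\ref{lem:bdd_determ_terms}, which gives $(K_0\Delta_0 + ea^2CL/2)/(k+K_0)$. Bound $\bE[\cT_2]$ by Lemma~\ref{lem:bdd_markov_residue}(c), which requires a bound $\Delta_{\max}$ on $\max_{\ell<k}\bE[\Delta_\ell]$. I will set
\[
U:=\Delta_0(K_0+2D_2)+ea^2CL+2D_1,
\]
and prove by strong induction that $\bE[\Delta_k]\le U/(k+K_0)$.

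The base case $k=0$ holds since $U/K_0\ge\Delta_0$. For the inductive step, $\max_{\ell<k}\bE[\Delta_\ell]\le U/K_0=:\Delta_{\max}$. Combining the two bounds,
\[
\bE[\Delta_k]\le \frac{K_0\Delta_0+ea^2CL/2+D_1+D_2U/K_0}{k+K_0}.
\]
Using $K_0\ge 2D_2$ gives $D_2U/K_0\le U/2$. The numerator is then at most $K_0\Delta_0+ea^2CL/2+D_1+U/2$, and this is $\le U$ exactly when $U\ge 2K_0\Delta_0+ea^2CL+2D_1$.

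This last comparison is the main obstacle. The $\Delta_0$ coefficient in the claimed $U$ is $K_0+2D_2$, not $2K_0$, so the naive closure fails. My first fix is to avoid routing the $\Delta_0$ piece through the crude $U/K_0$ bound. Inside Lemma~\ref{lem:bdd_markov_residue}, the terms where $\Delta_\ell$ appears, namely (P), (Q) and (R), carry weights $\alpha_\ell^2\zeta_{\ell+1,k-1}$ or $\alpha_{k-1}$. Plugging the inductive hypothesis $\bE[\Delta_\ell]\le U/(\ell+K_0)$ into these sums yields an extra decay factor. The contribution then becomes $D_2\cdot(\text{const}/K_0)\cdot U/(k+K_0)$, and $K_0\ge 2D_2$ absorbs it with room to spare. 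If that bookkeeping still leaves the constant short, I would note that $K_0\Delta_0+D_2\cdot(\Delta_0\text{-part})$ is exactly the form of the stated $U$, and I would treat the $\Delta_0$-dependence separately. That is, I would run the induction on $\bE[\Delta_k]-K_0\Delta_0/(k+K_0)$, using that the $\Delta_0$ term decays deterministically through $\zeta_{0,k-1}$. I expect the final numerical constants to match the statement up to this reorganisation.
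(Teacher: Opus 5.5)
Up to the last line your argument is the paper's. You take expectations in Lemma~\ref{lem:iter_Delta}, and you do this more carefully than the paper, by conditioning before invoking Assumption~\ref{assum:expt_noisy_grad_norm}. You then kill the martingale term, bound $\cT_1$ by Lemma~\ref{lem:bdd_determ_terms} and $\bE[\cT_2]$ by Lemma~\ref{lem:bdd_markov_residue}(c) with $\Delta_{\max}=U/K_0$, and arrive at exactly the paper's inequality
\[
\bE[\Delta_k]\le \frac{K_0\Delta_0+ea^2CL/2+D_1+D_2U/K_0}{k+K_0}.
\]
The gap is in how you close it. Your estimate $D_2U/K_0\le U/2$ applies the factor $D_2/K_0\le 1/2$ to the $K_0\Delta_0$ part of $U$ as well, which turns its exact contribution $D_2\Delta_0$ into $K_0\Delta_0/2$. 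From this you conclude that the direct closure fails, and the proof ends on sketched fallbacks and an expectation that the constants will match. In fact the direct closure works. Split $U/K_0=\Delta_0+\br{2D_2\Delta_0+ea^2CL+2D_1}/K_0$ and use $K_0\ge 2D_2$ only on the second piece:
\[
\frac{D_2U}{K_0}=D_2\Delta_0+\frac{D_2}{K_0}\br{2D_2\Delta_0+ea^2CL+2D_1}\le 2D_2\Delta_0+\frac{ea^2CL}{2}+D_1 .
\]
The numerator is then at most $\Delta_0(K_0+2D_2)+ea^2CL+2D_1=U$. This is step (a) of the paper's proof, and it is exactly why the $\Delta_0$-coefficient in $U$ is $K_0+2D_2$ rather than $2K_0$.

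Your second fallback is this same regrouping made explicit. It runs the induction with hypothesis $\bE[\Delta_\ell]\le (K_0\Delta_0+W)/(\ell+K_0)$, so that $\max_{\ell<k}\bE[\Delta_\ell]\le\Delta_0+W/K_0$, and it closes with $W=2D_2\Delta_0+ea^2CL+2D_1$; had you carried it out, the proof would be complete. Your first fallback targets the wrong thing. The shortfall does not come from the crude decay $U/K_0$ but from the lossy treatment of the $K_0\Delta_0$ portion. Sharper decay in (P), (Q), (R) therefore does not repair it by itself: if you again absorb $D_2(\cdot)/K_0$ into a fraction of all of $U$, the same deficit reappears. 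It would also change the constants coming out of Lemma~\ref{lem:bdd_markov_residue}, so you would no longer land on the stated $D_1,D_2$.

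One further point, which you share with the paper. You say the almost-sure ABC bound enters Lemma~\ref{lem:iter_Delta} only at step (b). However, Lemma~\ref{lem:bdd_markov_residue}(c) also uses it, through Lemma~\ref{lem:supporting_bdd_2}, which bounds $\norm{\alpha_kG_k}$ pathwise via Assumption~\ref{assum:noisy_grad_norm}. Under Assumption~\ref{assum:expt_noisy_grad_norm} that step has to be redone in expectation, for example by Young's inequality followed by the tower property; this only changes $\mfr_3,\mfr_4$.
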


\begin{proof}
    We prove the claim by mathematical induction. Note that the base case (for $k=0$) is trivially true. Let us assume that for $\ell = 0$ to $k-1$,
    \begin{align*}
        \bE\sqbr{\Delta_\ell} &\leq \frac{\Delta_0 (K_0 + 2 D_2) + ea^2 CL + 2 D_1}{\ell + K_0}.
    \end{align*}
    Hence, we can also write that for every $\ell \in \{0, 1, \ldots, k-1\}$,
    \begin{align}
        \bE\sqbr{\Delta_\ell} &\leq \frac{\Delta_0 (K_0 + 2 D_2) + ea^2 CL + 2 D_1}{K_0}. \label{crude_bdd:expt_Delta}
    \end{align}
    Taking an expectation on both sides of \Cref{eq:iter_Delta}, we obtain that
    \begin{align}
        \bE\sqbr{\Delta_k} &\leq \Delta_0 \zeta_{0,k-1} + \frac{C L}{2} \sum_{\ell=0}^{k-1}{\alpha_\ell^2 \zeta_{\ell+1,k-1}} + \bE\sqbr{\sum_{\ell=0}^{k-1}{{\alpha_\ell \zeta_{\ell+1,k-1} \langle \nabla f(x_\ell), d_\ell \rangle}}} \notag\\
        &\quad - \bE\sqbr{\sum_{\ell=0}^{k-1}\alpha_\ell \zeta_{\ell+1,k-1}
        \left\langle \nabla f(x_\ell), M_{\ell+1} + \widetilde{M}_{\ell+1}\right\rangle} \notag\\
        &= \Delta_0 \zeta_{0,k-1} + \frac{C L}{2} \sum_{\ell=0}^{k-1}{\alpha_\ell^2 \zeta_{\ell+1,k-1}} + \bE\sqbr{\sum_{\ell=0}^{k-1}{{\alpha_\ell \zeta_{\ell+1,k-1} \langle \nabla f(x_\ell), d_\ell \rangle}}}.\label{eq:iter_Delta_expt}
    \end{align}
    Using Lemma~\ref{lem:bdd_determ_terms}, Lemma~\ref{lem:bdd_markov_residue} and \Cref{crude_bdd:expt_Delta}, we obtain
    \begin{align*}
        \bE\sqbr{\Delta_k} &\leq \frac{K_0 \Delta_0 + e a^2 C L/ 2 + D_1 + D_2 \frac{\Delta_0 (K_0 + 2 D_2) + ea^2 CL + 2 D_1}{K_0}}{k+K_0} \\
        &\stackrel{(a)}{\leq} \frac{\Delta_0 (K_0 + 2 D_2) + ea^2 CL + 2 D_1}{k+K_0},
    \end{align*}
    where $(a)$ follows from the condition that $K_0 \geq 2 D_2$. Thus, the proof is complete. 
\end{proof}

\section{Properties of the solution of the Poisson equation} \label{sec:pois_sol}

In this section, we derive properties of the solution of the Poisson equation~\eqref{eq:poisson}, i.e.,
\begin{align*}
    V(x,z) \coloneqq \bE\sqbr{\sum_{\ell=0}^{\infty}{g(x,Z_\ell) - \nabla f(x)} \mid Z_0 = z}.
\end{align*}

\subsection{Bound on the norm of \texorpdfstring{$V(x,\cdot)$}{V(x,.)} and the derived martingale difference term}
\begin{lemma}\label{lem:pois_sol_norm_bound}
    Under Assumption~\ref{assum:Markov} and \ref{assum:lipg}, for every $x \in \bR^d$, $$\norm{V(x,z)} \leq 2 \tmix \sqrt{d} \sup_{z\up}{\norm{g(x,z\up)}},$$ and $$\norm{V(x,z)}^2 \leq 4 \tmix^2 d \br{(2 A L + B) \br{f(x) - f\ust} + C},~\forall z \in \cZ.$$
\end{lemma}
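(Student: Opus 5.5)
The plan is to rerun the estimate from the proof of Lemma~\ref{lemma:Poisson} for the truncated sums $V_N(x,z)$, pass to the limit, and then convert the supremum of $\norm{g(x,\cdot)}$ into a function-value bound using Assumption~\ref{assum:lipg} and Lemma~\ref{lem:ub_grad}.

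For the first bound, fix $x$ and $z$. As in that proof, write
\begin{align*}
V_N(x,z)=\sum_{\ell=0}^{N}\int g(x,z\up)\br{p\uc{\ell}(dz\up\mid z)-\pi(dz\up)}.
\end{align*}
Each summand is the integral of the bounded vector-valued function $g(x,\cdot)$ against a signed measure. Applying Lemma~\ref{lem:bdd_dotdifLv} coordinatewise gives
\begin{align*}
\norm{\int g(x,z\up)\,\xi(dz\up)}\le \sqrt{d}\,\sup_{z\up}\norm{g(x,z\up)}\,\norm{\xi}_{TV}.
\end{align*}
Indeed, each coordinate is at most $\sup_{z\up}\norm{g(x,z\up)}\,\norm{\xi}_{TV}$, and combining the $d$ coordinates costs the factor $\sqrt d$.

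Next, sum the total variation distances using \eqref{def:tmix}. Grouping indices into blocks of length $\tmix$ gives
\begin{align*}
\sum_{\ell\ge0}2^{-\floor{\ell/\tmix}}=\tmix\sum_{j\ge0}2^{-j}=2\tmix.
\end{align*}
This bound is uniform in $N$, so it holds for $V_N$ for every $N$. Since $V_N(x,z)\to V(x,z)$ (established in Lemma~\ref{lemma:Poisson}), it passes to the limit, and we obtain $\norm{V(x,z)}\le 2\tmix\sqrt d\sup_{z\up}\norm{g(x,z\up)}$.

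For the second bound, square the first and use \eqref{eq:gbdd}:
\begin{align*}
\sup_{z\up}\norm{g(x,z\up)}^2\le A\norm{\nabla f(x)}^2+B(f(x)-f\ust)+C.
\end{align*}
Lemma~\ref{lem:ub_grad} gives $\norm{\nabla f(x)}^2\le 2L(f(x)-f\ust)$, so the right-hand side is at most $(2AL+B)(f(x)-f\ust)+C$. Multiplying by $4\tmix^2 d$ yields the claim.

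The only step needing care is the vector-valued total variation estimate, specifically whether Lemma~\ref{lem:bdd_dotdifLv} is stated in the form producing the $\sqrt d$ factor. If it is stated only for scalar functions, I would apply it to each coordinate $g_i(x,\cdot)$, using $\abs{g_i}\le\norm{g}$, and then sum the squares. Measurability and integrability of $g(x,\cdot)$ on the compact space $\cZ$ are immediate from the uniform bound \eqref{eq:gbdd}.
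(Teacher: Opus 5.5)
Your proposal is correct and follows the paper's proof essentially step for step: Lemma~\ref{lem:bdd_dotdifLv} supplies the $\sqrt{d}$ factor, grouping \eqref{def:tmix} into blocks of length $\tmix$ gives $\sum_{\ell\ge 0}2^{-\floor{\ell/\tmix}} = 2\tmix$, and squaring together with \eqref{eq:gbdd} and Lemma~\ref{lem:ub_grad} gives the second bound. The only difference is presentational: you bound the truncations $V_N(x,z)$ uniformly in $N$ and pass to the limit, whereas the paper bounds the infinite series directly, relying on the convergence established in the proof of Lemma~\ref{lemma:Poisson}.
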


\begin{proof}
    We start from the definition of $V(x,z)$~\eqref{def:poisson_sol} as follows.
    \begin{align}
        \norm{V(x,z)} &= \norm{\sum_{k=0}^{\infty}{\int{g(x,z\up) \br{p\uc{k}(z\up \mid z) - \pi(z\up)} dz\up}}} \notag\\
        &\stackrel{(a)}{\leq} \sqrt{d} \sup_{z\up}{\norm{g(x,z\up)}} \sum_{k=0}^{\infty}{\norm{p\uc{k}(\cdot \mid z) - \pi}_{TV}} \notag\\
        &\stackrel{(b)}{\leq} \sqrt{d} \sup_{z\up}{\norm{g(x,z\up)}} \tmix \sum_{k=0}^{\infty}{\tfrac{1}{2^k}} \notag\\
        &= 2 \tmix \sqrt{d} \sup_{z\up}{\norm{g(x,z\up)}}, \label{bdd:Vnorm}
    \end{align}
    where $(a)$ follows from Lemma~\ref{lem:bdd_dotdifLv} and $(b)$ follows from the definition of $\tmix$~\eqref{def:tmix}. Squaring both sides of \eqref{bdd:Vnorm}, we obtain
    \begin{align}
        \norm{V(x,z)}^2 &\leq 4 \tmix^2 d \br{\sup_{z\up}{\norm{g(x,z\up)}}}^2 \notag\\
        &\stackrel{(a)}{\leq} 4 \tmix^2 d \br{A \norm{\nabla f(x)}^2 + B \br{f(x) - f\ust} + C} \notag\\
        &\stackrel{(b)}{\leq} 4 \tmix^2 d \br{(2 A L + B) \br{f(x) - f\ust} + C}, \label{bdd:Vnormsqr}
    \end{align}
    where $(a)$ follows from Assumption~\ref{assum:lipg}, and $(b)$ follows from Lemma~\ref{lem:ub_grad}.
\end{proof}

\begin{corollary}\label{cor:mart_norm_bound}
    Recall the martingale difference term $\widetilde{M}_{k+1}$ from Section~\ref{sec:proof_outline},
    \begin{align*}
        \widetilde{M}_{k+1} &\coloneqq V(x_k,Z_{k+1}) - \int{V(x_k,z) p(\rmd z \mid Z_k)}.
    \end{align*}
    We have that, for every $k \geq 0$ on every sample path, 
    \[
    \norm{\widetilde{M}_{k+1}}^2 \leq 16 \tmix^2 d \br{(2 A L + B)\br{f(x_k) - f\ust} + C}
    \] 
\end{corollary}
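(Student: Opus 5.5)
The bound follows from the triangle inequality applied to the definition of $\widetilde{M}_{k+1}$, combined with the uniform bound on $V$ from Lemma~\ref{lem:pois_sol_norm_bound}. On any sample path, $x_k$ is a fixed point once we condition on $\cF_k$. Lemma~\ref{lem:pois_sol_norm_bound} applied at $x=x_k$ gives, for every $z\in\cZ$,
\[
\norm{V(x_k,z)} \leq 2\tmix\sqrt{d}\sqrt{(2AL+B)(f(x_k)-f\ust)+C} =: R_k .
\]
This bound holds uniformly in $z$, which is the key feature we need.

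Next, I bound the two pieces of $\widetilde{M}_{k+1}$ separately. The first piece satisfies $\norm{V(x_k,Z_{k+1})}\le R_k$ directly. For the second piece, the integral $\int V(x_k,z)\,p(\rmd z\mid Z_k)$ is a vector-valued integral against a probability measure. Jensen's inequality, equivalently convexity of the norm, gives $\norm{\int V(x_k,z)\,p(\rmd z\mid Z_k)}\le \int \norm{V(x_k,z)}\,p(\rmd z\mid Z_k)\le R_k$. Measurability and integrability are not an issue, since $V(x_k,\cdot)$ is a bounded limit of measurable functions, as shown in the proof of Lemma~\ref{lemma:Poisson}.

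Combining the two pieces with the triangle inequality gives $\norm{\widetilde{M}_{k+1}}\le 2R_k$. Squaring yields
\[
\norm{\widetilde{M}_{k+1}}^2\le 4R_k^2 = 16\tmix^2 d\br{(2AL+B)(f(x_k)-f\ust)+C},
\]
which is the claim. Alternatively, one can use $\norm{u-v}^2\le 2\norm{u}^2+2\norm{v}^2$ together with the squared bound of the lemma; this gives the factor $16$ as well.

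No step is a real obstacle. The only point needing care is that the bound is pathwise, with no expectation taken. This holds because the estimate in Lemma~\ref{lem:pois_sol_norm_bound} is deterministic and uniform in $z$, so it applies to the random state $Z_{k+1}$ and inside the integral against the transition kernel simultaneously.
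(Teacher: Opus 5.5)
Your proof is correct and is essentially the paper's argument. The paper observes that $\norm{\widetilde{M}_{k+1}}^2 \leq 4 \max_{z \in \cZ}\norm{V(x_k,z)}^2$, which is your triangle-inequality-plus-Jensen step $\norm{\widetilde{M}_{k+1}} \leq 2R_k$, and then applies Lemma~\ref{lem:pois_sol_norm_bound}; your version differs only in using a uniform supremum bound rather than a maximum over $z$ and in justifying the bound on the kernel integral explicitly.
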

\begin{proof}
    Note that $\norm{\widetilde{M}_{k+1}}^2 \leq 4 \max_{z \in \cZ}{\norm{V(x_k,z)}^2}$. Then, the desired result follows from Lemma~\ref{lem:pois_sol_norm_bound}.
\end{proof}

\subsection{Lipschitz continuity of \texorpdfstring{$V(x,\cdot)$}{V(x,.)}}
\begin{lemma}\label{lem:pois_sol_lipschitz}
    If Assumption~\ref{assum:Markov} and \ref{assum:lipg} hold true, $V(x,\cdot)$ is $2 \tmix L_g \sqrt{d}$-Lipschitz, i.e., $\norm{V(x,z) - V(x\up,z)} \leq 2 \tmix L_g \sqrt{d} \norm{x - x\up}$ for every $x,x\up \in \bR^d, z \in \cZ$.
\end{lemma}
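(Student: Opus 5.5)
We follow the same template as the proof of Lemma~\ref{lem:pois_sol_norm_bound}, now applied to the difference $g(x,\cdot)-g(x\up,\cdot)$ instead of $g(x,\cdot)$. Fix $x,x\up\in\bR^d$ and $z\in\cZ$. By the definition \eqref{def:poisson_sol} and the Cauchy-sequence argument in the proof of Lemma~\ref{lemma:Poisson}, $V(x,z)=\lim_{N\to\infty}V_N(x,z)$. Using $\nabla f(x)=\int g(x,z\up)\pi(dz\up)$ from Assumption~\ref{assum:Markov}, we write
\begin{align*}
    V_N(x,z)-V_N(x\up,z) = \sum_{k=0}^{N}\int h(z\up)\br{p\uc{k}(dz\up\mid z)-\pi(dz\up)},
\end{align*}
where $h(z\up)\coloneqq g(x,z\up)-g(x\up,z\up)$. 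Here the $\nabla f$ terms are absorbed because $\nabla f(x)-\nabla f(x\up)=\int h\,d\pi$.

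Next we bound each summand. We apply Lemma~\ref{lem:bdd_dotdifLv}, which controls the integral of a vector-valued function against a signed measure by $\sqrt{d}\,\sup\norm{h}$ times the total variation norm, exactly as in step (a) of Lemma~\ref{lem:pois_sol_norm_bound}. This gives
\[
\norm{\textstyle\int h\,d(p\uc{k}(\cdot\mid z)-\pi)}\le \sqrt{d}\,\sup_{z\up}\norm{h(z\up)}\,\norm{p\uc{k}(\cdot\mid z)-\pi}_{TV}.
\]
Assumption~\ref{assum:lipg}, i.e.\ \eqref{eq:lipg}, then yields $\sup_{z\up}\norm{h(z\up)}\le L_g\norm{x-x\up}$, uniformly in $z\up$.

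We then sum over $k$ using the mixing bound \eqref{def:tmix}. Grouping the indices into blocks of length $\tmix$ gives $\sum_{k\ge0}2^{-\floor{k/\tmix}}=2\tmix$, so $\norm{V_N(x,z)-V_N(x\up,z)}\le 2\tmix L_g\sqrt d\norm{x-x\up}$ for every $N$. Letting $N\to\infty$ and using the existence of the limits, established in the proof of Lemma~\ref{lemma:Poisson}, gives the claim.

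The only delicate point is justifying the interchange of the limit with the norm bound. This follows directly because the bound is uniform in $N$ and the limits exist. The argument is otherwise routine.
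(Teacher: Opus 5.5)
Your proposal is correct and follows essentially the same route as the paper. The paper also writes $V(x,z)-V(x\up,z)$ as $\sum_{\ell}\int h\,d\br{p\uc{\ell}(\cdot\mid z)-\pi}$ with $h=g(x,\cdot)-g(x\up,\cdot)$, then applies Lemma~\ref{lem:bdd_dotdifLv} and the Lipschitz bound \eqref{eq:lipg}, and sums the mixing bound to $2\tmix$. The only difference is that you work with the truncations $V_N$ and pass to the limit, which makes explicit the series interchange that the paper performs directly on the infinite sum.
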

\begin{proof}
    We start from the definition of $V(x,z)$~\eqref{def:poisson_sol} as follows.
        \begin{align}
            &\norm{V(x,z) - V(x\up,z)} \notag\\
            &= \norm{\bE\sqbr{\sum_{\ell=0}^{\infty}{\br{g(x,Z_\ell)-\nabla f(x)} - {g(x\up,Z_\ell) - \nabla f(x\up)} \mid Z_0 = z}}} \notag\\
            &= \norm{\sum_{\ell=0}^{\infty}{\int{\br{p\uc{\ell}(\rmd z\up \mid z) - \pi(\rmd z\up)} \br{g(x,z\up) - g(x\up,z\up)}}}} \notag\\
            &\stackrel{(a)}{\leq} \sqrt{d} \sup_{z\up}{\norm{g(x,z\up) - g(x\up,z\up)}} \br{\sum_{\ell=0}^{\infty}{\norm{p\uc{\ell}(\cdot \mid z) - \pi}_{TV}}} \notag\\
            &\stackrel{(b)}{\leq} 2 \tmix L_g \sqrt{d} \norm{x-x\up}, \label{lipV}
        \end{align}
        where $(a)$ follows from \Cref{lem:bdd_dotdifLv}, $(b)$ follows from Assumption~\ref{assum:lipg}~\eqref{eq:lipg} and from the definition of $\tmix$~\eqref{def:tmix}.
\end{proof}

\subsection{Analysis of \texorpdfstring{$\angl{\nabla f(x), V(x,\cdot)}$}{V(x,.)}}
\begin{lemma}\label{lem:supporting_bdd_1}
    Let Assumption~\ref{assum:Markov} and \ref{assum:lipg} hold true. Then, for every $x \in \bR^d, z \in \cZ$, 
    $$\angl{\nabla f(x), V(x,z)} \leq \mfr_1 \tmix L \sqrt{d} (f(x) - f\ust) + \mfr_2 \tmix \sqrt{d},$$
    where $\mfr_1, \mfr_2$ are constants that depends on $A, B$ and $C$.
\end{lemma}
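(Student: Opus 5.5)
The plan is to apply Cauchy--Schwarz, use the norm bound on $V$ from Lemma~\ref{lem:pois_sol_norm_bound} together with the gradient bound of Lemma~\ref{lem:ub_grad}, and then linearize the resulting square root with AM--GM.

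\textbf{Step 1 (Cauchy--Schwarz and the norm of $V$).} By Cauchy--Schwarz,
\begin{align*}
\angl{\nabla f(x), V(x,z)} \le \norm{\nabla f(x)}\,\norm{V(x,z)}.
\end{align*}
Lemma~\ref{lem:pois_sol_norm_bound} combined with Assumption~\ref{assum:lipg} gives
\begin{align*}
\norm{V(x,z)} \le 2\tmix\sqrt{d}\,\sqrt{(2AL+B)(f(x)-f\ust)+C}.
\end{align*}
Lemma~\ref{lem:ub_grad} gives $\norm{\nabla f(x)}\le \sqrt{2L(f(x)-f\ust)}$. Writing $\Delta = f(x)-f\ust$, the product is therefore at most
\begin{align*}
2\tmix\sqrt{d}\,\sqrt{2L\Delta}\,\sqrt{(2AL+B)\Delta + C}.
\end{align*}

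\textbf{Step 2 (linearize in $\Delta$).} Use $\sqrt{u+v}\le\sqrt u+\sqrt v$ to split the second square root. This yields
\begin{align*}
2\tmix\sqrt{d}\,\Big(\sqrt{2L(2AL+B)}\,\Delta + \sqrt{2LC\Delta}\Big).
\end{align*}
For the cross term, apply AM--GM in the form $\sqrt{2LC\Delta}\le L\Delta + C/2$.

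\textbf{Step 3 (match the stated form).} The coefficient of $\Delta$ must be written as $\mfr_1 L$. This is the step I expect to be the (mild) obstacle. The coefficient is $2\sqrt{2L(2AL+B)}+2L$, and the term $\sqrt{L B}$ is not naturally a multiple of $L$.

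To fix this, bound $\sqrt{2L(2AL+B)}\le 2L\sqrt{A} + \sqrt{2LB}$. Then handle $\sqrt{2LB}\,\Delta$ using $\mu\le L$: $B\le (B/\mu)L$, so $\sqrt{2LB}\le L\sqrt{2B/\mu}$. Strictly, this makes $\mfr_1$ depend on $\mu$ as well. Alternatively one can simply let $\mfr_1$ absorb $\sqrt{B/L}$, since the paper only uses $\mfr_1,\mfr_2$ as constants depending on $A,B,C$ (and implicitly on the problem constants).

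With this, one obtains
\begin{align*}
\mfr_1 = 2\big(2\sqrt{A}+\sqrt{2B/\mu}+1\big), \qquad \mfr_2 = C,
\end{align*}
which gives $\angl{\nabla f(x),V(x,z)}\le \mfr_1\tmix L\sqrt d\,\Delta + \mfr_2\tmix\sqrt d$, as claimed.
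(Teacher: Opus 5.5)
Your proof is correct and follows the paper's argument: Cauchy--Schwarz, the bound $\norm{V(x,z)}\le 2\tmix\sqrt{d}\sup_{z'}\norm{g(x,z')}$, the ABC bound, and $\norm{\nabla f(x)}^2\le 2L\Delta$. It differs only in the final linearization: the paper writes $\sqrt{2L\Delta((2AL+B)\Delta+C)}\le\sqrt{2L(2AL+B)}\,\Delta+\sqrt{LC^2/(2(2AL+B))}$, whereas your split-plus-AM--GM gives $\mfr_2=C$ and, as a small bonus, does not divide by $2AL+B$, which degenerates when $A=B=0$. Your worry about $\mfr_1$ is harmless, since the paper's own $\mfr_1=2\sqrt{4A+2B/L}$ likewise depends on $L$; you can therefore use $\sqrt{2B/L}$ in place of $\sqrt{2B/\mu}$ and avoid invoking the P\L{} condition.
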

\begin{proof}
    We use the Cauchy-Schwarz inequality in order to obtain the desired inequality.
    \begin{align*}
        \angl{\nabla f(x), V(x,z)} &\leq \norm{\nabla f(x)} \norm{V(x,z)} \\
        &\stackrel{(a)}{\leq} 2 \tmix \sqrt{d} \norm{\nabla f(x)} \sup_{z\up}{\norm{g(x,z\up)}} \\
        &\stackrel{(b)}{\leq} 2 \tmix \sqrt{d} \sqrt{2L (f(x) - f\ust) \br{(2A L + B) (f(x) - f\ust) + C}} \\
        &\leq 2 \tmix \sqrt{d} \br{\sqrt{2L(2A L + B)} (f(x) - f\ust) + \sqrt{\frac{L {C}^2}{2(2A L + B)}}},
    \end{align*}
    where $(a)$ follows from \Cref{bdd:Vnorm} and $(b)$ follows from \Cref{lem:ub_grad} and from Assumption~\ref{assum:lipg}.
\end{proof}

\begin{lemma}\label{lem:supporting_bdd_2}
    Consider the iterates $\{x_k\}$ that are generated by the SGD iteration~\eqref{iter:sgd}, and $\{Z_k\}$, the Markov chain associated with the Markov noise. Under Assumption~\ref{assum:Markov} and \ref{assum:lipg}, there exists constants $\mfr_3$ and $\mfr_4$ that depends on $A, B$ and $C$, such that for every $k$,
    \begin{multline*}
        \abs{\angl{\nabla f(x_{k+1}),V(x_{k+1},Z_{k+1})} - \angl{\nabla f(x_k),V(x_k,Z_{k+1})}} \\ \leq \alpha_k \tmix (L + L_g) \sqrt{d} \br{\mfr_3 L \br{\Delta_k + \Delta_{k+1}} + \mfr_4}.
    \end{multline*}
\end{lemma}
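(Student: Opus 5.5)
We split the difference by adding and subtracting $\angl{\nabla f(x_{k+1}), V(x_k,Z_{k+1})}$:
\begin{align*}
&\angl{\nabla f(x_{k+1}),V(x_{k+1},Z_{k+1})} - \angl{\nabla f(x_k),V(x_k,Z_{k+1})} \\
&= \angl{\nabla f(x_{k+1}), V(x_{k+1},Z_{k+1}) - V(x_k,Z_{k+1})} + \angl{\nabla f(x_{k+1}) - \nabla f(x_k), V(x_k,Z_{k+1})}.
\end{align*}
Each term is bounded by Cauchy--Schwarz, so the whole proof reduces to controlling $\norm{x_{k+1}-x_k}$, $\norm{\nabla f(x_{k+1})}$ and $\norm{V(x_k,\cdot)}$.

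For the first term, Lemma~\ref{lem:pois_sol_lipschitz} gives
\[
\norm{V(x_{k+1},Z_{k+1}) - V(x_k,Z_{k+1})} \leq 2\tmix L_g\sqrt{d}\,\norm{x_{k+1}-x_k}.
\]
For the second term, $L$-smoothness gives $\norm{\nabla f(x_{k+1})-\nabla f(x_k)}\leq L\norm{x_{k+1}-x_k}$. Lemma~\ref{lem:pois_sol_norm_bound} gives
\[
\norm{V(x_k,Z_{k+1})}\leq 2\tmix\sqrt{d}\sqrt{(2AL+B)\Delta_k+C}.
\]
In both terms the step is $\norm{x_{k+1}-x_k}=\alpha_k\norm{G_k}$. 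By Assumption~\ref{assum:noisy_grad_norm} and Lemma~\ref{lem:ub_grad},
\[
\norm{G_k}\leq\sqrt{(2AL+B)\Delta_k+C}.
\]
Combining these bounds:
\begin{itemize}
\item the first term is at most $2\alpha_k\tmix L_g\sqrt{d}\,\sqrt{2L\Delta_{k+1}}\,\sqrt{(2AL+B)\Delta_k+C}$;
\item the second term is at most $2\alpha_k\tmix L\sqrt{d}\,\br{(2AL+B)\Delta_k+C}$.
\end{itemize}

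It remains to linearize these bounds into the form $\mfr_3 L(\Delta_k+\Delta_{k+1})+\mfr_4$. For the mixed square-root product we use $\sqrt{uv}\leq (u+v)/2$ with suitable weighting, and $\sqrt{u+C}\leq\sqrt{u}+\sqrt{C}$, giving
\[
\sqrt{2L\Delta_{k+1}}\sqrt{(2AL+B)\Delta_k+C}\leq \sqrt{2L(2AL+B)}\,\tfrac{\Delta_k+\Delta_{k+1}}{2} + \sqrt{LC/2}\,\br{1+\Delta_{k+1}}\cdot c
\]
for some absolute $c$, or more simply $ab\leq (a^2+b^2)/2$ after pulling out $\sqrt{L}$. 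The second term is already linear in $\Delta_k$, with coefficient $(2AL+B)$; writing $2AL+B\leq L(2A+B/L)$ places it in the form $\mfr_3 L\Delta_k$. Collecting coefficients, and bounding $L_g$ and $L$ each by $L+L_g$, gives the claim with $\mfr_3,\mfr_4$ depending on $A,B,C$.

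The main obstacle is bookkeeping rather than any conceptual step. The stated constants are claimed to depend only on $A,B,C$, but the linearization above naturally introduces factors of $L$ (for instance $\sqrt{L}$ and $B/L$). I would absorb these by allowing the lemma's $\mfr_3,\mfr_4$ to also depend on $L$, as is implicit in the constants of Lemma~\ref{lem:supporting_bdd_1}. There is also a parity issue when bounding the mixed term: AM--GM must be arranged so that the constant $C$ enters only $\mfr_4$ and not a coefficient multiplying $\Delta$.
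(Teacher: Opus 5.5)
Your proposal is correct and follows the paper's proof essentially step for step. Both use the same add-and-subtract of $\angl{\nabla f(x_{k+1}),V(x_k,Z_{k+1})}$, Cauchy--Schwarz, Lemma~\ref{lem:pois_sol_lipschitz}, $L$-smoothness, Lemma~\ref{lem:pois_sol_norm_bound}, and $\norm{x_{k+1}-x_k}=\alpha_k\norm{G_k}$ with the ABC bound; the only difference is the final linearization, which the paper does via $\max\{\Delta_k,\Delta_{k+1}\}\le\Delta_k+\Delta_{k+1}$ and you do via AM--GM. Your remark that $\mfr_3$ must in general also depend on $L$ (through $B/L$) is accurate and applies equally to the paper's own ``$=$'' step. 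Your worry about keeping $C$ out of $\mfr_3$ is unnecessary, since the statement lets $\mfr_3$ depend on $C$, but your simpler choice $ab\le(a^2+b^2)/2$ achieves it anyway.
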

\begin{proof}
    First, let us derive a point-wise bound.
    \begin{align*}
        &\abs{\angl{\nabla f(x),V(x,z)} - \angl{\nabla f(x\up),V(x\up,z)}} \\
        &\stackrel{(a)}{\leq} \abs{\angl{\nabla f(x),V(x,z)-V(x\up,z)}} + \abs{\angl{\nabla f(x)-\nabla f(x\up),V(x\up,z)}} \\
        &\stackrel{(b)}{\leq} \norm{\nabla f(x)} \norm{V(x,z)-V(x\up,z)} + \norm{\nabla f(x)-\nabla f(x\up)} \norm{V(x\up,z)} \\
        &\stackrel{(c)}{\leq} \br{2 \tmix L_g \sqrt{d} \norm{\nabla f(x)} + L \norm{V(x\up,z)}} \norm{x-x\up} \\
        &\stackrel{(d)}{\leq} 2 \tmix \sqrt{d} \br{L_g \norm{\nabla f(x)} + L \sup_{z\up}\norm{g(x\up,z)}} \norm{x-x\up}.
    \end{align*}
    Inquality $(a)$ and $(b)$ follow from the triangle inequality and the Cauchy-Schwarz inequality, respectively; $(c)$ follows from $L$-smoothness of $f$ (Assumption~\ref{assum:smoothness}) and from Lemma~\ref{lem:pois_sol_lipschitz}. Now, replacing $x,x\up$ and $z$ by $x_{k+1}, x_k$ and $Z_{k+1}$, respectively, in the above, we obtain
    \begin{align*}
        &\abs{\angl{\nabla f(x_{k+1}),V(x_{k+1},Z_{k+1})} - \angl{\nabla f(x_k),V(x_k,Z_{k+1})}} \\
        &\leq 2 \tmix \sqrt{d} \br{L_g \norm{\nabla f(x_{k+1})} + L \sup_{z\up}{\norm{g(x_k,z\up)}}} \norm{\alpha_k G_k} \\
        &\stackrel{(a)}{\leq} 2 \alpha_k \tmix \sqrt{d} \br{L_g \sqrt{2L \Delta_{k+1}} + L \sqrt{(2A L + B) \Delta_k + C}} \sqrt{(2AL +B) \Delta_k + C} \\
        &= \alpha_k \tmix (L + L_g) \sqrt{d} \br{\mfr_3 L \max\{\Delta_k,\Delta_{k+1}\} + \mfr_4},\\
        &\stackrel{(b)}{\leq} \alpha_k \tmix (L + L_g) \sqrt{d} \br{\mfr_3 L \br{\Delta_k + \Delta_{k+1}} + \mfr_4}
    \end{align*}
    where $(a)$ follows from Assumption~\ref{bdd:noisy_grad_norm} and \ref{assum:lipg} and from Lemma~\ref{lem:ub_grad}. $(b)$ follows from the fact that $\Delta_\ell > 0$ for every $\ell$.
\end{proof}
\section{Proof of Theorem~\ref{thm:concentration_mart}}\label{sec:mart_noise_proof}

\begin{theorem}[Detailed version of Theorem~\ref{thm:concentration_mart}.] \label{thm:concentration_mart_full}
    Denote
    \begin{align*}
        \tnu_1 &\coloneqq 8L (ae)^2 \br{\frac{2L(A+1) + B}{2\mu a - 3} \Delta_0 + \frac{C}{\mu a - 1}} + \frac{e a^2 C L}{16}\\
        \tnu_2 &\coloneqq 8L (ae)^2 \frac{2L(A+1) + B}{2\mu a - 3}.
    \end{align*}
    Let the objective $f:\bR^d \to \bR$ be $\mu$-P\L~\eqref{eq:pl_cond}, and let 
    \begin{align*}
        K_0 \geq \max\flbr{\frac{a L}{2} \br{2 A + \frac{B}{\mu}}, \mu a, 8 \tnu_2 \log\br{\frac{16 \tnu_2}{\delta}}}.
    \end{align*}
    Suppose the noisy gradient samples are free of Markov noise, i.e., $G_k = \nabla f(x_k) + M_{k+1}$ for all $k=0,1,\ldots$, and Assumption~\ref{assum:smoothness}, \ref{assum:martingale} and \ref{assum:noisy_grad_norm} hold true. Then sub-optimalities of SGD iterates are bounded on a set of probability at least $1 - \delta$, where $\delta \in (0,1)$, as follows: for every $k \in \{1,2,\ldots\}$,
    \begin{align*}
        \Delta_k &\leq \frac{\Delta_0 K_0 + \tGm_1 + \tGm_2 \log{\br{\frac{2k}{\delta}}}}{k + K_0},
    \end{align*}
    where 
    \begin{align}
        \tGm_1 &\coloneqq \frac{ea^2 C L}{2}, \label{def:tGm1} \\
        \tGm_2 &\coloneqq 12 \tnu_1 \br{1 + \log\br{8 \tnu_1}} + 2 \sqrt{\tnu_1 \br{\overline{K}_0 \Delta_0 + e a^2 CL}}. \label{def:tGm2}
    \end{align}
    $\overline{K}_0$ be a constant such that $\overline{K}_0 \geq K_0 / log(2/\delta)$.
\end{theorem}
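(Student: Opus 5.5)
We will follow the probabilistic induction used for \Cref{thm:concentration_markov_full}, with the Poisson-equation terms removed. Since $G_k=\nabla f(x_k)+M_{k+1}$, the descent step in the proof of Lemma~\ref{lem:iter_Delta} holds with $W_{k+1}=M_{k+1}$. Using smoothness, Assumption~\ref{assum:noisy_grad_norm}, the P\L~condition, and $K_0\ge \frac{aL}{2}(2A+B/\mu)$, we get
\begin{align*}
\Delta_k \le \Delta_0\zeta_{0,k-1}+\frac{CL}{2}\sum_{\ell=0}^{k-1}\alpha_\ell^2\zeta_{\ell+1,k-1}-\sum_{\ell=0}^{k-1}\alpha_\ell\zeta_{\ell+1,k-1}\langle\nabla f(x_\ell),M_{\ell+1}\rangle .
\end{align*}
By Lemma~\ref{lem:bdd_determ_terms} (using $K_0\ge\mu a$), the deterministic part is at most $(K_0\Delta_0+\tGm_1)/(k+K_0)$, with $\tGm_1=ea^2CL/2$. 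There is no $d_\ell$ term, so no analogue of Lemma~\ref{lem:bdd_markov_residue} is needed. This is why $\tGm_1$ has no $D_1,D_2$ contribution and the good event needs no $\log(2K_0/\delta)$ slack.

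Define the good events $\widetilde E_k=\{\Delta_k\le \widetilde\Lambda(k,\delta)/(k+K_0)\}$, where $\widetilde\Lambda(k,\delta)=K_0\Delta_0+\ind{k\ne0}(\tGm_1+\tGm_2\log(2k/\delta))$. Set $\overline M_{\ell+1}=-\alpha_\ell\zeta_{\ell+1,k-1}\langle\nabla f(x_\ell),M_{\ell+1}\rangle\ind{\widetilde E_\ell}$. The decomposition $\cup_k\bar E_k=\cup_k(\cap_{\ell<k}E_\ell)\cap\bar E_k$ from Lemma~\ref{lem:keylemma} carries over verbatim. It gives
\[
\bP\big(\cup_k\bar{\widetilde E}_k\big)\le\sum_{k\ge1}\bP\Big(\sum_{\ell<k}\overline M_{\ell+1}>\tGm_2\log(2k/\delta)/(k+K_0)\Big).
\]
To bound the increments, note $\|M_{\ell+1}\|^2\le 2\|G_\ell\|^2+2\|\nabla f(x_\ell)\|^2\le 2((2L(A+1)+B)\Delta_\ell+C)$, which explains the factor $2L(A+1)+B$ in $\tnu_1,\tnu_2$. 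Together with $\|\nabla f(x_\ell)\|^2\le2L\Delta_\ell$ and $\alpha_\ell\zeta_{\ell+1,k-1}\le ae(\ell+K_0)^{\mu a-1}/(k+K_0)^{\mu a}$, the same integral comparison as in Lemma~\ref{lem:square_sum_bdd} gives a sum-of-squares bound. We then replace the inner $\widetilde\Lambda(\ell,\delta)/(\ell+K_0)$ by its crude bound $\Delta_0+(\tGm_1+\tGm_2\log(2K_0/\delta))/K_0$, as in Lemma~\ref{lem:Delta_const_bd}. This yields
\[
2\sum_{\ell<k}\overline M_{\ell+1}^2\le\big(\tnu_1+\tfrac{\tnu_2\tGm_2}{K_0}\log(2K_0/\delta)\big)\widetilde\Lambda(k,\delta)/(k+K_0)^2 ,
\]
where the extra $ea^2CL/16$ in $\tnu_1$ absorbs $\tGm_1/K_0$.

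Next apply Azuma--Hoeffding (Lemma~\ref{lem:ah_ineq}) with deviation level chosen so that each probability is at most $\frac{6}{\pi^2}\frac{\delta}{k^2}$, using $2\log(2k/\delta)\ge\log(\pi^2k^2/(6\delta^2))$. Summing over $k$ gives total failure probability at most $\delta$, provided
\[
\tGm_2\log(2k/\delta)\ge\sqrt{2\big(\tnu_1+\tfrac{\tnu_2\tGm_2}{K_0}\log\tfrac{2K_0}{\delta}\big)\widetilde\Lambda(k,\delta)\log(2k/\delta)}\quad\text{for all }k\ge1.
\]

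\textbf{Main obstacle.} The hard step is this fixed-point inequality, the analogue of Lemma~\ref{lemma:Gamma2}, because $\tGm_2$ appears on both sides. Squaring turns it into a quadratic $\mathfrak p x^2-\mathfrak q x-\mathfrak r\ge0$. The condition $K_0\ge 8\tnu_2\log(16\tnu_2/\delta)$ is meant to give $\frac{\tnu_2}{K_0}\log(2K_0/\delta)\le\frac14$, so that $\mathfrak p\ge\frac12\log(2k/\delta)$; this needs an elementary $x\ge c\log x$ argument, as in Lemma~\ref{lem:lb_K_0}. Then $x\ge\mathfrak q/\mathfrak p+\sqrt{\mathfrak r/\mathfrak p}$ suffices. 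The remaining work is to check that the stated $\tGm_2=12\tnu_1(1+\log(8\tnu_1))+2\sqrt{\tnu_1(\overline K_0\Delta_0+ea^2CL)}$ dominates this root. The $\log(8\tnu_1)$ term is expected to absorb the ratio $\log(2K_0/\delta)/\log(2k/\delta)$, and $\overline K_0\ge K_0/\log(2/\delta)$ is expected to handle the $\sqrt{\mathfrak r/\mathfrak p}$ term. This bookkeeping is where I expect the constants to need adjustment.
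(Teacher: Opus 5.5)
Your outline matches the paper's proof step for step. Both use the same good events (the paper's $F_k$), the same truncated martingale, the same bound on $\|M_{\ell+1}\|^2$ with an integral comparison, and Lemma~\ref{lem:Delta_const_bd} for the crude bound. In that bound, the $ea^2CL/16$ in $\tnu_1$ absorbs $\tnu_2\tGm_1/K_0$ (not $\tGm_1/K_0$), which uses $K_0\ge 8\tnu_2$. Both finish with Azuma--Hoeffding at level $\frac{6}{\pi^2}\frac{\delta}{k^2}$ and a union bound. The paper, like you, never writes out the fixed-point step; it only says ``using the lower bound on $K_0$, we can show that''.

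The step you flag is, however, a genuine gap, and the mechanism you propose cannot close it.

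After squaring, $\mathfrak p=\log(2k/\delta)\bigl(1-2\tnu_2\log(2K_0/\delta)/K_0\bigr)\ge\frac12\log(2k/\delta)$, as you say. But $\mathfrak q/\mathfrak p\ge 2\tnu_2(\Delta_0+\tGm_1/K_0)\log(2K_0/\delta)/\log(2k/\delta)$. At $k=1$ this ratio of logarithms equals $1+\log K_0/\log(2/\delta)$. It depends on $K_0$, which is only bounded below. At the smallest admissible $K_0$ it involves $\log(8\tnu_2)$, not $\tnu_1$. So the $K_0$-free term $\log(8\tnu_1)$ cannot absorb it, and $\log(8\tnu_1)$ can even be negative.

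Concretely, take $C=0$, so that $\tGm_1=0$ and $\tnu_1=\tnu_2\Delta_0$. Hold $K_0,\overline K_0,\delta$ fixed and let $\Delta_0\to0$. Then $\tGm_2/\Delta_0=12\tnu_2(1+\log(8\tnu_2\Delta_0))+2\sqrt{\tnu_2\overline K_0}\to-\infty$. The claimed bound at $k=1$ therefore becomes negative, and since $\Delta_1\ge 0$ it fails surely. An instance is noiseless gradient descent on $\frac{\mu}{2}\|x\|^2$, where the assumptions hold with $A=1$ and $B=C=0$. So the constant in the statement must be changed, not merely verified.

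A repair that mirrors Theorem~\ref{thm:concentration_markov_full} is $\tGm_2=4\tnu_1(1+\overline{\log(K_0)})+2\sqrt{\tnu_1(\overline K_0\Delta_0+ea^2CL)}$, with $\overline{\log(K_0)}\ge\log(2K_0/\delta)/\log(2/\delta)$. Using $\tnu_2(\Delta_0+\tGm_1/K_0)\le\tnu_1$ you get $\mathfrak q\le2\tnu_1\bigl(\log(2k/\delta)+\log(2K_0/\delta)\bigr)$, hence $\mathfrak q/\mathfrak p\le4\tnu_1(1+\overline{\log(K_0)})$. Since $\mathfrak r=2\tnu_1(K_0\Delta_0+\tGm_1)$ and $\log(2/\delta)>\log 2>\frac12$, you also get $\sqrt{\mathfrak r/\mathfrak p}\le2\sqrt{\tnu_1(\overline K_0\Delta_0+ea^2CL)}$. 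With that change, the rest of your argument goes through as written.
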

\begin{proof}
    Following the same steps as in the proof of Lemma~\ref{lem:iter_Delta} and from Lemma~\ref{lem:bdd_determ_terms} we obtain the following inequality:
    \begin{align} \label{eq:iter_Delta_mart}
        \Delta_k &\leq \frac{K_0 \Delta_0 + \tGm_1}{k + K_0} - \sum_{\ell=0}^{k-1}{{\alpha_\ell \zeta_{\ell+1,k-1} \langle \nabla f(x_\ell), M_{\ell+1}\rangle}}.
    \end{align}
    Define the event,
    \begin{align}\label{def:Goodset_mart}
        F_k \coloneqq \flbr{\Delta_k \leq \frac{K_0 \Delta_0 + \ind{k\neq 0} \br{\tGm_1 + \tGm_2 \log{\br{\frac{2k}{\delta}}}}}{k+K_0}},~k=0,1,\ldots,
    \end{align}
    where $\tGm_1$ and $\tGm_2$ are defined in \eqref{def:tGm1} and \eqref{def:tGm2}, respectively. For convenience, denote $\widehat{M}_{\ell+1} := - \alpha_\ell \zeta_{\ell+1,k-1} \langle \nabla f(x_\ell), M_{\ell+1} \rangle \ind{F_\ell}$. Following the same steps as in Lemma~\ref{lem:keylemma}, we have that
    \begin{align}
        \bP\br{\cup_{k=0}^{\infty}{\bar{F}_k}} &\leq \sum_{k=1}^{\infty}{\bP\br{\flbr{\sum_{\ell=0}^{k-1}{\widehat{M}_{\ell+1}} > \frac{\tGm_2 \log{\br{\frac{2k}{\delta}}}}{k+K_0}}}} \label{ineq:keybound_mart}
    \end{align}
    In order to use Azuma-Hoeffding inequality~(Lemma~\ref{lem:ah_ineq}), we need to bound
    \begin{align*}
        \sum_{\ell=0}^{k-1} \left(\alpha_\ell \zeta_{\ell+1,k-1} \langle \nabla f(x_\ell), M_{\ell+1} \rangle \ind{F_\ell}\right)^2.
    \end{align*}
    Using the Cauchy-Schwarz inequality, we have
    \begin{align*}
        \sum_{\ell=0}^{k-1} \widehat{M}_{\ell+1}^2 \leq \sum_{\ell=0}^{k-1} \alpha_\ell^2 \zeta_{\ell+1,k-1}^2 \norm{\nabla f(x_\ell)}^2 \norm{M_{\ell+1}}^2 \ind{F_\ell}.
    \end{align*}
    From Lemma~\ref{lem:ub_grad}, we have $\norm{\nabla f(x_\ell)}^2 \leq 2L \Delta_\ell$.~See that,
    \begin{align*}
        \norm{M_{\ell+1}}^2 &= \norm{G_\ell - \nabla f(x_\ell)}^2 \\
        &\stackrel{(a)}{\leq} 2 \br{\norm{G_\ell}^2 + \norm{\nabla f(x_\ell)}^2} \\
        &\stackrel{(b)}{\leq} 2\br{(A+1) \norm{\nabla f(x_\ell)}^2 + B \br{f(x_\ell) - f\ust} + C} \\
        &\stackrel{(c)}{\leq} 2 \br{2L(A+1) + B} \Delta_\ell + 4C.
    \end{align*}
    $(a)$ follows from the parallelogram law in a normed vector space. $(b)$ follows from Assumption~\ref{assum:noisy_grad_norm}, and $(c)$ follows from Lemma~\ref{lem:ub_grad}. For ease of notation, denote $\tkp_1 := 4L \br{2L(A+1) + B}$ and $\tkp_2 := 8 L C$. Multiplying the upper bounds on $\norm{\nabla f(x_\ell)}^2$ and $\norm{M_{\ell+1}}^2$, we have
    \begin{align*}
        \norm{\nabla f(x_\ell)}^2 \norm{M_{\ell+1}}^2 \leq \tkp_1 \Delta_\ell^2 + \tkp_2 \Delta_\ell.
    \end{align*}
    Following the same steps as the proof of Lemma~\ref{lem:square_sum_bdd}, we obtain that,
    \begin{align}\label{bdd:mart_sum_square}
        \sum_{\ell=0}^{k-1} \widehat{M}_{\ell+1}^2 \leq (ae)^2 \br{\frac{\tkp_1}{2\mu a - 3} \frac{\tLm(k,\delta)}{k+K_0} + \frac{\tkp_2}{2\mu a - 2}} \frac{\tLm(k,\delta)}{\br{k+K_0}^2}.
    \end{align}
    From Lemma~\ref{lem:Delta_const_bd}, we have that
    \begin{align*}
        \frac{\tLm(k,\delta)}{k+K_0} \leq \br{\Delta_0 + \frac{\Gamma_1}{K_0}} + \frac{\Gamma_2}{K_0} \log{\br{\frac{2K_0}{\delta}}}.
    \end{align*}
    Invoking this in \Cref{bdd:mart_sum_square}, we have
    \begin{align*}
        2\sum_{\ell=0}^{k-1}{\widehat{M}_{\ell+1}^2} \leq \frac{\br{\tnu_1 + \frac{\tnu_2 \tGm_2}{K_0} \log\br{\frac{2K_0}{\delta}}} \tLm(k,\delta)}{\br{k+K_0}^2}.
    \end{align*}
    Now, using the Azuma-Hoeffding inequality (Lemma~\ref{lem:ah_ineq}), we can write that,
    \begin{align*}
        \bP\br{\flbr{\sum_{\ell=0}^{k-1}{\widehat{M}_{\ell+1}} > \eps}} \leq \exp\br{- \frac{\eps^2 \br{k+K_0}^2}{\br{\tnu_1 + \frac{\tnu_2 \tGm_2}{K_0} \log\br{\frac{2K_0}{\delta}}} \tLm(k,\delta)}}. 
    \end{align*}
    or
    \begin{align*}
        \bP\left(\left\{\sum_{\ell=0}^{k-1}{\widehat{M}_{\ell+1}} \!>\! \frac{1}{k + K_0} \sqrt{2\br{\tnu_1 + \frac{\tnu_2 \tGm_2}{K_0} \log\br{\frac{2K_0}{\delta}}} \tLm(k,\delta) \log{\br{\frac{2k}{\delta}}}} \right\}\right) \!\leq\! \frac{6}{\pi^2} \frac{\delta}{k^2}.
    \end{align*}
    Using the lower bound on $K_0$, we can show that 
    \begin{align*}
        \bP\br{\flbr{\sum_{\ell=0}^{k-1}{\widehat{M}_{\ell+1}} > \frac{\tGm_2 \log\br{\frac{2k}{\delta}}}{k + K_0}}} &\leq \frac{6}{\pi^2} \frac{\delta}{k^2}.
    \end{align*}
    Using a union bound for $k = 1, 2, \ldots$, we have,
    \begin{align*}
        \bP\br{\flbr{\sum_{\ell=0}^{k-1}{{\alpha_\ell \zeta_{\ell+1,k-1} \langle \nabla f(x_\ell), M_{\ell+1}\rangle \ind{F_\ell}}} \geq \frac{\tGm_2 \log{\br{\frac{2k}{\delta}}}}{k + K_0} \mbox{ for every } k \in \bN}} &\leq \delta.
    \end{align*}
    The proof is then complete by invoking \eqref{ineq:keybound_mart}.
\end{proof}
\section{Proofs from Section~\ref{sec:applications} (Applications)}\label{sec:appli_proofs}

\begin{proof}[\textbf{Proof of Proposition~\ref{prop:token_lr}}]
    (a) Considering the scaling factor of $1/N$, the claim follows from \cite[p.14]{karimi2016linear}.

    (b) Noting that $\nabla \cL(\theta) = \bA^\top (\bA \theta - \bb)$, we obtain
    \begin{align*}
        \norm{\nabla \cL(\theta) - \nabla \cL(\theta\up)} &= \frac{1}{N} \norm{\bA^\top (\bA \theta - \bb) - \bA^\top (\bA \theta\up - \bb)}\\
        &\leq \frac{\sigma_{\max}^2(\bA)}{N} \norm{\theta - \theta\up}.
    \end{align*}

    (c) The proof is similar to the proof of (ii).

    (d) Using the smoothness property~(Lemma~\ref{lem:ub_grad}) of $\tilde{\cL}$ and the P\L ~inequality for $\cL$, we obtain
    \begin{align*}
        \norm{\nabla \tilde{\cL}(\theta;i)}^2 &\leq 2 \frac{\sigma_{\max}^2(\bA\uc{i})}{N_i} \tilde{\cL}(\theta;i) \\
        &= \frac{\sigma_{\max}^2(\bA\uc{i})}{N_i^2} \norm{\bA\uc{i} \theta - \bb\uc{i}}^2 \\
        &\leq 2 N \frac{\sigma_{\max}^2(\bA\uc{i})}{N_i^2} \cL(\theta).
    \end{align*}
\end{proof}

\begin{proof}[\textbf{Proof of Lemma~\ref{lem:markov_dpsgd}}]
    Note that $\{\zeta_k(i)\}$ are independent and identical Markov chains for each $i \in \{1,2,\ldots, N\}$. Let us denote the corresponding common stationary distribution by $\tilde{\pi}$. Also, denote $S = \sum_{j=1}^{N}{\ind{\zeta(j) = b-1} (\phi(w;\bA(j)) - \bb(j))^2}$, and $T = \sum_{j=1}^{N}{\ind{\zeta(j) = b-1}}$. Then,
    \begin{align*}
        \bE_\pi\sqbr{\tilde{\cL}(w;\zeta)} &= \bE_\pi\sqbr{S/T} = \sum_{\ell=0}^{N}{\bP(T = \ell) ~\bE_\pi\sqbr{(S/T) \mid T = \ell}}.
    \end{align*}
    Conditioned on $T=\ell$, exactly $\ell$ indices are selected uniformly at random. Hence,
    \begin{align*}
        \bE_\pi\sqbr{\tilde{\cL}(w;\zeta)} &= \sum_{\ell=0}^{N}{\bP(T = \ell) \frac{1}{N}\sum_{j=1}^{N}{(\phi(w;\bA(j)) - \bb(j))^2}} \\
        &= \frac{1}{N} \sum_{j=1}^{N}{(\phi(w;\bA(j)) - \bb(j))^2} \sum_{\ell=0}^{N}{\binom{N}{\ell} \tilde{\pi}(b-1)^\ell (1 - \tilde{\pi}(b-1))^{N-\ell}} \\
        &= \frac{1}{N}\sum_{j=1}^{N}{(\phi(w;\bA(j)) - \bb(j))^2} \\
        &= \cL(w).
    \end{align*}
\end{proof}

\begin{proof}[\textbf{Proof of Proposition~\ref{prop:dp_sgd}}]
    \begin{enumerate}[(a)]
        \item Follows from Theorem~$1$ of \cite{liu2022loss}.
        \item Denote $N(\zeta) \coloneqq \sum_{j=1}^{N}{\ind{\zeta(j) = b-1}}$ and recall $\tilde{\ell}_i(w) = \frac{1}{2} (\phi(w;\bA(i)) - \bb(i))^2$. Therefore,
        \begin{align*}
            \norm{\nabla \tilde{\cL}(w;\zeta) - \nabla \tilde{\cL}(w\up;\zeta)} &= \norm{\frac{1}{N(\zeta)} \sum_{i=1}^{N}{\ind{\zeta(i) = b-1} \br{\nabla \tilde{\ell}_i(w) - \nabla \tilde{\ell}_i(w\up)}}} \\
            &\leq \frac{L}{N(\zeta)} \sum_{i=1}^{N}{\ind{\zeta(i) = b-1} \norm{w - w\up}} \\
            &= L \norm{w - w\up}.
        \end{align*}
        Taking $\zeta = (1, 1, \ldots , 1)^\top \in \bR^N$, we have that $\cL(\cdot)$ is $L$-Lipschitz.
        \item Noting that $\cL(w;\zeta) = \frac{1}{N(\zeta)}\sum_{i=1}^{N}{\ind{\zeta(i) = b-1} \tilde{\ell}_i(w)}$, from Lemma~\ref{lem:ub_grad}, we can write
        \begin{align*}
            \norm{\nabla \tilde{\cL}(w;\zeta)}^2 &\leq \frac{L}{N(\zeta)} \sum_{i=1}^{N}{\ind{\zeta(i) = b-1} \tilde{\ell}_i(w)} \\
            &\leq \frac{2L N}{N(\zeta)} \frac{1}{2N} \sum_{i=1}^{N}{\tilde{\ell}_i(w)} \\
            &= \frac{2 L N}{N(\zeta)} \cL(w).
        \end{align*}
        This concludes the proof.
    \end{enumerate}
\end{proof}

\begin{proof}[\textbf{Proof of Proposition~\ref{prop:stationary_dist}}]
    We denote the 2-norm of matrix $A$ by $\|A\|$. It follows from Assumption~\ref{assum:stable} that there exist constants $C>0$ and $r\in(0,1)$ such that
    \begin{align*}
        \|A\lst^k\| \le C r^k \quad \text{for all } k\ge 0.
    \end{align*}
    Unrolling the system equation from time $k$ to $0$ gives,
    \begin{align*}
        Z_k = A\lst^k Z_0 + \sum_{j=0}^{k-1} A\lst^j w_{k-1-j}.
    \end{align*}
    Consider the infinite series
    \begin{align*}
        Z_\infty := \sum_{j=0}^{\infty} A\lst^j w_{-1-j},
    \end{align*}
    where we use negative indexing of the noise sequence to accommodate infinitely many terms. We show this converges in $L^2$. Using independence and zero mean,
    \begin{align*}
        \mathbb{E}\|A\lst^j w_0\|^2 \le \|A\lst^j\|^2 \mathbb{E}\|w_0\|^2 \le C^2 r^{2j} \mathbb{E}\|w_0\|^2.
    \end{align*}
    Hence
    \begin{align*}
        \sum_{j=0}^{\infty} \mathbb{E}\|A\lst^j w_0\|^2c\le C^2 \mathbb{E}\|w_0\|^2 \sum_{j=0}^{\infty} r^{2j} < \infty.
    \end{align*}
    Thus, the series defining $Z_\infty$ converges in $L^2$. Observe that
    \begin{align*}
        A\lst Z_\infty + w_0 = \sum_{j=1}^{\infty} A\lst^j w_{-j} + w_0 = \sum_{j=0}^{\infty} A\lst^j w_{-j}.
    \end{align*}
    The right-hand side has the same distribution as $Z_\infty$.
    Hence, the law of $Z_\infty$ is stationary. Let $\tilde Z_k$ denote the stationary chain driven by the same noise sequence. Then,
    \begin{align*}
        Z_k - \tilde Z_k = A\lst^k (Z_0 - \tilde Z_0).
    \end{align*}
    Therefore
    \begin{align*}
        \|Z_k - \tilde Z_k\| \le \|A\lst^k\| \, \|Z_0 - \tilde Z_0\| \le C r^k \|Z_0 - \tilde Z_0\|.
    \end{align*}
    Taking expectations,
    \begin{align*}
        \mathbb{E}\|Z_k - \tilde Z_k\| \le C r^k \mathbb{E}\|Z_0 - \tilde Z_0\|.
    \end{align*}
    Hence, $Z_k$ converges to the stationary distribution at rate $r^k$. Suppose $Z$ and $\hat Z$ are two stationary solutions driven by the same noise. Then
    \begin{align*}
        Z - \hat Z = A\lst(Z - \hat Z).
    \end{align*}
    Iterating,
    \begin{align*}
        Z - \hat Z = A\lst^k (Z - \hat Z).
    \end{align*}
    Letting $k \to \infty$ and using $A\lst^k \to 0$, we obtain $Z - \hat Z = 0$ almost surely. Thus, the stationary distribution is unique.
\end{proof}

\begin{proof}[\textbf{Proof of Proposition~\ref{prop:sysid}}]
    (a) It suffices to show that $$\norm{Z_{k+1}} \leq \max{\{\norm{Z_k}, B/(1 - \lambda_{\max})\}}.$$
    First, let us assume that $\norm{Z_k} \leq B/(1 - \lambda_{\max})$. Then,
    \begin{align*}
        \norm{Z_{k+1}} &= \norm{A\lst Z_k + w_k} \\
        &\leq \norm{A\lst Z_k} + \norm{w_k} \\
        &\leq B\frac{\lambda_{\max}}{1 - \lambda_{\max}} + B \\
        &= \frac{B}{1 - \lambda_{\max}}.
    \end{align*}
    Now, let $\norm{Z_k} > B/(1 - \lambda_{\max})$.
    \begin{align*}
        \norm{Z_{k+1}} &= \norm{A\lst Z_k + w_k} \\
        &\leq \norm{A\lst Z_k} + \norm{w_k} \\
        &\leq B\frac{\lambda_{\max}}{1 - \lambda_{\max}} + \lambda_{\max} \br{\norm{Z_k} - \frac{B}{1 - \lambda_{\max}}} + B\\
        &= \frac{B}{1 - \lambda_{\max}} + \lambda_{\max} \br{\norm{Z_k} - \frac{B}{1 - \lambda_{\max}}} \\
        &= B + \lambda_{\max} \norm{Z_k} \\
        &< \norm{Z_k}
    \end{align*}
    Hence, we have what we wanted to show.
    
    (b) Rewriting $\cL(A)$ in the following form:
    \begin{align*}
        \cL(A) = \frac{1}{2} \int{z^\top (A - A\ust)^\top (A - A\ust) z ~d\pi(z)}.
    \end{align*}
    Using $\tr(x^\top M x) = \tr(M x x^\top)$,
    \begin{align*}
        \cL(A) &= \frac{1}{2} \tr((A - A\ust)^\top (A - A\ust) \Sigma).
    \end{align*}
    Differentiating, we have $\nabla \cL(A) = (A - A\ust) \Sigma$. Since $\Sigma \succeq \mu_{\min} I$, $(A - A\ust) \Sigma (A - A\ust)^\top \succeq \mu_{\min} (A - A\ust) (A - A\ust)^\top$. See that
    \begin{align*}
        \norm{\nabla \cL(A)}_F^2 &= \norm{(A - A\ust) \Sigma}_F^2 \\
        &= \tr((A - A\ust) \Sigma^2 (A - A\ust)^\top) \\
        &\geq \mu_{\min} \tr((A - A\ust) \Sigma (A - A\ust)^\top) \\
        &= 2\mu_{\min} \cL(A)
    \end{align*}
    (c) From the gradient expression, we have that
    \begin{align*}
        \norm{\nabla \cL(A) - \nabla \cL(A\up)}_F = \norm{(A - A\up) \Sigma}_F \leq \mu_{\max} \norm{A - A\up}_F.
    \end{align*}
    (d) Using the bound on $\norm{Z_k}$, and from the fact that $\mu_{\min} I \prec \Sigma$, we derive the claim as follows.
    \begin{align*}
        \norm{\nabla \tilde{\cL}(A_k,Z_k)}_F^2 &= \norm{(A_k Z_k - A\lst Z_k) Z_k^\top}_F^2 \\
        &= \norm{Z_k Z_k^\top}^2 \norm{A_k - A\lst}_F^2 \\
        &\leq \frac{\norm{Z_k}^4}{\mu_{\min}} \tr\br{(A_k - A\lst)^\top \Sigma (A_k - A\lst)} \\
        &\leq \frac{2 B^4}{(1 - \lambda_{\max})^4 \mu_{\min}} \frac{1}{2} \tr\br{(A_k - A\lst)^\top \Sigma (A_k - A\lst)} \\
        &= \frac{2 B^4}{(1 - \lambda_{\max})^4 \mu_{\min}} \cL(A).
    \end{align*}
    (e) From the bound on $\norm{Z_k}$, the claim follows:
    \begin{align*}
        \norm{w_k Z_k^\top}^2 \leq \norm{w_k}^2 \norm{Z_k}^2 \leq \frac{B^4}{(1 - \lambda_{\max})^2}.
    \end{align*}
\end{proof}
\section{Stepsize-related results}\label{sec:stepsize}

\begin{lemma}\label{lem:zeta_bdd}
    Let the stepsize sequence $\{\alpha_k\}$ be of the form
    \begin{align*}
        \alpha_k = \frac{a}{k+K_0},
    \end{align*}
    and $K_0 \geq \mu a$, then
    \begin{align}
        \zeta_{m,n} &\leq \br{\frac{m + K_0}{n+K_0+1}}^{\mu a}  \label{cmn:ub1} \\
        &\leq e \br{\frac{m + K_0 - 1}{n+K_0+1}}^{\mu a}, \label{cmn:ub2}
    \end{align}
    for every $m \leq n$, where $m,n$ are natural numbers and $\zeta_{m,n}$ is as defined in \eqref{def:zeta}.
\end{lemma}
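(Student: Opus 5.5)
The plan is to bound the logarithm of $\zeta_{m,n}$ using $\log(1-u)\le -u$ and then compare the resulting harmonic-type sum with an integral. Since $K_0\ge \mu a$, every factor $1-\mu\alpha_j = 1-\frac{\mu a}{j+K_0}$ lies in $[0,1)$ for $j\ge 0$. If some factor is zero (possible only when $j=0$ and $K_0=\mu a$, i.e.\ $m=0$), then $\zeta_{m,n}=0$ and both inequalities are trivial. Otherwise all factors are positive and we may take logarithms:
\begin{align*}
\log \zeta_{m,n} = \sum_{j=m}^{n}\log\br{1-\frac{\mu a}{j+K_0}} \le -\mu a\sum_{j=m}^{n}\frac{1}{j+K_0}.
\end{align*}

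Next, because $y\mapsto 1/y$ is decreasing, we have $\frac{1}{j+K_0}\ge\int_{j+K_0}^{j+K_0+1}\frac{dy}{y}$. Summing over $j=m,\dots,n$ gives
\begin{align*}
\sum_{j=m}^{n}\frac{1}{j+K_0}\ge \log\frac{n+K_0+1}{m+K_0}.
\end{align*}
Exponentiating yields $\zeta_{m,n}\le \br{\frac{m+K_0}{n+K_0+1}}^{\mu a}$, which is \eqref{cmn:ub1}.

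For \eqref{cmn:ub2} it suffices to show
\begin{align*}
\br{\frac{m+K_0}{m+K_0-1}}^{\mu a}\le e.
\end{align*}
Write $\br{1+\frac{1}{m+K_0-1}}^{\mu a}\le \exp\br{\frac{\mu a}{m+K_0-1}}$. This is at most $e$ provided $m+K_0-1\ge \mu a$.

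I expect this last condition to be the only delicate point. The hypothesis $K_0\ge\mu a$ gives it directly for $m\ge 1$, but for $m=0$ we only have $K_0-1\ge \mu a-1$. In that boundary case I would try to tighten the comparison by treating the $j=m$ term exactly, keeping the factor $1-\frac{\mu a}{m+K_0}$ rather than bounding its logarithm. Failing that, I would note that the uses of this lemma in the paper (Lemmas~\ref{lem:bdd_determ_terms}, \ref{lem:bdd_markov_residue}) only need \eqref{cmn:ub2} in a regime where $K_0\ge\mu a+1$, which can be ensured by the lower bound \eqref{lb:K_0}. If $m+K_0-1\le 0$ the right side of \eqref{cmn:ub2} degenerates, so the intended reading is $m+K_0>1$; the ratio bound then holds whenever $m+K_0-1\ge\mu a$, and I would state that condition explicitly.
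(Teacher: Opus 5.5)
Your proposal is correct and follows the paper's proof essentially step for step. It uses $0\le 1-\mu\alpha_j\le e^{-\mu\alpha_j}$, the integral comparison $\sum_{j=m}^{n}\frac{1}{j+K_0}\ge\log\frac{n+K_0+1}{m+K_0}$, and then $\bigl(1+\frac{1}{m+K_0-1}\bigr)^{\mu a}\le e$ via $m+K_0-1\ge K_0\ge\mu a$. The paper likewise proves \eqref{cmn:ub2} only for $m\ge 1$ (it explicitly invokes $m\ge 1$ at that step), and this is the only case used later (always with $m=\ell+1$, e.g.\ in Lemma~\ref{lem:bddalphazeta}); $m=0$ is needed only for \eqref{cmn:ub1}, so your boundary concern is settled by stating that restriction.
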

\begin{proof}
    We use the convention that $\zeta_{m,n} = 1$ if $n < m$. Note the following standard inequality: $1+x \le e^x$. Using this inequality, we obtain that,
    \begin{align*}
        \zeta_{m,n} \leq \exp\br{-\mu \sum_{j=m}^{n} \alpha_j}.
    \end{align*}
    Since $\alpha_j=a/(j+K_0)$ is decreasing in $j$, we have
    \begin{align*}
        -\mu \sum_{j=m}^{n}{\alpha_j} \leq -\mu a \int_{m}^{n+1}{\frac{dy}{y+K_0}} = -\mu a \log\br{\frac{n+K_0+1}{m+K_0}}.
    \end{align*}
    Thus,
    \begin{align*}
        \zeta_{m,n} &\leq  \exp\br{- \mu a \log\br{\frac{n+K_0+1}{m +K_0}}} \notag\\
        &= \br{\frac{m + K_0}{n+K_0+1}}^{\mu a} \\
        &= \br{\frac{m + K_0 - 1}{n+K_0+1}}^{\mu a} \br{\frac{m + K_0}{m+K_0-1}}^{\mu a}.
    \end{align*}
    Noting that $\mu a \leq K_0$ and $m \geq 1$, we have
    \begin{align*}
        \zeta_{m,n} &\leq \br{\frac{m + K_0 - 1}{n+K_0+1}}^{\mu a} \br{1+ \frac{1}{K_0}}^{K_0} \\
        &\leq e \br{\frac{m + K_0 - 1}{n+K_0+1}}^{\mu a}
    \end{align*}
    This concludes the proof.
\end{proof}

\begin{lemma}\label{lem:bddalphazeta}
    Let the stepsize sequence $\{\alpha_k\}$ be of the form
    \begin{align*}
        \alpha_k = \frac{a}{k+K_0},
    \end{align*}
    and $K_0 \geq \mu a$ and $a > 1/\mu$, then, and $\zeta_{m,n}$ be defined as in~\eqref{def:zeta}. Then,
    \begin{enumerate}[(a)]
        \item $\sum_{\ell=0}^{k-1}{\alpha_\ell \zeta_{\ell+1,k-1}} \leq \frac{e - 1}{\mu}$,
        \item $\sum_{\ell=0}^{k-1}{\alpha_\ell^2 \zeta_{\ell+1,k-1}} \leq \frac{e a^2}{\mu a - 1} \frac{1}{k+K_0}$,
        \item $\alpha_{\ell+1} \zeta_{\ell+2,k-1} - \alpha_\ell \zeta_{\ell+1,k-1} \leq 2 \frac{\mu a - 1}{a} \alpha_{\ell+1}^2 \zeta_{\ell+2,k-1}$.
    \end{enumerate}
\end{lemma}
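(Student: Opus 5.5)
All three parts follow from the product bound of Lemma~\ref{lem:zeta_bdd}, which turns each sum into a comparison with an integral of a power of $(\ell+K_0)$. Throughout I use $\zeta_{\ell+1,k-1}\le \big(\tfrac{\ell+1+K_0}{k+K_0}\big)^{\mu a}$ from \eqref{cmn:ub1} (with $m=\ell+1$, $n=k-1$), together with the identity $1-\mu\alpha_\ell = \tfrac{\ell+K_0-\mu a}{\ell+K_0}$. The hypothesis $K_0\ge\mu a$ keeps every factor $1-\mu\alpha_j$ nonnegative, so all $\zeta$'s are nonnegative and the products can be compared freely.

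\textbf{Part (a).} Telescoping gives the cleanest route. Since $\zeta_{\ell,k-1}=(1-\mu\alpha_\ell)\zeta_{\ell+1,k-1}$, we have
\[
\mu\alpha_\ell\zeta_{\ell+1,k-1}=\zeta_{\ell+1,k-1}-\zeta_{\ell,k-1}.
\]
Summing over $\ell=0,\dots,k-1$ gives $\sum_\ell \alpha_\ell\zeta_{\ell+1,k-1} = \tfrac{1}{\mu}\,(1-\zeta_{0,k-1})\le 1/\mu\le (e-1)/\mu$.

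\textbf{Part (b).} Bound each term by $\alpha_\ell^2\zeta_{\ell+1,k-1}\le a^2 (\ell+K_0)^{-2}\big(\tfrac{\ell+1+K_0}{k+K_0}\big)^{\mu a}$. Since $K_0\ge \mu a$, we have $\big(\tfrac{\ell+1+K_0}{\ell+K_0}\big)^{\mu a}\le (1+1/K_0)^{K_0}\le e$. Hence
\[
\sum_\ell \alpha_\ell^2\zeta_{\ell+1,k-1}\le \frac{e a^2}{(k+K_0)^{\mu a}}\sum_{\ell=0}^{k-1}(\ell+K_0)^{\mu a-2}.
\]
When $\mu a\ge 2$ the summand is increasing, so the sum is at most $\int_0^{k}(y+K_0)^{\mu a-2}\,dy\le (k+K_0)^{\mu a-1}/(\mu a-1)$, which gives the claim. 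When $1<\mu a<2$ the summand is decreasing, so compare instead with $\int_{-1}^{k-1}$. This introduces the shift by one, which can be absorbed by using \eqref{cmn:ub2} in place of \eqref{cmn:ub1}, whose factor $e$ is already built in. I expect this monotonicity and shift bookkeeping to be the only fiddly point, and the constant $e$ leaves enough slack for it.

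\textbf{Part (c).} Factor out the common product. Since $\zeta_{\ell+1,k-1}=(1-\mu\alpha_{\ell+1})\zeta_{\ell+2,k-1}$, the left side equals
\[
\zeta_{\ell+2,k-1}\big[\alpha_{\ell+1}-\alpha_\ell(1-\mu\alpha_{\ell+1})\big]
=\zeta_{\ell+2,k-1}\big[(\alpha_{\ell+1}-\alpha_\ell)+\mu\alpha_\ell\alpha_{\ell+1}\big].
\]
With $j=\ell+K_0$, we have $\alpha_{\ell+1}-\alpha_\ell=-\tfrac{a}{j(j+1)}$ and $\mu\alpha_\ell\alpha_{\ell+1}=\tfrac{\mu a^2}{j(j+1)}$. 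So the bracket equals $\tfrac{a(\mu a-1)}{j(j+1)}=\tfrac{\mu a-1}{a}\,\alpha_{\ell+1}^2\cdot\tfrac{j+1}{j}$. Finally, $\tfrac{j+1}{j}\le 2$ because $j\ge K_0\ge 1$, which yields the factor $2$. (The sign matters here: $\mu a>1$ makes the bracket positive.)
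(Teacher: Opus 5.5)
Your parts (b) and (c) follow the paper's argument. The paper also uses $\zeta_{\ell+1,k-1}\le e\big(\tfrac{\ell+K_0}{k+K_0}\big)^{\mu a}$, which is \eqref{cmn:ub2} and exactly what you rederive from \eqref{cmn:ub1}, followed by an integral comparison. For (c) it uses the same factorization $\zeta_{\ell+1,k-1}=(1-\mu\alpha_{\ell+1})\zeta_{\ell+2,k-1}$; you additionally write out the arithmetic the paper skips.

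One small correction in (b): for $1<\mu a<2$ there is nothing to absorb. Directly,
$\sum_{\ell=0}^{k-1}(\ell+K_0)^{\mu a-2}\le\int_{-1}^{k-1}(y+K_0)^{\mu a-2}\,dy\le (k-1+K_0)^{\mu a-1}/(\mu a-1)\le (k+K_0)^{\mu a-1}/(\mu a-1)$,
and the integrand is finite because $y+K_0\ge K_0-1>0$, using $K_0\ge\mu a>1$. The paper simply compares with $\int_{-1}^{k}$, which covers both monotonicity cases at once. Swapping in \eqref{cmn:ub2} would change nothing, since you already have it.

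The genuine difference is part (a). You telescope via $\mu\alpha_\ell\zeta_{\ell+1,k-1}=\zeta_{\ell+1,k-1}-\zeta_{\ell,k-1}$. This gives the exact value $\tfrac{1}{\mu}(1-\zeta_{0,k-1})\le 1/\mu$, needing only $K_0\ge\mu a$ to ensure $\zeta_{0,k-1}\ge 0$. The paper instead inserts \eqref{cmn:ub2} and an integral bound, which yields $\tfrac{e}{\mu}\big(1-(\tfrac{K_0}{k+K_0})^{\mu a}\big)$. That expression tends to $e/\mu$ as $k\to\infty$, so the paper's final step to $(e-1)/\mu$ does not actually follow from its chain. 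Your telescoping route is shorter and sharper, and it is what genuinely establishes the constant stated in (a).
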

\begin{proof}
    \begin{enumerate}[(a)]
        \item The proof follows from simple algebra as follows:
            \begin{align*}
                \sum_{\ell=0}^{k-1}{\alpha_\ell \zeta_{\ell+1,k-1}} &\stackrel{(a)}{\leq} \frac{e a}{(k+K_0)^{\mu a}}\sum_{\ell=0}^{k-1}{\br{\ell+K_0}^{\mu a - 1}} \notag\\
                &\stackrel{(b)}{\leq} \frac{e a}{(k+K_0)^{\mu a}}\int_{0}^{k}{\br{y+K_0}^{\mu a - 1} dy} \notag\\
                &\leq \frac{e a}{(k+K_0)^{\mu a}} \frac{(k+K_0)^{\mu a} - K_0^{\mu a}}{\mu a} \notag\\
                &\stackrel{(c)}{\leq} \frac{e-1}{\mu}.
            \end{align*}
            $(a)$ follows from Lemma~\ref{lem:zeta_bdd}, $(b)$ follows since $\mu a > 1$, and $(c)$ follows from the condition that $K_0 \geq \mu a$.
        \item 
        \begin{align*}
            \sum_{\ell=0}^{k-1}{\alpha_\ell^2 \zeta_{\ell+1,k-1}} &\stackrel{(a)}{\leq} \frac{e a^2}{(k+K_0)^{\mu a}}\sum_{\ell=0}^{k-1}{\br{\ell+K_0}^{\mu a - 2}} \notag\\
            &\stackrel{(b)}{\leq} \frac{e a^2}{(k+K_0)^{\mu a}}\int_{-1}^{k}{\br{y+K_0}^{\mu a - 2} dy} \notag\\
            &\leq \frac{e a^2}{\mu a - 1} \frac{1}{k+K_0}.
        \end{align*}
        Similar to the last proof, $(a)$ follows from Lemma~\ref{lem:zeta_bdd}, $(b)$ follows since $\mu a > 1$.
        \item The statement follows from the definition:
        \begin{align*}
            \alpha_{\ell+1} \zeta_{\ell+2,k-1} - \alpha_\ell \zeta_{\ell+1,k-1} &= \br{\alpha_{\ell+1} + \mu \alpha_\ell \alpha_{\ell+1} - \alpha_\ell} \zeta_{\ell+2,k-1} \notag \\
            &\leq 2 \frac{\mu a - 1}{a} \alpha_{\ell+1}^2 \zeta_{\ell+2,k-1}.
        \end{align*}
    \end{enumerate}
\end{proof}

\section{Some Useful Results}\label{sec:useful_res}

\begin{lemma}[Azuma-Hoeffding inequality~\texorpdfstring{\cite[p. 36]{wainwright2019high}]}{martin}]\label{lem:ah_ineq}
    Let $X_1, X_2, \ldots$ be a martingale difference sequence with $|X_i| \leq c_i,~\forall i$. Then for all $\eps > 0$ and $n \in \{1,2,\ldots\}$,
    \begin{align}
        \bP\br{\sum_{i = 1}^{n}{X_i} \geq \eps } \leq e^{-\frac{\eps^2}{2\sum_{i = 1}^{n}{c_i^2}}}.
    \end{align}
\end{lemma}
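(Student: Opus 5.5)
We prove the inequality by the Chernoff exponential-moment method combined with a conditional version of Hoeffding's lemma, peeling off one increment at a time with the tower property. Let $\{\cF_i\}$ be the filtration for which $\{X_i\}$ is a martingale difference sequence, so $X_i$ is $\cF_i$-measurable and $\bE[X_i \mid \cF_{i-1}] = 0$. Write $S_n := \sum_{i=1}^n X_i$ and fix $\lambda > 0$.

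\textbf{Step 1 (Chernoff bound).} By Markov's inequality applied to $e^{\lambda S_n}$,
\begin{align*}
    \bP\br{S_n \geq \eps} \leq e^{-\lambda \eps}\, \bE\sqbr{e^{\lambda S_n}}.
\end{align*}
It therefore suffices to show
\begin{align*}
    \bE\sqbr{e^{\lambda S_n}} \leq \exp\br{\frac{\lambda^2}{2}\sum_{i=1}^n c_i^2}.
\end{align*}

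\textbf{Step 2 (conditional Hoeffding lemma).} This is the key step. We claim that for every $i$,
\begin{align*}
    \bE\sqbr{e^{\lambda X_i} \mid \cF_{i-1}} \leq e^{\lambda^2 c_i^2/2} \quad \text{a.s.}
\end{align*}
Since $X_i \in [-c_i, c_i]$, convexity of $x \mapsto e^{\lambda x}$ gives the pointwise bound
\begin{align*}
    e^{\lambda X_i} \leq \frac{c_i - X_i}{2c_i} e^{-\lambda c_i} + \frac{c_i + X_i}{2c_i} e^{\lambda c_i}.
\end{align*}
Taking conditional expectations and using $\bE[X_i \mid \cF_{i-1}] = 0$ yields
\begin{align*}
    \bE\sqbr{e^{\lambda X_i} \mid \cF_{i-1}} \leq \cosh(\lambda c_i).
\end{align*}
Finally, $\cosh(t) \leq e^{t^2/2}$, which follows by comparing Taylor coefficients: $(2k)! \geq 2^k k!$. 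The degenerate case $c_i = 0$ forces $X_i = 0$ a.s., so the claim is trivial there.

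\textbf{Step 3 (peeling).} Since $S_{n-1}$ is $\cF_{n-1}$-measurable, the tower property and Step 2 give
\begin{align*}
    \bE\sqbr{e^{\lambda S_n}} = \bE\sqbr{e^{\lambda S_{n-1}}\, \bE\sqbr{e^{\lambda X_n} \mid \cF_{n-1}}} \leq e^{\lambda^2 c_n^2/2}\, \bE\sqbr{e^{\lambda S_{n-1}}}.
\end{align*}
Induction on $n$ then gives the bound required at the end of Step 1. All expectations are finite because each $X_i$ is bounded.

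\textbf{Step 4 (optimize $\lambda$).} Combining Steps 1 and 3,
\begin{align*}
    \bP\br{S_n \geq \eps} \leq \exp\br{-\lambda \eps + \frac{\lambda^2}{2}\sum_{i=1}^n c_i^2}.
\end{align*}
Choosing $\lambda = \eps / \sum_{i=1}^n c_i^2$ makes the exponent equal to $-\eps^2 / \br{2\sum_{i=1}^n c_i^2}$, which is the claimed bound.

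The only nonroutine ingredient is the conditional Hoeffding bound in Step 2. It needs $|X_i| \leq c_i$ to hold almost surely with $c_i$ deterministic. This is exactly why the paper applies the lemma to the truncated increments $\overline{M}_{\ell+1}$, whose bound is deterministic by virtue of the indicator $\ind{E_\ell}$.
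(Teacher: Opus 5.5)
Your proof is correct, and it is the standard argument: the paper gives no proof of its own and only cites Wainwright (p.~36), where the result is derived in the same way, via the Chernoff bound, a conditional Hoeffding lemma and iterated conditioning. Your bound $\cosh(\lambda c_i)\le e^{\lambda^2 c_i^2/2}$ is the symmetric-interval case of the $e^{\lambda^2(b_i-a_i)^2/8}$ bound used there. One small point: Step~4's choice $\lambda=\eps/\sum_{i}c_i^2$ needs $\sum_i c_i^2>0$, but if that sum is zero then $S_n=0$ a.s. and the claim holds trivially.
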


\begin{lemma}\label{lem:ub_grad}
    Let $f:\mathbb{R}^d \to \mathbb{R}$ is $L$-smooth~\eqref{eq:smoothness} and let $f\ust = \inf_{x\in\mathbb{R}^d} f(x)$ be finite. Then for every $x\in\mathbb{R}^d$,
    \begin{align*}
        \|\nabla f(x)\|^2 \leq 2L \br{f(x) - f\ust}.
    \end{align*}
\end{lemma}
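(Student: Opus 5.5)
The plan is the standard argument via the quadratic upper bound from $L$-smoothness combined with the fact that $f\ust$ is a lower bound on $f$. Fix $x\in\bR^d$ and consider the gradient step $y = x - \frac{1}{L}\nabla f(x)$. Assumption~\ref{assum:smoothness} (Lipschitz gradient) implies the descent inequality
\begin{align*}
    f(y) \leq f(x) + \angl{\nabla f(x), y - x} + \frac{L}{2}\norm{y-x}^2 ,
\end{align*}
which I would justify by writing $f(y)-f(x)-\angl{\nabla f(x),y-x} = \int_0^1 \angl{\nabla f(x+t(y-x)) - \nabla f(x), y-x}\,dt$. I would then bound the integrand by Cauchy--Schwarz and the Lipschitz bound \eqref{eq:smoothness}, giving at most $Lt\norm{y-x}^2$, and integrate to get $\frac{L}{2}\norm{y-x}^2$. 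This is the same fact the paper already invokes via \cite[Lemma~3.4]{bubeck2015convex} in the proof of Lemma~\ref{lem:iter_Delta}.

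Substituting $y - x = -\frac{1}{L}\nabla f(x)$ gives
\begin{align*}
    f(y) \leq f(x) - \frac{1}{L}\norm{\nabla f(x)}^2 + \frac{1}{2L}\norm{\nabla f(x)}^2 = f(x) - \frac{1}{2L}\norm{\nabla f(x)}^2 .
\end{align*}
Since $f\ust = \inf_z f(z)$ is finite, $f\ust \leq f(y)$. Therefore $f\ust \leq f(x) - \frac{1}{2L}\norm{\nabla f(x)}^2$, and rearranging gives $\norm{\nabla f(x)}^2 \leq 2L(f(x)-f\ust)$.

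There is no real obstacle here. The only points needing care are that the step size $1/L$ minimizes the quadratic upper bound (so the constant comes out as exactly $2L$), and that finiteness of $f\ust$ is used so the rearrangement is meaningful. Note that neither convexity nor the existence of a minimizer is needed.
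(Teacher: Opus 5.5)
Your proof is correct and is the paper's argument: the gradient step $y = x - \frac{1}{L}\nabla f(x)$, the quadratic upper bound from $L$-smoothness, and $f\ust \le f(y)$. The only difference is that you derive the descent inequality from the integral representation instead of citing \cite[Lemma~1.2.3]{nesterov2013introductory}, which makes your version self-contained.
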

\begin{proof}
    The $L$-smoothness of $f$ implies that, for all $x,y \in \bR^d$, $f(y) \le f(x) + \angl{\nabla f(x), y-x} + \frac{L}{2}\|y-x\|^2$~\cite[Lemma~1.2.3]{nesterov2013introductory}. Apply this with $y = x - \tfrac{1}{L}\nabla f(x)$:
    \begin{align*}
        f\br{x - \tfrac{1}{L}\nabla f(x)} &\le f(x) + \angl{\nabla f(x), -\tfrac{1}{L}\nabla f(x)} + \tfrac{L}{2}\norm{-\tfrac{1}{L}\nabla f(x)}^2 \\
        &= f(x) - \frac{1}{2L}\|\nabla f(x)\|^2.
    \end{align*}
    Since $f\ust \le f\!\left(x - \tfrac{1}{L}\nabla f(x)\right)$ by definition of $f\ust$, subtracting yields
    $$f(x) - f\ust \ge f(x) - f\br{x - \tfrac{1}{L}\nabla f(x)} \ge \frac{1}{2L}\|\nabla f(x)\|^2.$$
    This proves the claim.
\end{proof}

\begin{lemma}\label{lem:Delta_const_bd}
Let $c_1,c_2 > 0$, $c_3 \geq 2$ and $\delta \in (0,1)$. Then, for all $y>0$,
\[
\frac{c_1 + c_2 \log\!\big(\tfrac{y}{\delta}\big)}{y + c_3}
\le
\frac{c_1 + c_2 \log\!\big(\tfrac{c_3}{\delta}\big)}{c_3}.
\]
\end{lemma}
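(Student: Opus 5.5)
The plan is to clear denominators and reduce the claim to an elementary bound on the logarithm. Both $y+c_3$ and $c_3$ are strictly positive, so multiplying through is legitimate whatever the sign of $\log(y/\delta)$. The statement is therefore equivalent to
\begin{align*}
    c_3\br{c_1 + c_2 \log\br{\tfrac{y}{\delta}}} \le (y+c_3)\br{c_1 + c_2 \log\br{\tfrac{c_3}{\delta}}}.
\end{align*}
Expanding both sides, the terms $c_1 c_3$ cancel. Writing $\log(y/\delta) - \log(c_3/\delta) = \log(y/c_3)$, the inequality reduces to
\begin{align*}
    c_2\, c_3 \log\br{\tfrac{y}{c_3}} \le y\br{c_1 + c_2 \log\br{\tfrac{c_3}{\delta}}}.
\end{align*}

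Next, bound the left-hand side using $\log t \le t/e$ for all $t>0$. This holds because $t \mapsto t/e - \log t$ attains its minimum value $0$ at $t=e$. Taking $t = y/c_3$ gives
\begin{align*}
    c_2\, c_3 \log\br{\tfrac{y}{c_3}} \le \frac{c_2\, y}{e}.
\end{align*}
It therefore suffices to show $c_2/e \le c_1 + c_2 \log(c_3/\delta)$, after dividing by $y>0$.

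This final step is where the hypotheses enter. Since $c_3 \ge 2$ and $\delta\in(0,1)$, we have $c_3/\delta > 2$, so $\log(c_3/\delta) > \log 2 > 1/e$. Together with $c_1>0$ and $c_2>0$, this gives $c_1 + c_2\log(c_3/\delta) > c_2/e$, which completes the argument.

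The only delicate point is the choice of logarithm bound. The cruder bound $\log t \le t-1$ leaves a residual term $c_2 y\,(1-\log(c_3/\delta))$ that need not be dominated when $c_1$ is small and $y$ is large. The sharper bound $\log t \le t/e$ avoids this, and it works precisely because $c_3\ge 2$ forces $\log(c_3/\delta)$ to exceed $1/e$.
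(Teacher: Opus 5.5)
Your proof is correct and follows essentially the same route as the paper: clearing denominators gives the paper's reduced inequality $Ky + c_2\log(c_3/y) \ge 0$ with $K = (c_1 + c_2\log(c_3/\delta))/c_3$, and your tangent-line bound $\log t \le t/e$ is the normalized form of the paper's exact minimization of this convex function. Both arguments end at the same condition $c_1/c_2 + \log(c_3/\delta) \ge 1/e$, which each verifies via $\log 2 > 1/e$.
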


\begin{proof}
Set
\[
K \coloneqq \frac{c_1 + c_2 \log\!\big(\tfrac{c_3}{\delta}\big)}{c_3}>0.
\]
The desired inequality is equivalent (since $y+c_3>0$) to
\[
c_1+c_2\log\!\Big(\frac{y}{\delta}\Big)\le K(y+c_3).
\]
Using $Kc_3=c_1+c_2\log\!\big(\tfrac{c_3}{\delta}\big)$, this becomes
\[
0\le Ky + c_2\log\!\Big(\frac{c_3}{y}\Big)\eqqcolon \phi(y).
\]
Now $\phi$ is convex on $(0,\infty)$ because $\phi''(y)=c_2/y^2>0$, hence it attains its global minimum at
$\phi'(y)=0 \iff y_0=c_2/K$. Therefore,
\[
\phi(y)\ge \phi(y_0)=c_2 + c_2\log\!\Big(\frac{K c_3}{c_2}\Big).
\]
Moreover,
\[
\frac{K c_3}{c_2}
= \frac{c_1}{c_2}+\log\!\frac{c_3}{\delta}
> \log(2)
> 1/e,
\]
where the second last inequality uses $c_3 / \delta > 2$. Hence $\phi(y_0) >0$, so $\phi(y)\ge0$ for all $y>0$, proving the claim.
\end{proof}

\begin{lemma}\label{lem:lb_K_0}
    Let $C \geq 1$ and $\delta \in (0,1)$. Then, $K = c_1 \log\br{\frac{2 c_1}{\delta}}$ solves for $K \geq C \log\br{\frac{2K}{\delta}} \br{1 + \frac{\log\br{\frac{2K}{\delta}}}{\log\br{\frac{2}{\delta}}}}$, where
    \begin{align*}
        c_1 = 12 C \log(12 C) + 6 C.
    \end{align*}
\end{lemma}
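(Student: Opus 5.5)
The plan is to substitute $u \coloneqq \log\br{\frac{2}{\delta}}$ and reduce the claim to an elementary inequality. Here $u > \log 2$ because $\delta\in(0,1)$. Writing $\log\br{\frac{2K}{\delta}} = \log K + u$, the target inequality becomes
\begin{align*}
K \;\geq\; C\,(\log K + u)\Big(2 + \frac{\log K}{u}\Big).
\end{align*}
With the proposed choice $K = c_1 \log\br{\frac{2c_1}{\delta}} = c_1(\log c_1 + u)$, I will set $b \coloneqq \log c_1$, so that $K = c_1(b+u)$. The first step is an upper bound on $\log K$ that is linear in $b$ and $u$. I will use $\log K = b + \log(b+u) \leq b + \log b + \log(1+u/b)$ together with $\log(1+t)\le t$. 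Since $C \geq 1$ gives $c_1 \geq 12\log 12 + 6$, we have $b \geq \log c_1 > 3$, and hence $u/b \le u$. This yields $\log K \leq 2b + u$ (more crudely, $\log K \le b + (b+u)$).

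The second step expands the right-hand side:
\begin{align*}
C(\log K+u)\Big(2+\tfrac{\log K}{u}\Big) \le C\,(2b+2u)\Big(2 + \tfrac{2b+u}{u}\Big) = C\,(2b+2u)\Big(3 + \tfrac{2b}{u}\Big).
\end{align*}
This splits into a part linear in $(b+u)$, namely $6C(b+u)$, and a part $4Cb(b+u)/u \le 4Cb(b+u)/\log 2$. It therefore suffices to show
\begin{align*}
c_1 \geq 6C + \tfrac{4}{\log 2} C b,
\end{align*}
after which both sides can be divided by $b+u$.

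The third step verifies this last inequality from $c_1 = 12C\log(12C) + 6C$. The $6C$ term is matched exactly, so what remains is $12\log(12C) \ge 5.78\, b = 5.78\log c_1$. Since $c_1 \le 18 C\log(12C)$, we have $\log c_1 \le \log(12C) + \log(1.5\log(12C))$, and $\log(1.5\,y)\le y$ for $y=\log(12C)\ge\log 12$. So $\log c_1 \le 2\log(12C)$, and the required inequality reduces to $12 \ge 11.56$, which holds.

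The main obstacle is the bookkeeping of numerical constants in the first and third steps, where the margin ($12$ versus roughly $11.6$) is thin. If it fails, I would first sharpen the bound on $\log K$ to $b + \log(b+u)$ without the crude $\log x \le x$ step. Failing that, I would split into the cases $u \le b$ and $u > b$. In the case $u>b$, the factor $\log K/u$ is at most about $2$, which makes the inequality easy. In the case $u \le b$, one has $\log K \le b + \log(2b)$, which gives substantially more slack than the crude estimate.
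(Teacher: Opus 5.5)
Your proof is correct and follows essentially the same route as the paper. Both arguments bound $\log K \le 2\log c_1 + \log(2/\delta)$ (the paper does this via $\log\bigl(\tfrac{2y}{\delta}\log\tfrac{2y}{\delta}\bigr) \le 2\log\tfrac{2y}{\delta}$), use $\log(2/\delta) > \log 2$ to eliminate $\delta$, and reduce to a $\delta$-free condition $c_1 \ge 6C + c\,C\log c_1$ (with $c = 6$ in the paper and $c = 4/\log 2$ for you), which is then verified through $\log c_1 \le 2\log(12C)$; your constants check out, so the fallback strategies in your last paragraph are unnecessary.
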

\begin{proof}
    Let $K = y \log\br{\frac{2y}{\delta}}$. We want to find $y$ such that
    \begin{align*}
        y \log\br{\frac{2y}{\delta}} \geq C \log\br{\frac{2 y}{\delta} \log\br{\frac{2 y}{\delta}}} \br{1 + \frac{\log\br{\frac{2y}{\delta}\log\br{\frac{2 y}{\delta}}}}{\log\br{\frac{2}{\delta}}}}.
    \end{align*}
    Using the facts that $\log\br{\frac{2 y}{\delta}} \leq \frac{2 y}{\delta}$, $\log\br{\frac{2 y}{\delta}}/ \log\br{\frac{2}{\delta}} \leq \log\br{2 y}/ \log\br{2}$ for every $y \geq 1$ and $2/log(2) < 3$, we have
    \begin{align*}
         C \log\br{\frac{2 y}{\delta} \log\br{\frac{2 y}{\delta}}} \br{1 + \frac{\log\br{\frac{2y}{\delta} \log\br{\frac{2 y}{\delta}}}}{\log\br{\frac{2}{\delta}}}} \leq 6 C \br{1 + \log\br{y}} \log\br{\frac{2 y}{\delta}}.
    \end{align*}
    Hence, it suffices to find $y$ such that
    \begin{align*}
        y \geq 6 C \br{1 + \log\br{y}}.
    \end{align*}
    Letting $y = 12 C \log(12 C) + 6 C$
    \begin{align*}
        6 C \br{1 + \log\br{12 C \log(12 C) + 6 C}} &\leq 6 C \log\br{12 C (12 C - 1) + 6 C} + 6C \\
        &\leq 6 C \log\br{(12 C)^2} + 6C \\
        &= 12 C \log\br{12 C} + 6C.
    \end{align*}
\end{proof}

The following lemma is a well-known result on the total variation norm of measures, and follows from \cite[Proposition~D.2.4]{douc2018markov}. 
\begin{lemma}\label{lem:bdd_dotdifLv}
    Let $\mu_1$ and $\mu_2$ be two probability measures on $\cZ$ and let $v$ be an $\bR$-valued bounded function on $\cZ$. Then, the following holds:
    \begin{align*}
        \abs{\int_{\cZ}{(\mu_1 - \mu_2)(\rmd z) v(z) }} \leq \frac{1}{2}\norm{\mu_1 - \mu_2}_{TV} \br{\sup_{z \in \cZ}{v(z)} - \inf_{z \in \cZ}{v(z)}}.
    \end{align*}
    Further, let $u$ be an $\bR^d$-valued bounded function on $\cZ$. Then, 
    \begin{align*}
        \norm{\int_{\cZ}{(\mu_1 - \mu_2)(\rmd z) u(z) }} \leq \norm{\mu_1 - \mu_2}_{TV} \sqrt{d} \sup_{z \in \cZ}{\norm{u(z)}}.
    \end{align*}
\end{lemma}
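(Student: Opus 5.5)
The plan is to work directly with the Jordan decomposition of the signed measure $\xi \coloneqq \mu_1 - \mu_2$ and then pass from scalar to vector-valued functions coordinate by coordinate. Throughout, $v$ and $u$ are taken measurable, so that all integrals are defined; the statement implicitly requires this.

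\textbf{Scalar case.} Write $\xi = \xi_+ - \xi_-$ with $\xi_\pm$ mutually singular nonnegative measures. Since $\mu_1$ and $\mu_2$ are probability measures, $\xi(\cZ) = 0$, so $\xi_+(\cZ) = \xi_-(\cZ)$. Combined with $\norm{\xi}_{TV} = \abs{\xi}(\cZ) = \xi_+(\cZ) + \xi_-(\cZ)$, this gives
\[
\xi_+(\cZ) = \xi_-(\cZ) = \tfrac{1}{2}\norm{\xi}_{TV}.
\]
Then
\[
\int v\, \rmd\xi = \int v\, \rmd\xi_+ - \int v\, \rmd\xi_- \le \sup_{z} v(z)\, \xi_+(\cZ) - \inf_{z} v(z)\, \xi_-(\cZ) = \tfrac{1}{2}\norm{\xi}_{TV}\Bigl(\sup_{z} v(z) - \inf_{z} v(z)\Bigr).
\]
Applying the same bound to $-v$, which has the same oscillation $\sup_z v(z) - \inf_z v(z)$, gives the matching lower bound. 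Together these yield the claimed bound on the absolute value. The one point needing care is the identity $\xi_+(\cZ) = \xi_-(\cZ)$, which is what produces the factor $\tfrac{1}{2}$; it holds precisely because both measures have unit mass. Beyond this I expect no real obstacle.

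\textbf{Vector case.} Let $u = (u_1, \ldots, u_d)$. For each coordinate $i$, the scalar bound applies to $u_i$, and the oscillation of $u_i$ satisfies
\[
\sup_{z} u_i(z) - \inf_{z} u_i(z) \le 2 \sup_{z} \abs{u_i(z)} \le 2 \sup_{z} \norm{u(z)}.
\]
Hence
\[
\abs{\int u_i\, \rmd\xi} \le \norm{\xi}_{TV} \sup_{z} \norm{u(z)} \quad \text{for every } i.
\]
Finally, the Euclidean norm of a vector in $\bR^d$ is at most $\sqrt{d}$ times its largest absolute coordinate. Applying this to the vector $\int u\, \rmd\xi$ gives the stated bound $\sqrt{d}\,\norm{\mu_1 - \mu_2}_{TV} \sup_{z} \norm{u(z)}$.
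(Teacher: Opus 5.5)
Your proof is correct and follows the paper's route: you get the vector bound coordinatewise from the scalar oscillation bound, using $\sup_z u_i(z)-\inf_z u_i(z)\le 2\sup_z\norm{u(z)}$ and then $\norm{\cdot}\le\sqrt{d}\,\norm{\cdot}_\infty$, exactly as the paper does (and your version states this step more cleanly than the paper's step (a), which drops the square inside the sum). The only difference is that you prove the scalar bound directly from the Jordan decomposition via $\xi_+(\cZ)=\xi_-(\cZ)=\tfrac12\norm{\xi}_{TV}$ instead of citing Douc et al., which makes the argument self-contained.
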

\begin{proof}
    The first part trivially follows from Equation~D.2.4 in \cite{douc2018markov}. The second part follows from the first part as follows.
    \begin{align*}
        \norm{\int_{\cZ}{(\mu_1 - \mu_2)(\rmd z) u(z) }} &\stackrel{(a)}{\leq} \frac{1}{2}\norm{\mu_1 - \mu_2}_{TV} \br{\sum_{i=1}^{d}{\br{\sup_{z \in \cZ}{u(z)(i)} - \inf_{z \in \cZ}{u(z)(i)}}}}^{1/2} \\
        &\stackrel{(b)}{\leq} \norm{\mu_1 - \mu_2}_{TV} \sqrt{d} \max_{i=1,2,\ldots,d}\flbr{\sup_{z \in \cZ}{\abs{u(z)(i)}}} \\
        &\stackrel{(c)}{\leq} \norm{\mu_1 - \mu_2}_{TV} \sqrt{d} \sup_{z \in \cZ}{\norm{u(z)}},
    \end{align*}
    where $(a)$ follows from the first part of this lemma, $(b)$ follows from the facts that $\norm{\cdot} \leq \sqrt{d} \norm{\cdot}_{\infty}$ and $(c)$ follows from boundedness of $u$ and from the fact that $\norm{\cdot}_{\infty} \leq \norm{\cdot}$.
\end{proof}


\bibliographystyle{siamplain}
\bibliography{refs}
\end{document}